\newif\ifisarxiv
\newtheorem*{rep@theorem}{\rep@title}
\newcommand{\newreptheorem}[2]{%
	\newenvironment{rep#1}[1]{%
		\def\rep@title{#2 \ref{##1}}%
		\begin{rep@theorem}}%
		{\end{rep@theorem}}}
\def\opt{{\textsc{OPT}_k}}
\def\sr{{\mathrm{sr}}}
\newcommand{\Er}{\mathrm{Er}}
\newif\ifDRAFT
\newcommand{\marrow}{\marginpar[\hfill$\longrightarrow$]{$\longleftarrow$}}
\newcommand{\niceremark}[3]
   {\textcolor{red}{\textsc{#1 #2:} \marrow\textsf{#3}}}
\newcommand{\ken}[2][says]{\niceremark{Ken}{#1}{#2}}
\newcommand{\michael}[2][says]{\niceremark{Michael}{#1}{#2}}
\newcommand{\michal}[2][says]{\niceremark{Michal}{#1}{#2}}
\newcommand{\feynman}[2][says]{\niceremark{Feynman}{#1}{#2}}
\newcommand{\ken}[1]{}
\newcommand{\michael}[1]{}
\newcommand{\michal}[1]{}
\newcommand{\feynman}[1]{}
\def\ee{\mathrm{e}}
\def\DPP{{\mathrm{DPP}}}
\newenvironment{proofof}[2]{\par\vspace{2mm}\noindent\textbf{Proof of {#1} {#2}}\ }{\hfill\BlackBox}
\def\Nc{\mathcal{N}}
\def\K{\mathbf K}
\def\Kbh{{\widehat{\K}}}
\newcommand{\BlackBox}{\rule{1.5ex}{1.5ex}}  
\def\a{\mathbf a}
\def\v{\mathbf v}
\def\zero{\mathbf 0}
\def\one{\mathbf 1}
\def\B{\mathbf B}
\def\A{\mathbf A}
\def\C{\mathbf C}
\def\Abh{\widehat{\mathbf A}}
\def\I{\mathbf I}
\def\A{\mathbf A}
\def\P{\mathbf P}
\def\Ot{\widetilde{O}}
\def\E{\mathbb E}
\def\R{\mathbb R}
\def\N{\mathbb N}
\def\Pr{\mathrm{Pr}}
\def\tr{\mathrm{tr}}
\def\rank{\mathrm{rank}}
\def\Var{\mathrm{Var}}
\let\origtop\top
\renewcommand\top{{\scriptscriptstyle{\origtop}}} 
\definecolor{silver}{cmyk}{0,0,0,0.3}
\definecolor{yellow}{cmyk}{0,0,0.9,0.0}
\definecolor{reddishyellow}{cmyk}{0,0.22,1.0,0.0}
\definecolor{black}{cmyk}{0,0,0.0,1.0}
\definecolor{darkYellow}{cmyk}{0.2,0.4,1.0,0}
\definecolor{orange}{cmyk}{0.0,0.7,0.9,0}
\definecolor{darkSilver}{cmyk}{0,0,0,0.1}
\definecolor{grey}{cmyk}{0,0,0,0.5}
\definecolor{darkgreen}{cmyk}{0.6,0,0.8,0}
\newenvironment{proof}{\par\noindent{\bf Proof\ }}{\hfill\BlackBox\\[2mm]}
\newtheorem{theorem}{Theorem}
\newtheorem{condition}{Condition}
\newtheorem{lemma}{Lemma}
\newtheorem{remark}{Remark}
\newtheorem{corollary}{Corollary}
\newtheorem{definition}{Definition}
\newtheorem{conjecture}{Conjecture}
\title{Improved guarantees and a multiple-descent curve for 
  Column Subset Selection 
  and the Nystr\"om method}
\author{%
          Micha{\l } Derezi\'{n}ski \\
  Department of Statistics\\
  University of California, Berkeley\\
  \texttt{mderezin@berkeley.edu}\\
  \And
{Rajiv Khanna} \\
  Department of Statistics\\
  University of California, Berkeley\\
  \texttt{rajivak@berkeley.edu}
  \And
   Michael W. Mahoney\\
  ICSI and Department of Statistics\\
  University of California, Berkeley\\
  \texttt{mmahoney@stat.berkeley.edu}
  }
\begin{document}

\ifisarxiv
\title{Improved guarantees and a multiple-descent curve for\\ 
  Column Subset Selection 
  and the Nystr\"om method}
  \author{%
          \textbf{Micha{\l } Derezi\'{n}ski} \\
  Department of Statistics\\
  University of California, Berkeley\\
  \texttt{mderezin@berkeley.edu}\\
  \and
\textbf{Rajiv Khanna} \\
  Department of Statistics\\
  University of California, Berkeley\\
  \texttt{rajivak@berkeley.edu}
  \and
   \textbf{Michael W. Mahoney}\\
  ICSI and Department of Statistics\\
  University of California, Berkeley\\
  \texttt{mmahoney@stat.berkeley.edu}
  }

\else\fi

\ifisarxiv
  \date{}
\fi
  \maketitle

\begin{abstract}
The Column Subset Selection Problem (CSSP) and the Nystr\"om method
are among the leading tools for constructing small low-rank
approximations of large datasets in machine learning and scientific
computing.  
A fundamental question in this area is: how well can a data subset of
size $k$ compete with the best rank $k$ approximation?
We develop techniques which exploit spectral properties of the data
matrix to obtain improved approximation guarantees which go beyond the
standard worst-case analysis. 
Our approach leads to significantly better bounds for datasets with
known rates of singular value decay, e.g., polynomial or exponential
decay.  
Our analysis also reveals an intriguing phenomenon: the approximation
factor as a function of $k$ may exhibit multiple peaks and valleys,
which we call a multiple-descent curve.  
A lower bound we establish shows that this behavior is not an artifact
of our analysis, but rather it is an inherent property of the CSSP and
Nystr\"om tasks.
Finally, using the example of a radial basis function (RBF)
kernel, we show that both our improved bounds and the multiple-descent
curve can be observed on real datasets simply by varying the 
RBF parameter.  
\end{abstract}

\section{Introduction}
\label{s:intro}
We consider the task of selecting a small but representative sample of
column vectors from a large matrix. Known as the Column Subset Selection
Problem (CSSP), this is a well-studied combinatorial optimization task with
many applications in machine learning  
\citep[e.g., feature selection, see][]{Guyon03FeatureSelection,BoutsidisMD08},
scientific computing \citep[e.g.,][]{Chan92RankRevealingQR,Drineas08CUR} and signal processing
\citep[e.g.,][]{Balzano10MatchedSubspace}. In a 
commonly studied variant of this task, we aim to minimize the
squared error of projecting all columns of the matrix onto the
subspace spanned by the chosen column subset.
\begin{definition}[CSSP]
  Given an $m\times n$ matrix $\A$, pick a set $S\subseteq\{1,...,n\}$ of 
  $k$ column indices, to minimize
  \begin{align*}
   \Er_\A(S) := \|\A - \P_S\A\|_F^2,
  \end{align*}
  where $\|\cdot\|_F$ is the Frobenius norm, $\P_S$ is the
  projection onto $\mathrm{span}\{\a_i:i\in S\}$ and $\a_i$ denotes the $i$th
  column of~$\A$.
\end{definition}
Another variant of the CSSP emerges in the kernel
setting under the name \emph{Nystr\"om method}
\citep{Williams01Nystrom,dm_kernel_JRNL,revisiting-nystrom}. We also
discuss this variant, showing how our analysis applies in this context.
Both the CSSP and the Nystr\"om method are ways of
constructing accurate low-rank approximations by using submatrices of
the target matrix. Therefore, it is natural to ask how
close we can get to the best possible rank~$k$ approximation error:
\begin{align*}
  \opt := \!\!\min_{\B:\,\rank(\B)=k} \!\!\|\A-\B\|_F^2\leq \min_{S:\,|S|=k}\Er_\A(S).
\end{align*}
Our goal is to find a subset $S$ of size $k$ for which the ratio
between $\Er_\A(S)$ and $\opt$ is small. Furthermore, a brute force
search requires iterating over all 
${n \choose k}$ subsets, which is prohibitively expensive, so we
would like to find our subset more efficiently.

\ifisarxiv\begin{figure}\else
\begin{wrapfigure}{r}{0.45\textwidth}
  \vspace{-5mm}
  \fi
  \centering  
  \includegraphics[width=\ifisarxiv 0.6\else 0.47\fi\textwidth]{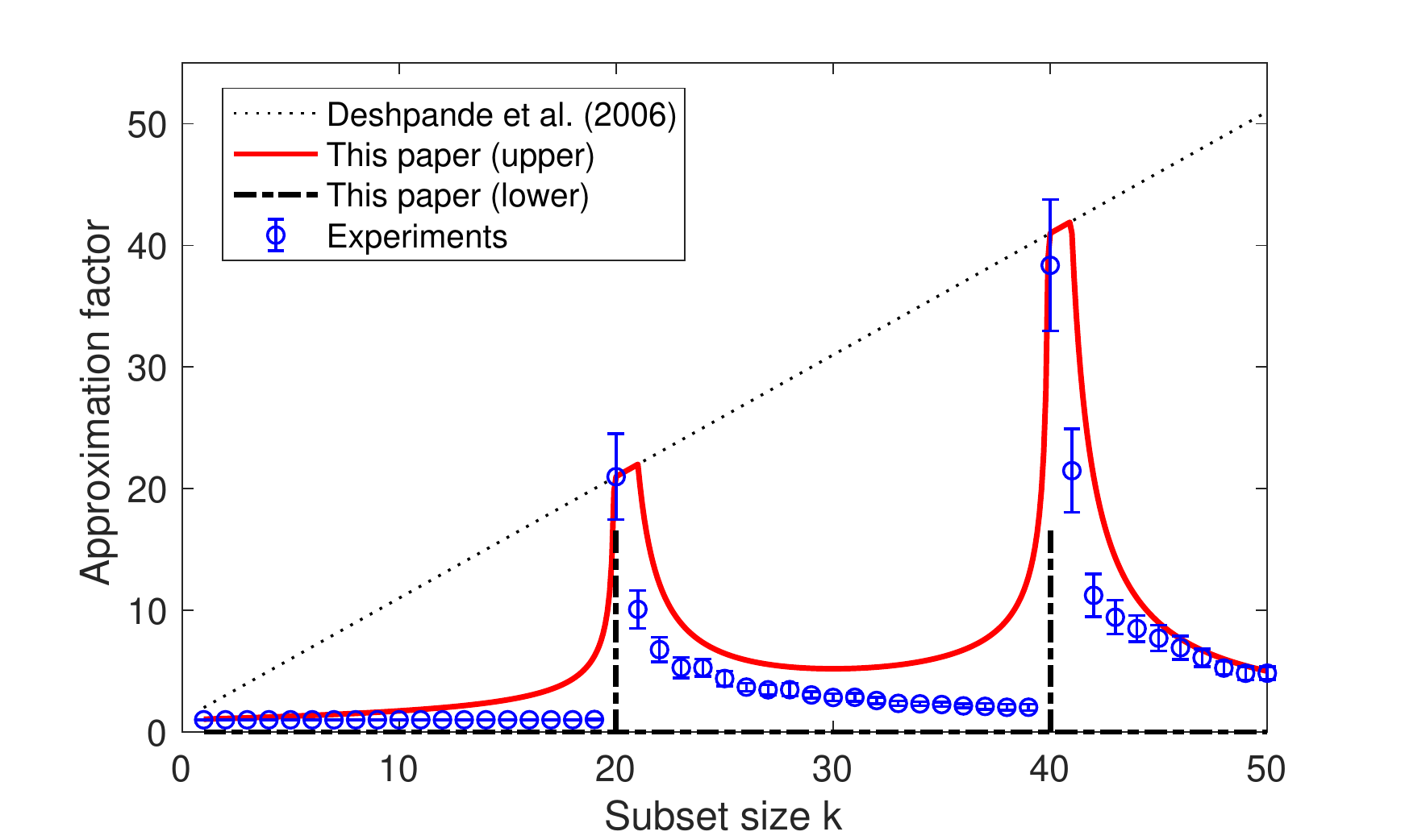}
  \caption{Empirical study of the
    expected approximation factor $\E[\Er_\A(S)]/\opt$ for
a $k$-DPP with
    different subset sizes $|S|=k$, compared to our theory. We use a
    data matrix $\A$ 
    whose
    spectrum exhibits two sharp drops, demonstrating
multiple-descent. The lower bounds are based on Theorem
    \ref{t:lower}, whereas, as our upper bound, we plot the
    minimum over all $\Phi_s(k)$ from
    Theorem~\ref{t:upper}. Note that multiple-descent vanishes under
    smooth spectral decay, resulting in improved guarantees (see
    Theorem \ref{t:decay} and Figure \ref{f:sliding}).
  }
  \label{f:intro}
  \ifisarxiv\end{figure}\else
  \end{wrapfigure}\fi

Extensive literature has been dedicated to developing algorithms for
the CSSP (an in depth discussion of the related work can be found in
Appendix \ref{s:related-work}).
In terms of worst-case analysis, 
\citet{pca-volume-sampling} gave a randomized
method 
which returns a
set $S$ of size $k$ such that: 
\begin{align}
  \frac{\E[\Er_\A(S)]}{\opt}\leq k+1.\label{eq:old-bound}
\end{align}
While the original algorithm was slow, 
efficient implementations have been provided since then
\citep[e.g., see][]{efficient-volume-sampling,dpp-intermediate}.
The method belongs to the family of cardinality constrained
Determinantal Point Processes \citep[DPPs, see][]{dpps-in-randnla}, and will
be denoted as $S\sim k\text{-}\DPP(\A^\top\A)$. 
The approximation  factor  $k+1$ is optimal in the worst-case,
since for any $0<k<n\leq m$ and $0<\delta<1$, an $m\times n$
matrix~$\A$ can be constructed for which $\frac{\Er_{\A}(S)}{\opt}\geq
(1-\delta)(k+1)$ for all subsets $S$ of size $k$. 
Yet it is known that, in practice, CSSP algorithms perform better
than worst-case, so the question we consider is: how can we go beyond
the usual worst-case analysis to accurately reflect what is possible
in the CSSP?  

\textbf{Contributions.} We provide improved guarantees for the CSSP
approximation factor, which go beyond the worst-case analysis and 
which lead to surprising conclusions.
\begin{enumerate}
\item\underline{New upper bounds}:
We develop a family of upper bounds on the CSSP approximation factor
(Theorem~\ref{t:upper}), which we call the Master Theorem
as they can be used to derive a number 
of new guarantees. In particular, we show that when the data matrix $\A$ exhibits a
known spectral decay, then \eqref{eq:old-bound} can often be
drastically improved (Theorem \ref{t:decay}).
\item \underline{New lower bound}:
Even though the worst-case upper bound in \eqref{eq:old-bound} can often
be loose, there are cases when it cannot be improved.
We give a new lower bound construction (Theorem \ref{t:lower}) showing
that there are matrices $\A$ for which multiple different subset sizes exhibit
worst-case behavior.
\item \underline{Multiple-descent curve}: 
Our upper and lower bounds
reveal that for some matrices the CSSP approximation factor can exhibit
peaks and valleys as a function of the subset size $k$ (see Figure
\ref{f:intro}).  We show that this phenomenon is an inherent property
of the CSSP  (Corollary \ref{c:multiple-descent}).
\end{enumerate}

\subsection{Main results}
Our upper bounds rely on the notion of effective dimensionality called
stable rank~\citep{ridge-leverage-scores}. Here, we use an extended version of this concept, as
defined by \citet{BLLT19_TR}.
\begin{definition}[Stable rank]
Let $\lambda_1\geq \lambda_2\geq ...$ denote the eigenvalues of the
matrix $\A^\top\A$. For  $0\leq s<\rank(\A)$, we define the stable
rank of order $s$ as $\sr_s(\A) = \lambda_{s+1}^{-1}\sum_{i>s}\lambda_i$.
\end{definition}
In the following result, we define a family of functions $\Phi_s(k)$ which
bound the approximation factor $\Er_\A(S)/\opt$ in the range of $k$ between $s$ and
$s+\sr_s(\A)$. We call this the Master Theorem because we use it to derive
a number of more specific upper bounds.
\begin{theorem}[Master Theorem]\label{t:upper}
Given $0\leq s<\rank(\A) $, let  $t_s = s+\sr_s(\A)$,
and suppose that $s+ \frac7{\epsilon^4}\ln^2\!\frac1\epsilon \leq k\leq t_s-1$,
where $0<\epsilon\leq\frac12$. If $S\sim k$-$\DPP(\A^\top\A)$, then
\begin{align*}
  \frac{\E[\Er_\A(S)]}{\opt}\leq (1+2\epsilon)^2\,\Phi_{s}(k),
  \qquad\text{where}\quad \Phi_s(k)=\big(1+\tfrac{s}{k-s}\big)\sqrt{1 + \tfrac{2(k-s)}{t_s-k}\,}.
\end{align*}
\end{theorem}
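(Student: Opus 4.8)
The plan is to reduce the statement to two ingredients --- an exact formula for the expected error of a $k$-$\DPP$ and an elementary spectral inequality --- bridged by a concentration argument for the size of a rescaled $\DPP$.

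\emph{Setup and exact formula.} Let $\mathbf L=\A^\top\A$ with eigenvalues $\lambda_1\ge\lambda_2\ge\cdots$. Using $\Er_\A(S)=\sum_{j\notin S}\det(\mathbf L_{S\cup\{j\}})/\det(\mathbf L_{S})$ and collapsing the double sum $\sum_{|S|=k}\det(\mathbf L_S)\,\Er_\A(S)=\sum_{|S|=k}\sum_{j\notin S}\det(\mathbf L_{S\cup\{j\}})=(k+1)\sum_{|T|=k+1}\det(\mathbf L_T)$, I would record the identity $\E[\Er_\A(S)]=(k+1)\,e_{k+1}/e_k$ for $S\sim k\text{-}\DPP(\mathbf L)$, where $e_j=e_j(\lambda_1,\lambda_2,\dots)$. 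Equivalently, for any $\mu>0$ the $k$-$\DPP$ is the rescaled process $\DPP(\tfrac1\mu\mathbf L)$ conditioned on $|S|=k$; the unconditioned process has $\E[\Er_\A(S)]=\mu\,\deff(\mu)$ with $\deff(\mu)=\sum_i\lambda_i/(\lambda_i+\mu)$, and $|S|$ is a sum of independent Bernoullis with means $p_i=\lambda_i/(\lambda_i+\mu)$, so $\E[|S|]=\deff(\mu)$. Since $k\le t_s-1<\rank(\A)$ and $k>s\ge0$, the decreasing bijection $\deff$ has a unique $\mu$ with $\deff(\mu)=k$; I fix this $\mu$.

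\emph{The spectral inequality $\mu\,\deff(\mu)\le\Phi_s(k)\,\opt$.} From $\deff(\mu)=k$ and $\sum_{i\le k}\lambda_i/(\lambda_i+\mu)=k-\sum_{i\le k}\mu/(\lambda_i+\mu)$ one gets the identity $\sum_{i\le k}\frac{\mu}{\lambda_i+\mu}=\sum_{i>k}\frac{\lambda_i}{\lambda_i+\mu}$. Bounding the right side by $\opt/\mu$ and the left side below by $(k-s)\,\mu/(\lambda_{s+1}+\mu)$ (each of the $k-s$ terms with $s<i\le k$ has $\lambda_i\le\lambda_{s+1}$) yields $(k-s)\,\mu^2\le\opt\,(\lambda_{s+1}+\mu)$. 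The stable-rank hypothesis enters exactly once: since $\sum_{i>s}\lambda_i=\sr_s(\A)\,\lambda_{s+1}$ and $k\le t_s-1$, we have $\opt=\sum_{i>s}\lambda_i-\sum_{s<i\le k}\lambda_i\ge\big(\sr_s(\A)-(k-s)\big)\lambda_{s+1}=(t_s-k)\,\lambda_{s+1}$, so $\lambda_{s+1}\le\opt/(t_s-k)$. Substituting and solving the resulting quadratic in $\mu$ gives $(k-s)\,\mu\le\tfrac12\opt\big(1+\sqrt{1+\tfrac{4(k-s)}{t_s-k}}\,\big)$, and the elementary estimate $\tfrac12\big(1+\sqrt{1+4x}\big)\le\sqrt{1+2x}$ (which rearranges to $x^2\ge0$) turns this into $(k-s)\,\mu\le\sqrt{1+\tfrac{2(k-s)}{t_s-k}}\;\opt$. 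Multiplying by $\tfrac{k}{k-s}=1+\tfrac{s}{k-s}$ and using $\mu\,\deff(\mu)=\mu k$ gives precisely $\mu\,\deff(\mu)\le\Phi_s(k)\,\opt$.

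\emph{The bridge, and the main obstacle.} It remains to show $\E_{k\text{-}\DPP}[\Er_\A(S)]\le(1+2\epsilon)^2\,\mu\,\deff(\mu)$, i.e.\ that conditioning $\DPP(\tfrac1\mu\mathbf L)$ on $|S|=k$ inflates the expected error by at most $(1+2\epsilon)^2$. This is where the hypothesis $k\ge s+\tfrac7{\epsilon^4}\ln^2\tfrac1\epsilon$ is used: it forces $\deff(\mu)=k$ to be large enough that the Poisson-binomial variable $|S|$ concentrates around its mean $k$ at the scale set by $\epsilon$ (Chernoff/Bernstein for sums of independent Bernoullis); combined with the fact that $j\mapsto\E[\Er_\A(S)\mid|S|=j]=(j+1)e_{j+1}/e_j$ is non-increasing --- a consequence of Newton's inequality $(j+1)e_{j+1}e_{j-1}\le j\,e_j^2$ --- one transfers the unconditional bound $\mu\,\deff(\mu)$ to the size-$k$ conditional. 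I expect extracting the clean constant $(1+2\epsilon)^2$ together with the precise threshold $\tfrac7{\epsilon^4}\ln^2\tfrac1\epsilon$ from the Poisson-binomial concentration to be the technically delicate step; chaining it with the two facts above then proves the theorem.
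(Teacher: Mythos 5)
Your overall architecture matches the paper's: an exact expected-error formula for the rescaled $\DPP(\tfrac1\mu\A^\top\A)$, a spectral bound showing $\mu\,\deff(\mu)\le\Phi_s(k)\,\opt$, and a transfer to the $k$-DPP via Poisson-binomial concentration plus the Newton-inequality monotonicity of $j\mapsto(j+1)e_{j+1}/e_j$ (the paper's Lemmas \ref{l:expected-error}, \ref{l:upper}, \ref{l:concentration}, \ref{l:monotonic}). Your derivation of the spectral inequality is genuinely different and correct: the paper replaces the spectrum by a ``worst-case'' sequence $\lambda_i'$ and bounds $\E[|S|]$ term by term, whereas you exploit the identity $\sum_{i\le k}\mu/(\lambda_i+\mu)=\sum_{i>k}\lambda_i/(\lambda_i+\mu)$ at the calibrated $\mu$ and solve a quadratic; this is arguably cleaner and isolates exactly where $\sr_s(\A)$ enters.

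There is, however, a genuine gap in your bridge, and it traces back to calibrating $\mu$ so that $\deff(\mu)=k$ exactly. With the mean of the Poisson-binomial $|S|$ equal to $k$, one has $\Pr(|S|\le k)\approx\tfrac12$ (e.g.\ for $2k$ fair Bernoullis it is $\tfrac12+\tfrac12\Pr(|S|=k)$), and the only inequality that monotonicity of $g(j):=\E[\Er_\A(S)\mid|S|=j]$ provides is $g(k)\le\E[\Er_\A(S)]/\Pr(|S|\le k)$; Newton's inequalities give no lower bound on how fast $g$ decreases, so you cannot argue $g(k)\approx g(j)$ for $j$ just below $k$, and concentration of $|S|$ around its mean does nothing to push $\Pr(|S|\le k)$ above $\tfrac12$. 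Your argument therefore yields an inflation factor of roughly $2$, not $(1+2\epsilon)^2$, and the theorem's claim --- which tends to $\Phi_s(k)$ as $\epsilon\to0$ --- does not follow. Indeed, the assertion that conditioning on $|S|=k$ inflates the error by only $1+o(1)$ when $\E[|S|]=k$ is essentially the paper's open Conjecture \ref{c:convex}. The paper sidesteps this by choosing the rescaling (your $\mu$) larger, so that $\E[|S|]\le k-\epsilon(k-s)/\gamma_s(k)$ sits strictly below $k$; a one-sided Chernoff bound then gives $\Pr(|S|>k)\le\epsilon$, the conditioning costs only $(1-\epsilon)^{-1}$, and the spectral bound degrades by another $(1-\epsilon)^{-1}$, giving $(1-\epsilon)^{-2}\le(1+2\epsilon)^2$. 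To repair your proof you would need to rerun your quadratic argument at a $\mu$ calibrated to $\deff(\mu)=k-\Theta\big(\epsilon(k-s)/\gamma_s(k)\big)$ and check that it still yields $\Phi_s(k)/(1-\epsilon)$.
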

Note that we separated out the dependence on $\epsilon$ from the function
$\Phi_s(k)$, because the term $(1+2\epsilon)^2$ is an artifact of a
concentration of measure analysis that is unlikely to be of practical
significance. In fact, we believe that the dependence on $\epsilon$
can be eliminated from the statement entirely (see
Conjecture \ref{c:convex}). 

We next examine the consequences of
the Master Theorem, starting with a sharp transition that occurs as $k$
approaches the stable rank of $\A$. 
\begin{remark}[Sharp transition]\label{r:phase-transition}
For any $k$ it is true that:
  \begin{enumerate}
\item  For all $\A$, if $k\leq \sr_0(\A)\!-\!1$, then there is a subset $S$ of size $k$ such that
$\frac{\Er_\A(S)}{\opt} = O(\sqrt  k\,)$.
\item There is $\A$ such that $\sr_0(\A)\!-\!1<k<\sr_0(\A)$ and for
  every size $k$ subset $S$, $\frac{\Er_\A(S)}{\opt}
  \geq 0.9\,k$.
  \end{enumerate}
\end{remark}
Part 1 of the remark follows from the Master Theorem by setting
$s=0$, whereas part 2 follows from the lower bound of
\citet{more-efficient-volume-sampling}. Observe how the worst-case
approximation factor jumps from $O(\sqrt k\,)$ to $\Omega(k)$, as $k$
approaches $\sr_0(\A)$. An example of this sharp
transition is shown in Figure \ref{f:intro}, where the stable
rank of $\A$ is around $20$. 

While certain matrices directly exhibit the sharp transition from
Remark \ref{r:phase-transition}, many do not. In particular, for
matrices with a known rate of spectral decay, the Master Theorem can
be used to provide improved guarantees on the CSSP approximation
factor over \emph{all} subset sizes. 

To illustrate this, we give novel bounds
for the two most commonly studied decay rates:  polynomial and exponential.
\begin{theorem}[Examples without sharp transition]\label{t:decay}
Let $\lambda_1\!\geq\!\lambda_2\!\geq\!...$ be the
eigenvalues of $\A^\top\A$. There is an absolute constant $c$ such that
for any $0\!<\!c_1\!\leq\!c_2$, with $\gamma=c_2/c_1$, if:\\[2mm]
\textnormal{1.} (\textbf{polynomial spectral decay}) $c_1i^{-p}\!\leq\!
  \lambda_i\!\leq\! c_2i^{-p}$ $\forall_i$, with $p>1$, then $S\sim
k$-$\DPP(\A^\top\A)$ satisfies
  \begin{align*}
    \frac{\E[\Er_\A(S)]}{\opt}\leq c \gamma p. 
  \end{align*}
\textnormal{2.} (\textbf{exponential spectral decay}) $c_1(1\!-\!\delta)^{i}\leq \lambda_i\leq
  c_2(1\!-\!\delta)^{i}$ $\forall_i$, $\delta\in(0,1)$, then $S\sim
k$-$\DPP(\A^\top\A)$ satisfies 
  \begin{align*}
    \frac{\E[\Er_\A(S)]}{\opt}\leq c\gamma(1+ \delta k).
  \end{align*}
\end{theorem}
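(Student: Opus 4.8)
The idea is to derive both bounds from the Master Theorem (Theorem~\ref{t:upper}) by choosing the free index $s$ as a function of $k$; the only information about the spectrum we will need is a \emph{lower} bound on $\sr_s(\A)$. Fix $\epsilon=\tfrac12$ throughout, so that $(1+2\epsilon)^2=4$ and the hypothesis $s+\tfrac{7}{\epsilon^4}\ln^2\tfrac1\epsilon\le k$ reduces to $k-s\ge s_0$ for an absolute constant $s_0\le 54$. For values of $k$ below a threshold $k^\star$ (depending only on $\gamma,p$, resp.\ on $\gamma,\delta$) we discard the Master Theorem and simply use the worst-case guarantee $\Er_\A(S)/\opt\le k+1$ of \eqref{eq:old-bound}; this suffices because our thresholds satisfy $k^\star=O(\gamma p)$ in the polynomial case and $k^\star=O\big(1/(\gamma\delta)\big)$ in the exponential case. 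The one elementary spectral estimate we use is obtained by bounding the tail $\sum_{i>s}\lambda_i$ from below (by an integral comparison in the polynomial case, by a geometric series in the exponential case) and bounding $\lambda_{s+1}$ from above by its hypothesized upper envelope, which yields
\[
\sr_s(\A)=\frac{\textstyle\sum_{i>s}\lambda_i}{\lambda_{s+1}}\ \geq\ \frac{s+1}{\gamma(p-1)}\quad(\text{polynomial}),\qquad\qquad \sr_s(\A)\ \geq\ \frac1{\gamma\delta}\quad(\text{exponential}).
\]

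\textbf{Polynomial decay.} Given $k\ge k^\star$, choose $s$ (an integer, up to harmless rounding) so that $k-s$ is a small constant multiple of $s/(\gamma(p-1))$; equivalently $s\asymp k\big/\big(1+\tfrac{1}{\gamma(p-1)}\big)$. Then $\tfrac{s}{k-s}=O(\gamma(p-1))$, while the lower bound on $\sr_s(\A)$ gives $\sr_s(\A)\ge 2(k-s)$, so that $t_s-k=\sr_s(\A)-(k-s)\ge k-s$ and in particular $k\le t_s-1$, so Theorem~\ref{t:upper} applies. Substituting into $\Phi_s(k)=\big(1+\tfrac{s}{k-s}\big)\sqrt{1+\tfrac{2(k-s)}{t_s-k}}$ gives $\Phi_s(k)\le\big(1+O(\gamma(p-1))\big)\sqrt3=O(\gamma p)$ (using $\gamma p\ge1$), hence $\E[\Er_\A(S)]/\opt\le 4\,\Phi_s(k)=O(\gamma p)$. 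The constraint $k-s\ge s_0$ forces $k\ge k^\star=\Theta\big(1+\gamma(p-1)\big)=O(\gamma p)$; for $k< k^\star$ the bound $k+1=O(\gamma p)$ closes the argument.

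\textbf{Exponential decay.} Here the stable rank $\sr_s(\A)\ge 1/(\gamma\delta)$ is essentially independent of $s$, and we distinguish two cases. If $\gamma\delta>\tfrac1{108}$, then the worst-case bound already suffices: $k+1\le216\cdot\tfrac{k}{108}=O(\gamma\delta k)\le c\,\gamma(1+\delta k)$ for $k\ge1$ (and the case $k=0$ is trivial). If instead $\gamma\delta\le\tfrac1{108}$, so that $\tfrac1{2\gamma\delta}\ge54\ge s_0$, we split on $k$. For $s_0\le k\le\tfrac1{2\gamma\delta}$ we take $s=0$: then $t_0=\sr_0(\A)\ge\tfrac1{\gamma\delta}\ge2k$, so $\Phi_0(k)=\sqrt{1+\tfrac{2k}{t_0-k}}\le\sqrt3$ and $\E[\Er_\A(S)]/\opt\le4\sqrt3=O(1)\le c\,\gamma(1+\delta k)$. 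For $k>\tfrac1{2\gamma\delta}$ we take $s=k-\big\lfloor\tfrac1{2\gamma\delta}\big\rfloor$, so that $k-s\approx\tfrac1{2\gamma\delta}\ge s_0$ and $t_s-k=\sr_s(\A)-(k-s)\ge\tfrac1{\gamma\delta}-\tfrac1{2\gamma\delta}=\tfrac1{2\gamma\delta}\ge k-s$, while $\tfrac{s}{k-s}=O(\gamma\delta k)$; thus $\Phi_s(k)\le\big(1+O(\gamma\delta k)\big)\sqrt3=O\big(\gamma(1+\delta k)\big)$ and again $\E[\Er_\A(S)]/\opt\le4\,\Phi_s(k)=O\big(\gamma(1+\delta k)\big)$. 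Finally $k<s_0$ is covered by $k+1=O(1)$.

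\textbf{Main obstacle.} There is no deep step here; the work is the bookkeeping required to meet both hypotheses of Theorem~\ref{t:upper}, namely $k-s\ge s_0$ and $k\le t_s-1$, for the chosen $s$ and \emph{all} $k$. The lower bound on $\sr_s(\A)$ is precisely what makes $t_s$ large enough and, simultaneously, keeps the factor $\sqrt{1+2(k-s)/(t_s-k)}$ bounded; the constant lower bound $k-s\ge s_0$ is what forces the small-$k$ range to be handled separately by \eqref{eq:old-bound}. The integer rounding of $s$ only affects the absolute constants. Once $s$ is fixed, bounding $\Phi_s(k)$ is a direct substitution.
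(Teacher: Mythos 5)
Your strategy for the ``bulk'' regime is essentially the paper's Cases 1(a) and 2(a): lower-bound $\sr_s(\A)$, slide $s$ so that $k-s$ is comparable to a constant fraction of $\sr_s(\A)$, bound $\Phi_s(k)$, and fall back on the worst-case $k+1$ bound for small $k$. That part is sound. But there is a genuine gap: your stable-rank lower bounds silently assume an infinite spectrum, and they are false when $k$ (hence $s$) is close to $n$. The matrix has only $n$ nonzero eigenvalues, so in the polynomial case the integral comparison actually gives $\sr_s(\A)\geq\frac{s+1}{\gamma(p-1)}\bigl(1-(\tfrac{s+1}{n+1})^{p-1}\bigr)$, and in the exponential case the geometric series gives $\sr_s(\A)\geq\frac{1-(1-\delta)^{n-s}}{\gamma\delta}$; both correction factors tend to $0$ as $s\to n$. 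Concretely, take $\lambda_i=(1-\delta)^i$ with $\delta=1/n^2$ and $k=n-1$: your Case ``$k\leq\frac1{2\gamma\delta}$'' applies and you claim $t_0=\sr_0(\A)\geq\frac1{\gamma\delta}=n^2\geq 2k$, but in fact $\sr_0(\A)=\sum_{j=0}^{n-1}(1-\delta)^j<n$, so $k\leq t_0-1$ fails --- indeed one can check that $k\leq t_s-1$ fails for \emph{every} $s$, so Theorem~\ref{t:upper} is inapplicable outright. Meanwhile the target bound is $c\gamma(1+\delta k)=O(1)$, and the worst-case fallback only gives $k+1=n$. The same failure occurs for polynomial decay, e.g.\ $\lambda_i=i^{-2}$ and $k=n-1$, where $t_s<n$ for all $s$ but the claimed bound is $O(p)=O(1)$.

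The missing ingredient is a second upper bound that covers the tail regime where $k$ is close to $n$ relative to the decay (the paper's conditions $(\tfrac{k+1}{n})^{p-1}>\tfrac12$ and $k>n-\tfrac{\ln 2}{\delta}$). The paper proves this as Lemma~\ref{l:upper_conditionnum}: for $S\sim k$-$\DPP(\A^\top\A)$ and any $s<k$, the approximation factor is $O\bigl(\Psi_s(k)\bigr)$ with $\Psi_s(k)=\frac{\lambda_{s+1}}{\lambda_n}\bigl(1+\frac{s}{k-s}\bigr)$. This is \emph{not} a corollary of the Master Theorem --- it uses a different choice of the rescaling $\alpha$ in the random-size DPP (tuned to $\lambda_{s+1}/\lambda_n$ and to $n-k$ rather than to $\sr_s$) together with its own Chernoff argument, and it exploits $\opt\geq(n-k)\lambda_n$. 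With it, choosing $s\approx\frac{pk}{p+1}$ gives $\Psi_s(k)=O(\gamma p)$ in the polynomial tail case, and $s=\lfloor k-\tfrac{\ln 2}{\delta}\rfloor$ gives $\Psi_s(k)=O(\gamma\delta k)$ in the exponential tail case. Without this lemma (or an equivalent replacement), your proof does not establish the theorem for all admissible $k$.
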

Note that for polynomial decay, unlike in \eqref{eq:old-bound}, the
approximation factor is constant, i.e., it does not depend on $k$. For
exponential decay, our bound provides an improvement over
\eqref{eq:old-bound} when $\delta=o(1)$.
To illustrate how these types of bounds can be obtained from the
Master Theorem, consider the function $\Phi_s(k)$ for some
$s>0$. The first term in the function, $1+\frac{s}{k-s}$, decreases with $k$, whereas
the second term (the square root) increases, albeit at a slower rate. This creates a
U-shaped curve which, if sufficiently wide, has a valley where the
approximation factor can get arbitrarily close to 1. This will occur
when $\sr_s(\A)$ is large, i.e., when the spectrum of $\A^\top\A$ has
a relatively flat region after the $s$th eigenvalue (Figure \ref{f:intro}
for $k$ between 20 and 40). Note that a peak value of some function
$\Phi_{s_1}$ may coincide with a valley of some $\Phi_{s_2}$, so only
taking a minimum over all functions reveals the true approximation
landscape predicted by the Master Theorem. To prove Theorem
\ref{t:decay}, we show that the stable ranks $\sr_s(\A)$ are
sufficiently large so that any $k$ lies in  the valley of some
function $\Phi_s(k)$ (see Section \ref{s:upper}).

The peaks and valleys of the CSSP approximation
factor suggested by Theorem \ref{t:upper} are in fact an
inherent property of the problem, rather than an artifact of our
analysis or the result of using a particular algorithm. We prove this by
constructing a family of matrices $\A$ for which the best possible approximation
factor is large, i.e., close to the worst-case upper bound of
\citet{pca-volume-sampling}, not just for one size $k$, but for a
sequence of increasing sizes.
 \begin{theorem}[Lower bound]\label{t:lower}
For any $\delta\in(0,1)$ and
$0\!=\!k_0\!<\!k_1\!<\!...\!<\!k_t\!<\!n\leq m$, there is a matrix 
$\A\in\R^{m\times n}$ such that for any subset $S$ of size $k_i$,
where $i\in\{1,...,t\}$,
\begin{align*}
\frac{\Er_{\A}(S)}{\textsc{OPT}_{k_i}}\geq (1-\delta)(k_i-k_{i-1}).
  \end{align*}
\end{theorem}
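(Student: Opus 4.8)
The plan is to take $\A$ to be an orthogonal direct sum of $t$ rescaled copies of the tight single-descent extremal example — the regular simplex with a tiny orthogonal tilt that certifies optimality of~\eqref{eq:old-bound} — with the $i$-th copy arranged so that the near-extremal ratio surfaces exactly at $k=k_i$. Concretely, write $m_i:=k_i-k_{i-1}$ and set $q_1:=k_1+1$, $q_j:=m_j$ for $j\ge2$, so $\sum_jq_j=k_t+1\le n$ (the extra $+1$ makes $\rank(\A)=k_t+1>k_t$, hence $\textsc{OPT}_{k_i}>0$ for all $i$). For each $j$ let the columns of $\mathbf{U}_j$ form a regular simplex (unit vectors with pairwise inner products $-1/(q_j-1)$) spanning a $(q_j-1)$-dimensional subspace, let $\mathbf{w}_j$ be a unit vector orthogonal to it, put $\mathbf{G}_j:=\sigma_j(\mathbf{U}_j+\epsilon_j\,\mathbf{w}_j\one^\top)$, and let $\A=\mathbf{G}_1\oplus\dots\oplus\mathbf{G}_t$ occupy disjoint blocks of rows and columns, padded with $n-k_t-1\ge0$ zero columns (and zero rows as needed). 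The scales are $\sigma_j^2:=\eta^{2(j-1)}$ and the perturbations $\epsilon_j^2:=c/(m_jq_j)$, for small constants $\eta,c>0$ fixed at the end as functions of $\delta$ and $k_1,\dots,k_t$.

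I would first read off the spectrum. Since $\mathbf{w}_j\perp\mathrm{range}(\mathbf{U}_j)$, we get $\mathbf{G}_j^\top\mathbf{G}_j=\sigma_j^2(\mathbf{U}_j^\top\mathbf{U}_j+\epsilon_j^2\one\one^\top)$, whose eigenvalues are $B_j:=\sigma_j^2\tfrac{q_j}{q_j-1}$ with multiplicity $q_j-1$ and $b_j:=\sigma_j^2q_j\epsilon_j^2=c\,\sigma_j^2/m_j$ with multiplicity $1$. For $\eta,c$ small the eigenvalues of $\A^\top\A$ are, in decreasing order, $B_1$ ($q_1{-}1$ times), $b_1$, $B_2$ ($q_2{-}1$ times), $b_2,\dots,B_t,b_t$, and then zeros; thus $\lambda_{k_i}=B_i$ and $\lambda_{k_i+1}=b_i$, a sharp drop $B_i/b_i\asymp q_i/c$ at each $k_i$, and
\[
  \textsc{OPT}_{k_i}=\sum_{l>k_i}\lambda_l=b_i+\sum_{j>i}\big((q_j-1)B_j+b_j\big)=b_i\big(1+O(\eta^2)\big),
\]
since $\sum_{j>i}\|\mathbf{G}_j\|_F^2\asymp q_{i+1}\eta^{2i}$ is negligible against $b_i\asymp\eta^{2(i-1)}/m_i$ once $\eta$ is small.

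Next comes the lower bound on $\Er_\A(S)$ for $|S|=k_i$. Orthogonality of the blocks gives $\Er_\A(S)=\sum_j\Er_{\mathbf{G}_j}(S_j)$ with $S_j:=S\cap(\text{block }j)$ and $\sum_j|S_j|=k_i$. Per block I would use three facts: (i) $\Er_{\mathbf{G}_j}(S_j)=0$ if $|S_j|=q_j$; (ii) by the simplex symmetry — the single-descent computation of \citet{pca-volume-sampling} applied verbatim to $\mathbf{G}_j$ — $\Er_{\mathbf{G}_j}(S_j)=\tfrac{q_j}{1+(q_j-1)^2\epsilon_j^2}\,b_j\ge(1-c)\,q_jb_j$ if $|S_j|=q_j-1$; (iii) $\Er_{\mathbf{G}_j}(S_j)\ge(q_j-1-|S_j|)B_j+b_j\ge B_j$ if $|S_j|\le q_j-2$ (this is the best rank-$|S_j|$ approximation error of $\mathbf{G}_j$). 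Since $k_i=\sum_{j\le i}q_j-1$, every feasible $S$ leaves blocks $1,\dots,i$ short by at least one column in total. If they are short by more than one, then (ii)–(iii) together with the scale hierarchy (both $B_j\asymp\sigma_j^2$ and $q_jb_j\asymp c\,\sigma_j^2$ strictly decrease in $j$) force $\Er_\A(S)\gg m_ib_i$; otherwise exactly one block $j\le i$ is one column short, all other blocks of index $\le i$ are fully covered, blocks $>i$ are untouched, and (ii) gives $\Er_\A(S)\ge(1-c)q_jb_j+\sum_{j'>i}\|\mathbf{G}_{j'}\|_F^2\ge(1-c)q_ib_i$, the minimum over $j\le i$ being at $j=i$. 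Either way $\Er_\A(S)\ge(1-2c)q_ib_i$ for every size-$k_i$ subset; using $q_i=m_i$ for $i\ge2$ (and $q_1=k_1+1>m_1=k_1$ for $i=1$) together with $\textsc{OPT}_{k_i}=b_i(1+O(\eta^2))$, this gives
\[
  \frac{\Er_\A(S)}{\textsc{OPT}_{k_i}}\;\ge\;\big(1-O(c+\eta^2)\big)\,m_i\;\ge\;(1-\delta)\,(k_i-k_{i-1}),
\]
once $c$ and $\eta$ are chosen small enough in terms of $\delta$.

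The main obstacle is the bookkeeping behind the third paragraph: the two families $\{\sigma_j\}$ and $\{\epsilon_j\}$ must be chosen so that, \emph{simultaneously}, every drop $\lambda_{k_i}\gg\lambda_{k_i+1}$ survives, $b_i$ dominates the total residual energy of all lighter blocks (so $\textsc{OPT}_{k_i}\approx b_i$), and under-covering any heavier block $j<i$ costs $q_jb_j\gg m_ib_i$ — so that the adversary's only near-optimal move is to leave exactly one column of block $i$ uncovered. Checking that a single geometric hierarchy $\sigma_{j+1}\ll\sigma_j$, with each $\epsilon_j$ tuned so that $b_j$ sits strictly between $B_j$ and the whole energy of block $j+1$, meets all three requirements, and carrying the small multiplicative slack through the block-allocation argument, is the technical core; everything else reduces to the known single-descent extremal analysis, applied block by block.
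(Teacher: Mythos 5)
Your construction is the paper's own: an orthogonal direct sum of centered regular simplices, each shifted by a small amount in a fresh orthogonal direction, with block sizes $l_1=k_1+1$, $l_j=k_j-k_{j-1}$, and a rapidly decreasing scale hierarchy so that $\mathrm{OPT}_{k_i}$ is dominated by the tilt of block $i$ while any size-$k_i$ subset must leave (at least) one column of some block $j\le i$ uncovered; this is exactly Lemma~\ref{l:lower} plus the short derivation of Theorem~\ref{t:lower} from it, with your $(\sigma_j,\sigma_j\epsilon_j)$ playing the role of the paper's $(\alpha_j,\beta_j)$. The argument is correct (your explicit per-block formula for the one-column-short error, e.g.\ $\tfrac{4\epsilon^2}{1+\epsilon^2}$ at $q=2$, checks out, and your case analysis is if anything slightly more explicit than the paper's), so there is nothing further to flag.
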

Combining the Master Theorem with the lower
bound of Theorem \ref{t:lower} we can easily provide an example matrix
for which the optimal solution to the CSSP problem exhibits multiple
peaks and valleys. We refer to this phenomenon as the multiple-descent curve.
\begin{corollary}[Multiple-descent curve]\label{c:multiple-descent}
  For $t\in\N$ and $\delta\in(0,1)$, there is a
  sequence $0<k_1^l<k_1^u<k_2^l<k_2^u<...<k_t^l<k_t^u$ and
  $\A\in\R^{m\times n}$ such that for any $i\in\{1,...,t\}$:
  \begin{align*}
    \min_{S:|S|=k_i^l}\frac{\Er_\A(S)}{\textsc{OPT}_{k_i^l}}\leq 1+\delta
                                 \qquad\text{and }\qquad
    \min_{S:|S|=k_i^u}\frac{\Er_\A(S)}{\textsc{OPT}_{k_i^u}} \geq
    (1-\delta)(k_i^u+1).
  \end{align*}
\end{corollary}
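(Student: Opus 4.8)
The plan is to feed a carefully spaced sequence into Theorem~\ref{t:lower}, and then show that the \emph{same} matrix $\A$ is pinned down by the Master Theorem (Theorem~\ref{t:upper}) to have a valley on every interval between two consecutive peaks. Fix a small $\epsilon=\epsilon(\delta)$ and an auxiliary accuracy $\delta'\ll\delta$. Choose the peak locations $k_1^u<\dots<k_t^u$ to grow super-geometrically, e.g.\ $k_i^u\ge\Theta(\delta^{-2})\,k_{i-1}^u$ with $k_1^u$ large, take $n,m$ large enough that $k_t^u<n\le m$, and apply Theorem~\ref{t:lower} to the sequence $0=k_0^u<k_1^u<\dots<k_t^u$ with accuracy $\delta'$. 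At each size $k_i^u$, \emph{every} subset $S$ then satisfies $\Er_\A(S)/\textsc{OPT}_{k_i^u}\ge(1-\delta')(k_i^u-k_{i-1}^u)$; since $k_{i-1}^u$ is a vanishing fraction of $k_i^u$ and $k_i^u$ is large, the right-hand side exceeds $(1-\delta)(k_i^u+1)$. This already gives the peak half of the statement, and it reduces to the routine inequality $(1-\delta')(k_i^u-k_{i-1}^u)\ge(1-\delta)(k_i^u+1)$ (with $k_0^u=0$ covering $i=1$).

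For the valleys I would open up the construction behind Theorem~\ref{t:lower}: its spectrum is a decreasing staircase in which the $i$-th step consists of $w_i\approx k_i^u-k_{i-1}^u$ (nearly) equal eigenvalues at a level $\alpha_i^2$, and the levels decay fast enough that $\sum_{j>i}\alpha_j^2 w_j$ is negligible next to $\alpha_i^2$. Take $s$ at the start of step $i$, so $s\approx k_{i-1}^u$, $\lambda_{s+1}\approx\alpha_i^2$, and $\sum_{j>s}\lambda_j\approx\alpha_i^2 w_i$; hence $\sr_s(\A)\approx w_i$ and $t_s=s+\sr_s(\A)\approx k_i^u$. In $\Phi_s(k)=\big(1+\tfrac{s}{k-s}\big)\sqrt{1+\tfrac{2(k-s)}{t_s-k}}$ the first factor is $1+O(\delta)$ once $k\ge\Theta(\delta^{-1})\,s$ and the second is $1+O(\delta)$ once $k\le\Theta(\delta)\,t_s$; because $t_s/s\approx k_i^u/k_{i-1}^u$ is as large as $\Theta(\delta^{-2})$ these two windows overlap, and we let $k_i^l$ be any integer in the overlap, which then lies strictly between $k_{i-1}^u$ and $k_i^u$. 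Choosing $\epsilon$ a small enough multiple of $\delta$ so that $(1+2\epsilon)^2\Phi_s(k_i^l)\le 1+\delta$, checking the admissibility condition $s+\tfrac{7}{\epsilon^4}\ln^2\tfrac1\epsilon\le k_i^l\le t_s-1$ (true once $k_1^u$ is large enough), and using that a $k$-DPP is supported on size-$k$ subsets so that $\min_{|S|=k_i^l}\Er_\A(S)\le\E_{S\sim k_i^l\text{-}\DPP(\A^\top\A)}[\Er_\A(S)]$, Theorem~\ref{t:upper} gives $\min_{|S|=k_i^l}\Er_\A(S)/\textsc{OPT}_{k_i^l}\le 1+\delta$. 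The inclusions $k_i^l\in(k_{i-1}^u,k_i^u)$ yield the required ordering $k_1^l<k_1^u<\dots<k_t^l<k_t^u$.

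The part I expect to be genuinely delicate is the interface between the two theorems: the Master Theorem cannot be applied as a black box to ``the matrix from Theorem~\ref{t:lower}'' without knowing its spectrum quantitatively, so the proof must reach into the lower-bound construction and certify that each inter-peak stretch of the spectrum is flat enough (large $\sr_s$). This is precisely what forces the super-geometric spacing of the $k_i^u$, and it is where all the $\delta$-bookkeeping lives --- converting ``$(1-\delta')\cdot(\text{gap})$'' into ``$(1-\delta)(k_i^u+1)$'' on the peak side and ``$(1+2\epsilon)^2\Phi_s\approx1$'' into ``$\le1+\delta$'' on the valley side, uniformly over $i=1,\dots,t$. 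Once the spacing and the constants are chosen in that order, the remaining estimates are routine.
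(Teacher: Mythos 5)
Your proposal follows essentially the same route as the paper's proof: build the matrix via Theorem~\ref{t:lower} with super-geometrically spaced peaks (ratio $\Theta(\delta^{-2})$, matching the paper's conditions $k\geq s/\delta_1+s+\tau_\epsilon$ and $w\geq k(1+1/\delta_1)+1$ with $\delta_1=\delta/7$), then reach into the simplex construction to lower-bound $\sr_s(\A)$ by the step width and place each $k_i^l$ in the overlap of the two windows where both factors of $\Phi_s$ are $1+O(\delta)$. The only difference is presentational --- the paper fixes explicit constants and an explicit recursive sequence rather than asymptotic $\Theta(\cdot)$ bookkeeping --- so the argument is correct and equivalent.
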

\paragraph{Connection to double descent.}
A number of phase transitions have been recently observed in the
machine learning literature which are commonly dubbed \emph{double
  descent}. The term was introduced by \citet{BHMM19} in the context of generalization
error of statistical learning, however
\citet{double-descent-condition} observed that the behavior of the
generalization error, at least for linear models, can be explained by
a more fundamental double descent phenomenon observed in the condition
number of random matrices. Also, \citet{zhenyu2020double} showed that
double descent is merely a manifestation of the
spectral phase transitions in high-dimensional random kernel
matrices. Further, \citet{liang2020multiple} observed that the spectral phase transitions
of certain kernels can lead to multiple double descent peaks. The
multiple-descent phenomenon in the CSSP that 
emerges from our analysis is
related to those works in that the spectral properties of the data
matrix (and, in particular, the condition number) determine the peaks
(i.e., phase transitions) discussed in
Corollary~\ref{c:multiple-descent}. Those parallels manifest
themselves most clearly when comparing our analysis to the work of
\citet{BLLT19_TR}, which uses the same notion of stable rank as we do,
and \citet{surrogate-design}, where determinantal sampling plays a
central role in the analysis of double descent. However, we stress that there are
  also important  differences in our setting: (1) the CSSP is a
  \emph{deterministic} optimization task, and (2) we study the
  \emph{approximation factor}, rather than the generalization error (further discussion in
  Appendix~\ref{s:related-work}).

  \subsection{The Nystr\"om method}\label{s:nystrom}
  We briefly discuss how our results translate to guarantees for the Nystr\"om
  mehod, a variant of the CSSP in the kernel setting which has
  gained considerable interest in the machine learning literature~\citep{dm_kernel_JRNL,revisiting-nystrom}.
In this context, rather than being given the column vectors
explicitly, we consider the 
$n\times n$ matrix $\K$ whose entry $(i,j)$ is the dot product
between the $i$th and $j$th vector in the kernel space,
$\langle\a_i,\a_j\rangle_\text{K}$. A Nystr\"om approximation of $\K$ based on subset $S$
is defined as $\Kbh(S) = \C\B^{\dagger}\C^\top$, where $\B$ is the
$|S|\times |S|$ submatrix of $\K$ indexed by $S$, whereas $\C$ is the
$n\times |S|$  submatrix with columns indexed by $S$. The Nystr\"om
method has many applications in machine learning, including for
kernel machines \citep{Williams01Nystrom}, Gaussian Process regression 
\citep{sparse-variational-gp} and Independent Component Analysis
\citep{Bach2003}. 
\begin{remark}
  If $\K=\A^\top\A$ and $\|\cdot\|_*$ is the trace norm, then
    $\big\|\K-\Kbh(S)\big\|_* = \Er_{\A}(S)$ for all $S\subseteq\{1,...,n\}$.
Moreover, the trace norm error of the best rank $k$ approximation of
$\K$, 
is equal to the squared Frobenius norm error of the 
best rank $k$ approximation of $\A$, i.e.,
\[\min_{\Kbh:\,\rank(\K)=k}\|\K-\Kbh\|_* = \opt.\]
\end{remark}
This connection was used by~\citet{belabbas-wolfe09} to adapt the
$k+1$ approximation factor bound 
of~\citet{pca-volume-sampling} to the Nystr\"om method.   
Similarly, all of our results for the CSSP, including the
multiple-descent curve that we have observed, can be translated into analogous
statements for the trace norm approximation error in the Nystr\"om
method. Of particular interest 
are the improved bounds for kernel matrices with known eigenvalue
decay rates. Such matrices arise naturally in machine learning when
using standard kernel functions such as the Gaussian Radial Basis Function
(RBF) kernel (a.k.a.~the squared exponential kernel) and the
Mat\'ern kernel \citep{sparse-variational-gp}.

\underline{RBF kernel}: If
$\langle\a_i,\a_j\rangle_\text{K}=\exp(-\|\a_i\!-\!\a_j\|^2/\sigma^2)$ 
and the data comes from $\Nc(0,\eta^2)$, then,
  for large $n$,
  $\lambda_i\!\asymp\!\lambda_1(\frac{b}{a+b+c})^i$, where
  $a=\frac1{4\eta^2}$, $b=\frac1{\sigma^2}$ and $c=\sqrt{a^2\!+\!2ab}$
  \citep{Santa97gaussianregression},
  so Theorem~\ref{t:decay} yields an approximation factor of
  $O(1\!+\!\frac{a+c}{a+b+c}k)$, better than $k\!+\!1$ when
  $\sigma^2\ll\eta^2$. Note that the parameter $\sigma$ defines the
  size of a neighborhood around which the data points are deemed
  similar by the RBF kernel. Therefore, smaller $\sigma$ means that
  each data point has fewer similar neighbors.

  \underline{Mat\'ern kernel}: If $\K$ is the Mat\'ern kernel with
  parameters $\nu$ and $\ell$ and the data is distributed according to a
  uniform measure in one dimension,
  then $\lambda_i\asymp \lambda_1 i^{-2\nu-1}$
  \citep{RasmussenWilliams06}, so Theorem~\ref{t:decay} yields a
  Nystr\"om approximation factor of $O(1+\nu)$ for any subset size
  $k$.

In Section \ref{s:experiments}, we also empirically
demonstrate our improved guarantees and the multiple-descent curve for the Nystr\"om method with the
RBF kernel.

\section{Upper bounds}\label{s:upper}\vspace{-2mm}
In this section, we derive the upper bound given in
Theorem~\ref{t:upper} by using a novel expectation formula for the squared
projection error of a DPP. We then show
how this result can be used to 
obtain improved guarantees for matrices with known eigenvalue decays,
i.e., Theorem \ref{t:decay}. Our analysis heavily relies on the theory
of DPPs \citep{dpps-in-randnla}, so for completeness, in Appendix~\ref{s:dpp} we provide a
brief summary of DPPs and the relevant results.  

Let $S\sim \DPP(\frac1\alpha\A^\top\A)$ denote a
distribution over all subsets $S\subseteq [n]$ so that
$\Pr(S)\propto\det(\frac1\alpha\A_S^\top\A_S)$, where $\alpha>0$. Then,
$k$-$\DPP(\A^\top\A)$ is simply a restriction of
$\DPP(\frac1\alpha\A^\top\A)$ to the subsets of size $k$ (regardless
of the choice of $\alpha$).
However, the expected subset size for $\DPP(\frac1\alpha\A^\top\A)$ does
depend on~$\alpha$. Our analysis relies on a careful selection of
this parameter. In Lemma \ref{l:expected-error}  (Appendix \ref{s:dpp}), we show the
following expectation formula for the CSSP approximation error: 
\[\E[\Er_\A(S)]=\E[|S|]\cdot\alpha,\quad\text{for}\quad S\sim \DPP(\tfrac1\alpha\A^\top\A).\]
If we set $\alpha =\opt = \sum_{i=k+1}^n\lambda_i$,
where $\lambda_i$ are the eigenvalues of
$\A^\top\A$ in decreasing order, then:
\begin{align*}
 \E[|S|]& = \sum_{i=1}^n\frac{\lambda_i}{\alpha+\lambda_i}
\leq\sum_{i=1}^k\frac{\lambda_i}{\alpha+\lambda_i} + 1\leq k+1.
\end{align*}
This recovers the upper bound of \citet{pca-volume-sampling}, i.e., $\E[\Er_\A(S)]\leq(k+1)\opt$,
except that the
subset size is randomized with expectation bounded by 
$k+1$, instead of a fixed subset size equal $k$. However, a more refined
choice of the parameter $\alpha$
allows us to significantly improve on the above error bound in certain
regimes, as shown below.
\begin{lemma}\label{l:upper}
For any $\A$, $0\leq\epsilon<1$ and $s<k<t_s$, where $t_s=s+\sr_s(\A)$,
say $S\sim\DPP(\frac1\alpha\A^\top\A)$ for
$\alpha=\frac{\gamma_s(k)\opt}{(1-\epsilon)(k-s)}$ and
$\gamma_s(k):=\sqrt{1+\frac{2(k-s)}{t_s-k}\,}$. Then, defining $\Phi_s(k)
:=\big(1+\frac{s}{k-s}\big)\,\gamma_s(k)$,
\begin{align*}
  \frac{\E\big[\Er_\A(S)\big]}{\opt}\leq \frac{\Phi_s(k)}{1-\epsilon}
  \quad\text{and}\quad \E[|S|]\leq k-\epsilon\,\frac{k-s}{\gamma_s(k)}.
\end{align*}
  \end{lemma}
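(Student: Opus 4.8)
The plan is to start from the expectation formula $\E[\Er_\A(S)] = \E[|S|]\cdot\alpha$ for $S\sim\DPP(\tfrac1\alpha\A^\top\A)$, quoted just above as Lemma \ref{l:expected-error}, together with the identity $\E[|S|] = \sum_i \tfrac{\lambda_i}{\alpha+\lambda_i}$. The whole argument is then a matter of bounding $\E[|S|]$ cleverly for the stated choice $\alpha = \tfrac{\gamma_s(k)\opt}{(1-\epsilon)(k-s)}$, and then combining the two facts. Since $\opt = \sum_{i>k}\lambda_i$ and $k > s$, we have $\opt \le \sum_{i>s}\lambda_i = \sr_s(\A)\,\lambda_{s+1} = (t_s - s)\lambda_{s+1}$, which will be the key inequality converting "stable rank" information into a bound on the small eigenvalues $\lambda_i$ for $i>s$.

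First I would split $\E[|S|] = \sum_{i\le s}\tfrac{\lambda_i}{\alpha+\lambda_i} + \sum_{i>s}\tfrac{\lambda_i}{\alpha+\lambda_i}$. The first sum is trivially at most $s$. For the second sum, I would use $\tfrac{\lambda_i}{\alpha+\lambda_i}\le \tfrac{\lambda_i}{\alpha}$ on the tail $i > t_s$ (or rather wherever $\lambda_i$ is small), and handle the "bulk" terms $s < i \le$ (something) by the crude bound $1$; more precisely, the cleanest route is $\sum_{i>s}\tfrac{\lambda_i}{\alpha+\lambda_i}$, where I bound each term by $\min\{1, \lambda_i/\alpha\}$ but it is simpler to just write $\sum_{i>s}\tfrac{\lambda_i}{\alpha+\lambda_i} \le \tfrac1\alpha\sum_{i>s}\tfrac{\lambda_i\alpha}{\alpha+\lambda_i}$... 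Actually the slick version: since $\tfrac{\lambda_i}{\alpha+\lambda_i}$ is concave increasing in $\lambda_i$ and $\sum_{i>s}\lambda_i \le (t_s-s)\lambda_{s+1}$ with each $\lambda_i\le\lambda_{s+1}$, the sum $\sum_{i>s}\tfrac{\lambda_i}{\alpha+\lambda_i}$ is maximized (over spectra with this total mass and this max) by putting mass $\lambda_{s+1}$ on as many coordinates as possible, giving at most $(t_s-s)\cdot\tfrac{\lambda_{s+1}}{\alpha+\lambda_{s+1}} = \sr_s(\A)\cdot\tfrac{\lambda_{s+1}}{\alpha+\lambda_{s+1}}$. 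Plugging $\lambda_{s+1} = \opt/\sr_s(\A)$ is not valid in general, so instead I bound $\lambda_{s+1} \le \opt/(t_s - k)$ is false too — one must be careful. The honest bound is $\sum_{i>s}\tfrac{\lambda_i}{\alpha+\lambda_i} \le \tfrac{(t_s-s)\lambda_{s+1}}{\alpha} \wedge$ ... the correct elementary manipulation, which I would carry out in detail, yields $\E[|S|] \le s + (k-s)\big(1 - \tfrac{\epsilon}{\gamma_s(k)}\big)$ after substituting $\alpha$ and using $\sr_s(\A) = t_s - s$ and the definition of $\gamma_s(k)$; rearranged this is exactly $k - \epsilon\tfrac{k-s}{\gamma_s(k)}$, the claimed bound on $\E[|S|]$. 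The first conclusion then follows immediately: $\tfrac{\E[\Er_\A(S)]}{\opt} = \tfrac{\E[|S|]\alpha}{\opt} \le \tfrac{(k-s)\alpha}{\opt}\cdot\tfrac{\E[|S|]}{k-s} \le \tfrac{\gamma_s(k)}{1-\epsilon}\cdot\tfrac{\E[|S|]}{k-s} \le \tfrac{\gamma_s(k)}{1-\epsilon}\cdot(1 + \tfrac{s}{k-s}) = \tfrac{\Phi_s(k)}{1-\epsilon}$, using $\E[|S|]\le k \le k$ ... more precisely $\E[|S|]/(k-s) \le (k-s+s)/(k-s) = 1 + s/(k-s)$ since $\E[|S|]\le k$.

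I expect the main obstacle to be the exact bookkeeping in bounding $\sum_{i>s}\tfrac{\lambda_i}{\alpha+\lambda_i}$: one needs to convert the two pieces of information — the total tail mass $\opt=\sum_{i>k}\lambda_i$ and the flatness encoded by $\sr_s(\A)$ — into a bound that, after substituting the specific $\alpha$, collapses to $s + (k-s) - \epsilon(k-s)/\gamma_s(k)$. The delicate point is that $\alpha$ was reverse-engineered precisely so that the quadratic relation $\gamma_s(k)^2 = 1 + \tfrac{2(k-s)}{t_s-k}$, equivalently $(t_s-k)\gamma_s^2 = t_s - k + 2(k-s)$, makes the algebra close; I would keep $\gamma_s(k)$ symbolic until the last step and only then invoke its definition. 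A secondary subtlety is ensuring $\alpha>0$ and that the DPP is well-defined, which holds since $0\le\epsilon<1$ and $s<k<t_s$ guarantee $\gamma_s(k)$ is real and positive and $k-s>0$.
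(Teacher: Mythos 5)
Your skeleton is right: start from $\E[\Er_\A(S)]=\E[|S|]\cdot\alpha$ (Lemma \ref{l:expected-error}) and the Poisson-binomial size formula, reduce everything to an upper bound on $\E[|S|]$, and observe that once $\E[|S|]\leq k$ is established the first inequality follows from $\frac{(k-s)\alpha}{\opt}=\frac{\gamma_s(k)}{1-\epsilon}$. That final assembly matches the paper. But the step that carries all the content --- showing $\sum_{i>s}\frac{\lambda_i}{\lambda_i+\alpha}\leq (k-s)-\epsilon\frac{k-s}{\gamma_s(k)}$ --- is never actually performed: you propose a mechanism, half-retract it, and then assert that ``the correct elementary manipulation yields'' the answer. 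Worse, the one concrete mechanism you offer is backwards. The function $x\mapsto\frac{x}{x+\alpha}$ is concave with value $0$ at $0$, so it lies \emph{above} its chords; subject to a cap $\lambda_i\leq\lambda_{s+1}$ and a total-mass constraint, the sum $\sum_{i>s}\frac{\lambda_i}{\lambda_i+\alpha}$ is maximized by spreading the mass thinly (approaching $\frac1\alpha\sum_{i>s}\lambda_i$), not by concentrating it at level $\lambda_{s+1}$. One eigenvalue equal to $\lambda_{s+1}$ plus many tiny ones already violates your claimed bound $\sr_s(\A)\cdot\frac{\lambda_{s+1}}{\lambda_{s+1}+\alpha}$. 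The purely linear bound $\frac1\alpha\sum_{i>s}\lambda_i$ is in turn too weak to produce the $-\epsilon\frac{k-s}{\gamma_s(k)}$ slack (it fails even at $\epsilon=0$). You also point at the wrong direction of the stable-rank inequality: since $\alpha\propto\opt$ and you need $\alpha$ \emph{large}, the relevant fact is the lower bound $\opt\geq(\sr_s(\A)-(k-s))\,\lambda_{s+1}$, not your upper bound $\opt\leq\sr_s(\A)\lambda_{s+1}$.

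The missing idea is to split the indices $i>s$ at $k$, not merely at $s$, and to treat the two pieces differently. The paper normalizes $\lambda_{s+1}=1$, writes $l=k-s$, $r=\sr_s(\A)$, records $\opt\geq r-l$, and then passes to a worst-case spectrum: the bulk eigenvalues $\lambda_{s+1},\dots,\lambda_k$ are raised to $1$, the tail is rescaled so its mass equals exactly $r-l$, and $\alpha$ is deflated to two values $\alpha''\leq\alpha'\leq\alpha$, one per piece. The bulk sum is then evaluated \emph{exactly} as $l-\frac{l\sqrt{r-l}}{\sqrt{r+l}}$ (this is where the specific form $\gamma_s(k)=\sqrt{\frac{r+l}{r-l}}$ enters), while the tail is bounded linearly by $\frac{r-l}{\alpha'}=(1-\epsilon)\frac{l\sqrt{r-l}}{\sqrt{r+l}}$; only the difference of these two terms yields the margin $\epsilon\frac{l}{\gamma_s(k)}$. (A leaner variant of the same split --- bounding each bulk term by $\frac{\lambda_{s+1}}{\lambda_{s+1}+\alpha}$ via monotonicity and the tail by $\frac{\opt}{\alpha}$ --- can also be made to close, but the paper's finer bookkeeping with $\alpha',\alpha''$ is reused in the variance estimate of Lemma \ref{l:concentration}.) As written, your proposal does not contain a valid derivation of the size bound.
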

Note that, setting $\epsilon=0$, the above lemma implies that we can
achieve approximation factor $\Phi_s(k)$ with a DPP whose expected
size is bounded by $k$. We introduce $\epsilon$ so that we can
convert the bound from DPP to the fixed size k-DPP via a
concentration argument. Intuitively, our strategy is to show that the
randomized subset size of a DPP is sufficiently concentrated around
its expectation that with high probability it will be bounded by $k$,
and for this we need the expectation to be strictly below $k$. A
careful application of the Chernoff bound for a Poisson binomial
random variable yields the following concentration bound.
\begin{lemma}\label{l:concentration}
Let $S$ be as in Lemma \ref{l:upper} with $\epsilon\leq \frac12$.
If $s+\frac7{\epsilon^4}\ln^2\!\frac1\epsilon \leq k\leq t_s-1$, then
$\Pr(|S|>k) \leq \epsilon$.
\end{lemma}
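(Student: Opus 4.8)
The plan is to exploit the fact that the size of a DPP sample is a Poisson binomial random variable, hence sharply concentrated, and to show that the margin between $\E[|S|]$ and $k$ guaranteed by Lemma~\ref{l:upper} exceeds its fluctuations. First I would record the Poisson binomial structure: the marginal kernel of $\DPP(\tfrac1\alpha\A^\top\A)$ shares eigenvectors with $\A^\top\A$ and has eigenvalues $p_i=\frac{\lambda_i/\alpha}{1+\lambda_i/\alpha}=\frac{\lambda_i}{\alpha+\lambda_i}$, so by a standard fact about DPPs (see Appendix~\ref{s:dpp}) $|S|$ has the distribution of $\sum_i B_i$ with $B_i\sim\mathrm{Bernoulli}(p_i)$ independent. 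In particular $\mu:=\E[|S|]=\sum_i p_i$, and with this $\alpha$ Lemma~\ref{l:upper} gives $\mu\le k-g$ where $g:=\epsilon\frac{k-s}{\gamma_s(k)}$. Since $|S|$ is integer valued it suffices to bound $\Pr(|S|\ge k)\le\epsilon$, which dominates $\Pr(|S|>k)$.

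Next I would quantify the margin $g$. Because $k\le t_s-1$ we have $t_s-k\ge 1$, hence $\gamma_s(k)^2=1+\frac{2(k-s)}{t_s-k}\le 1+2(k-s)\le 3(k-s)$ (using $k-s\ge 1$, which follows from $k-s\ge\frac7{\epsilon^4}\ln^2\frac1\epsilon$ with $\epsilon\le\frac12$). Therefore $g\ge\epsilon\sqrt{(k-s)/3}\ge\sqrt{7/3}\,\epsilon^{-1}\ln\frac1\epsilon$, i.e.\ the additive margin is of order $\epsilon^{-1}\ln\frac1\epsilon$.

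The concentration step is then an exponential upper-tail bound for the sum of independent $[0,1]$ variables $\sum_iB_i$. If $k-\mu$ is of the same order as $\mu$ (equivalently $\eta:=k/\mu-1$ is bounded below by a constant), the multiplicative Chernoff bound $\Pr(|S|\ge k)\le\exp(-\mu\,((1+\eta)\ln(1+\eta)-\eta))$ already produces an exponent of order $k\ge k-s\ge\frac7{\epsilon^4}\ln^2\frac1\epsilon$, far more than $\ln\frac1\epsilon$. In the remaining regime I would use a Bernstein/Bennett bound $\Pr(|S|\ge k)\le\exp\!\big(-\tfrac{(k-\mu)^2/2}{\,\mathrm{Var}(|S|)+(k-\mu)/3}\big)$ and bound $\mathrm{Var}(|S|)=\sum_i\frac{\lambda_i\alpha}{(\alpha+\lambda_i)^2}$ in terms of the spectrum: splitting the sum according to whether $\lambda_i\le\alpha$ or $\lambda_i>\alpha$, the low part is at most $\tfrac1\alpha\sum_{\lambda_i\le\alpha}\lambda_i$, whose dominant piece is $\tfrac{\opt}{\alpha}=\frac{(1-\epsilon)(k-s)}{\gamma_s(k)}$, while the high part is at most $\alpha\sum_{\lambda_i>\alpha}\lambda_i^{-1}$; using the value of $\alpha$ together with $t_s=s+\sr_s(\A)$ these combine to show that the fluctuations are comparable to the margin (up to the factor $\epsilon$), so the exponent is $\Omega(\epsilon g)\ge\sqrt{7/3}\,\ln\frac1\epsilon\ge\ln\frac1\epsilon$, as required.

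The main obstacle is exactly this last quantitative estimate: the crude bound $\mathrm{Var}(|S|)\le\E[|S|]\le k$ is too weak—it yields only $\exp(-\Omega(g^2/k))$, which is useless when $s$ is comparable to $k$—so one must control $\mathrm{Var}(|S|)$ (or, equivalently, use a sharper estimate of $\E[|S|]$ than the simplified form $k-\epsilon\frac{k-s}{\gamma_s(k)}$ stated in Lemma~\ref{l:upper}) tightly enough that the threshold $k-s\ge\frac7{\epsilon^4}\ln^2\frac1\epsilon$ precisely suffices. I expect this to require a short case split on the shape of the spectrum past index $s$ (whether $\gamma_s(k)$ is of constant order, a decaying tail, or large, a nearly flat tail) and careful tracking of the absolute constants in the Bernstein inequality.
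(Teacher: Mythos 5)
Your overall strategy --- the Poisson binomial structure of $|S|$, the margin $k-\E[|S|]\ge\epsilon\frac{k-s}{\gamma_s(k)}$ inherited from Lemma~\ref{l:upper}, and a Bernstein-type tail bound whose exponent is governed by $\mathrm{Var}(|S|)$ --- is exactly the paper's, and your margin computation is correct. But there is a genuine gap at precisely the step you flag as the obstacle: the variance estimate. Your proposed split by $\lambda_i\le\alpha$ versus $\lambda_i>\alpha$ does not deliver $\mathrm{Var}(|S|)=O(\epsilon^{-1}(k-\E[|S|]))$. The ``low part'' $\frac1\alpha\sum_{\lambda_i\le\alpha}\lambda_i$ is \emph{not} dominated by $\opt/\alpha$: it also contains every bulk index $i\le k$ with $\lambda_i\le\alpha$, and that portion can be of order $k$ (each term $p_i(1-p_i)$ can be as large as $1/4$ when $\lambda_i\approx\alpha$), so this route collapses back to the useless bound $\mathrm{Var}(|S|)\lesssim k$ that you yourself identify as insufficient. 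What makes the lemma true is a compensation effect your split never captures: whenever many bulk probabilities $p_i$, $i\le k$, are bounded away from $1$ (which is what makes the variance large), the partial mean $\sum_{i\le k}p_i$ falls correspondingly far below $k$, so the deviation you need to beat grows in lockstep with the variance.

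The paper makes this precise by not estimating $\mathrm{Var}(|S|)$ for the original spectrum at all: it first passes to a stochastically dominating Poisson binomial, replacing $\lambda_{s+1},\dots,\lambda_k$ by the common value $\lambda_{s+1}$ and $\alpha$ by the smaller $\alpha''$ of \eqref{eq:alpha} on the bulk (and rescaling the tail by $\beta$), which only increases each success probability and hence the upper tail. For that dominating variable, $\sum_{i=s+1}^k(1-p_i)$ and $\sum_{i>k}p_i$ are computed in closed form in \eqref{eq:sum1} and \eqref{eq:sum2}, each equal to $O(\epsilon^{-1})$ times the margin, and Bernstein then closes the argument. (A more direct repair of your own route: use $p_i(1-p_i)\le\min(p_i,1-p_i)$ and split at the \emph{index} $k$ rather than at the threshold $\alpha$, giving $\mathrm{Var}(|S|)\le\sum_{i\le k}(1-p_i)+\sum_{i>k}p_i=(k-\E[|S|])+2\sum_{i>k}p_i\le(k-\E[|S|])+2\,\opt/\alpha$, with $\opt/\alpha=\frac{1-\epsilon}{\epsilon}\cdot\epsilon\frac{k-s}{\gamma_s(k)}$ by the choice of $\alpha$ in Lemma~\ref{l:upper}.) Some such identity is the missing ingredient; without it the argument does not go through. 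A secondary issue: an exponent that is merely $\Omega(\epsilon g)$ with $\epsilon g\ge\sqrt{7/3}\,\ln\frac1\epsilon$ does not by itself give $\Pr(|S|>k)\le\epsilon$ --- since $\sqrt{7/3}\approx 1.53$, the implied constant must be tracked and shown to exceed roughly $0.65$, so the constants genuinely need to be pinned down rather than left inside an $\Omega(\cdot)$.
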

Finally, any expected bound for random size DPPs can be converted to
an expected bound for a fixed size k-DPP via the following
result.
\begin{lemma}\label{l:monotonic}
  For any $\A\in\R^{m\times n}$, $k\in[n]$ and $\alpha>0$, if
  $S\sim\DPP(\frac1\alpha\A^\top\A)$ and $S'\sim k$-$\DPP(\A^\top\A)$,
  then 
  \begin{align*}
    \E\big[\Er_\A(S')\big] \leq \E\big[\Er_\A(S)\mid |S|\leq k\big].
  \end{align*}
\end{lemma}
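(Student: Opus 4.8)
The plan is to condition on the random size $|S|$, which writes the right-hand side as a convex combination of expected errors of fixed-size DPPs, and then to show that this expected error is non-increasing in the size. Set $\L=\A^\top\A$, let $\lambda_1\ge\cdots\ge\lambda_n\ge 0$ be its eigenvalues and $r=\rank(\A)$, and put $f(j):=\E_{R\sim j\text{-}\DPP(\L)}[\Er_\A(R)]$ for $0\le j\le r$. Since restricting $\DPP(\tfrac1\alpha\L)$ to the event $\{|S|=j\}$ is exactly $j\text{-}\DPP(\L)$, and since $\Pr(|S|=j)>0$ forces $j\le r$ while $\Pr(|S|\le k)\ge\Pr(|S|=0)>0$, I would write
\begin{align*}
\E\big[\Er_\A(S)\mid|S|\le k\big]=\sum_{j=0}^{k}\Pr\big(|S|=j\mid|S|\le k\big)\,f(j),
\end{align*}
a convex combination of $f(0),\dots,f(k)$ (I may assume $k\le r$, since otherwise $k\text{-}\DPP(\L)$ is undefined). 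Hence it suffices to prove that $f$ is non-increasing on $\{0,\dots,r\}$: then $f(j)\ge f(k)=\E[\Er_\A(S')]$ for every $j\le k$, and summing against the weights gives the lemma.

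To analyze $f$, I would first derive a closed form. For a column set $S$ and $i\notin S$, the Gram-determinant (base-times-height) identity gives $\det(\A_{S\cup i}^\top\A_{S\cup i})=\det(\A_S^\top\A_S)\,\|\a_i-\P_S\a_i\|^2$, and both sides vanish when $i\in S$; summing over $i\in[n]$ and using $\Er_\A(S)=\sum_i\|\a_i-\P_S\a_i\|^2$ yields $\det(\A_S^\top\A_S)\,\Er_\A(S)=\sum_{i\notin S}\det(\A_{S\cup i}^\top\A_{S\cup i})$. Summing this over all $S$ with $|S|=j$ — so that each $(j{+}1)$-element set appears $j{+}1$ times — and using the principal-minor identity $\sum_{|U|=\ell}\det(\A_U^\top\A_U)=e_\ell(\lambda_1,\dots,\lambda_n)=:e_\ell$ twice (with $\ell=j{+}1$ in the numerator and $\ell=j$ in the DPP normalizer), I obtain
\begin{align*}
f(j)=\frac{\sum_{|S|=j}\det(\A_S^\top\A_S)\,\Er_\A(S)}{\sum_{|S|=j}\det(\A_S^\top\A_S)}=\frac{(j{+}1)\,e_{j+1}}{e_j}\qquad(0\le j\le r).
\end{align*}

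The crux is then the monotonicity $f(j)\ge f(j{+}1)$, i.e., $(j{+}1)\,e_{j+1}^2\ge(j{+}2)\,e_j\,e_{j+2}$. I would obtain it from Newton's inequalities for the $n$ nonnegative numbers $\lambda_1,\dots,\lambda_n$: with $p_j=e_j/\binom nj$ one has $p_{j+1}^2\ge p_j\,p_{j+2}$, i.e., $e_{j+1}^2\ge\tfrac{(n-j)(j+2)}{(n-j-1)(j+1)}\,e_j\,e_{j+2}$, and since $\tfrac{n-j}{n-j-1}\ge1$ this is already stronger than needed; the degenerate cases $j\ge n-1$ (where $e_{j+2}=0$) are trivial. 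Feeding the resulting monotonicity of $f$ back into the convex-combination identity closes the argument. I expect this monotonicity step to be the main obstacle, the key point being to recognize that the required inequality on elementary symmetric polynomials is exactly a mild weakening of a Newton inequality. A more probabilistic alternative would be to construct an explicit monotone coupling of $R_j\sim j\text{-}\DPP(\L)$ and $R_{j+1}\sim(j{+}1)\text{-}\DPP(\L)$ with $R_j\subseteq R_{j+1}$ and then invoke monotonicity of $\Er_\A$ under set inclusion, but establishing such a coupling needs nontrivial facts about strongly Rayleigh measures, so the closed-form route seems cleaner.
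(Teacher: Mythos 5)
Your proposal is correct and follows essentially the same route as the paper's proof: decompose the conditional expectation as a convex combination over fixed sizes, identify $\E[\Er_\A(S)\mid|S|=j]=(j+1)e_{j+1}/e_j$, and show this is non-increasing in $j$ via Newton's inequalities (your weakening $\tfrac{n-j}{n-j-1}\ge 1$ is exactly the paper's step $\tfrac{n-k}{n+1-k}\le 1$). The only difference is that you rederive the closed form $(j+1)e_{j+1}/e_j$ from the base-times-height identity, whereas the paper cites it from prior work.
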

The above inequality may seem intuitively obvious since adding more columns
to a set $S$ to complete it to size $k$ always reduces the
error. However, a priori, it could happen that going from subsets 
of size $k-1$ to subsets of size $k$ results in a redistribution of
probabilities to the subsets with larger error. To show that this will
not happen, our proof relies on classic but non-trivial combinatorial
bounds called Newton's inequalities. Putting together
Lemmas~\ref{l:upper}, \ref{l:concentration} and
\ref{l:monotonic}, we obtain our Master Theorem.
\begin{proofof}{Theorem}{\ref{t:upper}}
Let $S\sim\DPP(\frac1\alpha\A^\top\A)$ be sampled as in Lemma
\ref{l:upper}, and let $S'\sim k$-$\DPP(\A^\top\A)$. We have:\vspace{-2mm}
\begin{align*}
  \E\big[\Er_\A(S')\big]
\overset{(a)}{\leq} \E\big[\Er_\A(S)\mid|S|\leq k\big]
  \leq \frac{\E\big[\Er_\A(S)\big]}{\Pr(|S|\leq k)}
  \overset{(b)}{\leq} \frac{\Phi_s(k)}{(1-\epsilon)^2}\cdot\opt,
\end{align*}
where $(a)$ follows from Lemma \ref{l:monotonic} and $(b)$ follows
from Lemmas \ref{l:upper} and \ref{l:concentration}. Since
$0<\epsilon\leq\frac12$, we have
$\frac1{(1-\epsilon)^2}\leq(1+2\epsilon)^2$, which completes the proof.
\end{proofof}

\begin{figure}
  \centering
  \ifisarxiv
  \includegraphics[width=0.6\textwidth]{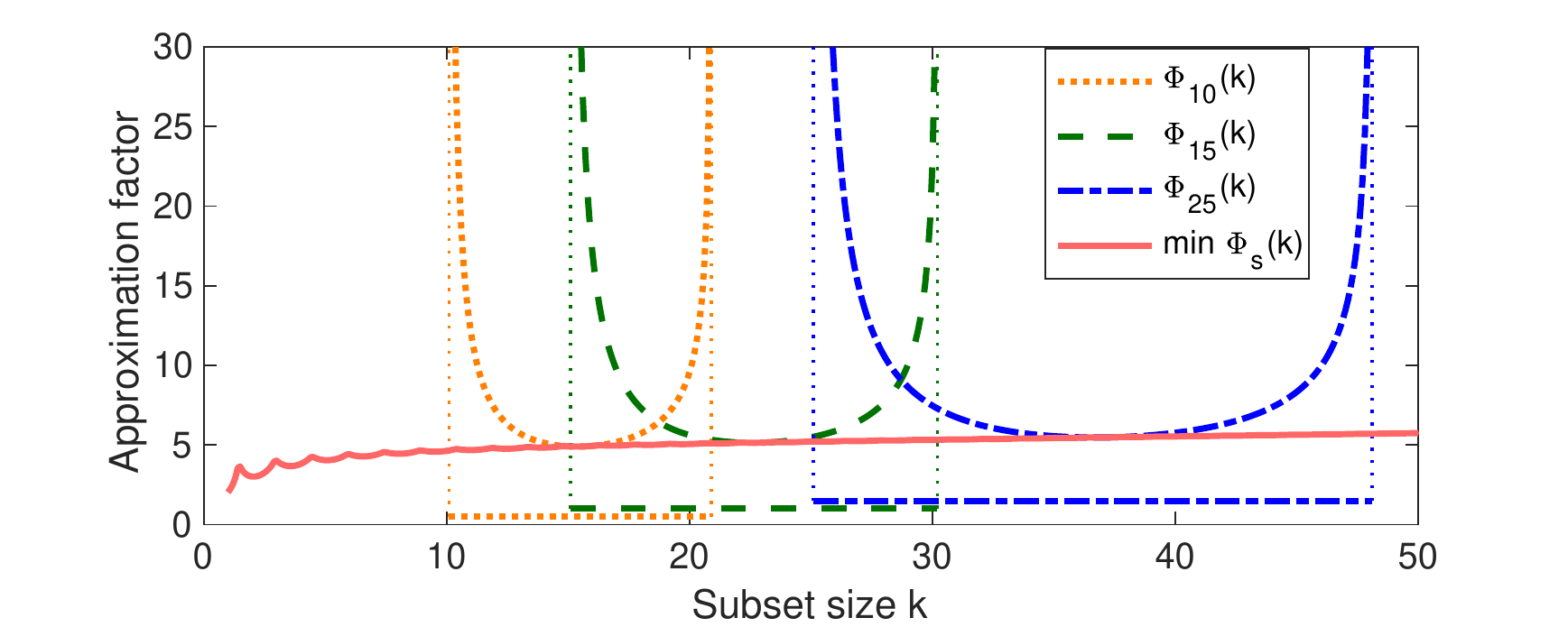}
\includegraphics[width=0.6\textwidth]{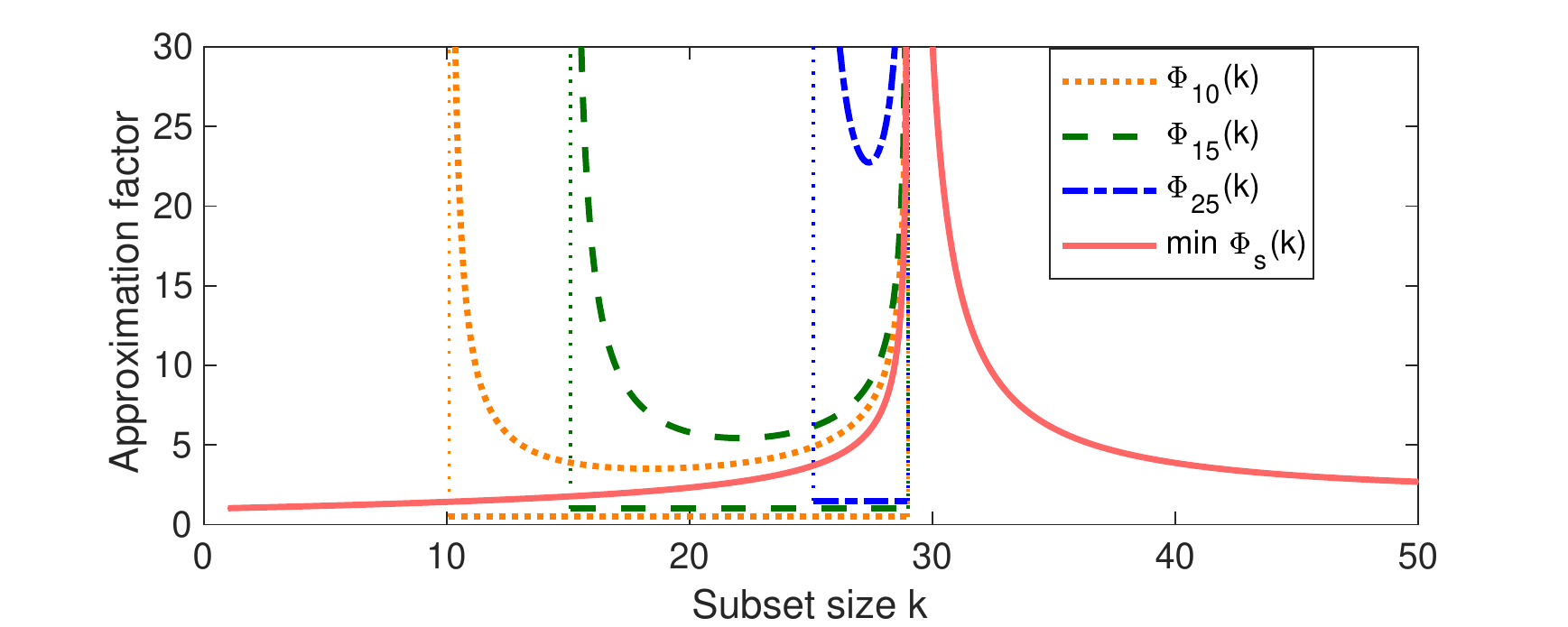}
\else  
\hspace{-8mm}
\includegraphics[width=0.54\textwidth]{figs/nystrom/sliding}\hspace{-5mm}
\includegraphics[width=0.54\textwidth]{figs/nystrom/sliding-peak}
  \fi
  \caption{Illustration of the upper bound functions $\Phi_s(k)$ for
    different values of $s$, with a $200\times 200$ matrix $\A$ such
    that the $i$th eigenvalue of $\A^\top\A$ is set to:
    \ifisarxiv(top)\else(left)\fi\
    $1/i$;
    \ifisarxiv(bottom)\else(right)\fi\
    $1$ for $i<30$ and $0.01$ for $i\geq 30$. For each
    function, we marked the window of applicable $k$'s with a
    horizontal line. For polynomial spectral decay
    \ifisarxiv(top)\else(left)\fi
    , the stable rank
    $\sr_s(\A)$ (i.e., the width of the window starting at $s$)
    increases, while for the sharp spectrum drop
    \ifisarxiv(bottom)\else(right)\fi\
    the stable
    rank shrinks as the window approaches the drop, causing a peak in
    the upper~bound.}\vspace{-2mm}
  \label{f:sliding}
\end{figure}

We now demonstrate how Theorem \ref{t:upper} can be used as the Master Theorem to derive new
bounds on the CSSP approximation factor under additional assumptions
on the singular value decay of matrix $\A$. Rather than a single upper
bound, Theorem \ref{t:upper} provides a family of upper bounds
$\Phi_s$, each with a range of applicable values $k$. Since each
$\Phi_s(k)$ forms a U-shaped curve, its smallest point falls near the
middle of that range. In Figure \ref{f:sliding} we visualize these bounds as a sliding
window that sweeps across the axis representing possible subset
sizes. The width of the window varies: when it starts at $s$ 
then its width is the stable rank $\sr_s(\A)$. The wider the window,
the lower is the valley of the corresponding U-curve. Thus, when
bounding the approximation factor for a given $k$, we should choose
the widest window such that $k$ falls near the bottom of its
U-curve. Showing a guarantee that holds for all $k$ requires lower-bounding the
stable ranks $\sr_s(\A)$ for each $s$. This is straightforward for
both polynomial and exponential decay. Specifically, using the
notation from Theorem \ref{t:decay}, in Appendix \ref{a:decay} we
prove that: 
\begin{align*}
  \sr_s(\A) =
  \begin{cases}
    \Omega(s/p), &\text{for polynomial rate  $\lambda_i\asymp1/i^p$},\\
    \Omega(1/\delta),&\text{for exponential rate $\lambda_i\asymp(1-\delta)^i$}.
    \end{cases}
  \end{align*}
As an example, Figure \ref{f:sliding} (left) shows that the stable rank $\sr_s(\A)$, i.e.,
the width of the window starting at $s$, grows linearly with $s$ for eigenvalues
decaying polynomially with $p=1$. As a result, the bottom of each
U-shaped curve remains at roughly the same level, making the CSSP
approximation factor independent of $k$, as in Theorem
\ref{t:decay}. In contrast, Figure \ref{f:sliding} (right) provides
the same plot for a different matrix $\A$ with a sharp drop in the spectrum. The U-shaped
curves cannot slide smoothly across that drop because of the
shrinking stable ranks, which results in a peak similar to the ones
observed in Figure \ref{f:intro}.

 \section{Lower bound}\label{s:lower}
 As discussed in the previous section, our upper bounds for the CSSP
 approximation factor exhibit a peak (a high point,
 with the bound decreasing on either side) around a
 subset size $k$ when there is a sharp drop in the spectrum of $\A$
 around the $k$th singular value. It is natural to ask whether this
 peak is an artifact of our analysis, or a property of the k-DPP
 distribution, or whether even optimal CSSP subsets exhibit this 
 phenomenon. In this section, we extend a lower
 bound construction of \citet{pca-volume-sampling} and use it to show
 that for certain matrices the approximation factor of the optimal CSSP subset, i.e.,
 $\min_{|S|=k}\Er_\A(S)/\opt$, can exhibit not just one but any
 number of peaks as a function of $k$, showing that the
 multiple-descent curve in Figure \ref{f:intro} describes a
real phenomenon in the CSSP.

The lower bound construction of \citet{pca-volume-sampling} relies on
arranging the column vectors of a $(k+1)\times (k+1)$ matrix $\A$ into a
centered symmetric $k$-dimensional simplex. This way, the $k+1$
columns are spanning a $k$ dimensional subspace which contains the $k$
leading singular vectors of $\A$. They then proceed to shift the
columns slightly in the direction orthogonal to that subspace so that
the $(k+1)$st singular value of $\A$ becomes non-zero. This results in
an instance of the CSSP with a sharp drop in the spectrum. Due
to the symmetry in this construction, all subsets of size $k$ have an identical squared
projection error. It is easy to show that this error satisfies $\Er_\A(S)\geq (1-\delta)(k+1)\opt$,
where $\delta$ is a parameter which depends on the condition number of
matrix $\A$ and it can be driven arbitrarily close to $0$. Another
variant of this construction was also provided 
by \citet{more-efficient-volume-sampling}. The key limitation of both
of these constructions is that they only provide a lower bound for a
single subset size $k$ in a given matrix, whereas our goal is to show
that the CSSP can exhibit the multiple-descent curve, which requires
lower bounds for multiple different values of $k$ holding with respect
to the same matrix $\A$.

Our strategy for constructing the lower bound matrix is to concatenate
together multiple sets of columns, each of which
represents a simplex spanning some subspace of $\R^m$. The key
challenge that we face in this approach is that, unlike in
the construction of \citet{pca-volume-sampling}, different subsets of
the same size will have different projection errors. 
\begin{lemma}\label{l:lower}
Fix $\delta\in(0,1)$ and consider unit vectors
$\a_{i,j}\in\R^m$ in general position, where $i\in[t]$, $j\in[l_i]$, such that
$\sum_j\a_{i,j}=0$ for each $i$, and for any $i,j,i',j'$, if $i\ne i'$
then $\a_{i,j}$ is orthogonal to $\a_{i'\!,j'}$. Also, let unit vectors
$\{\v_i\}_{i\in[t]}$ be orthogonal to  each other and to all
$\a_{i,j}$. There are positive scalars $\alpha_{i},\beta_i$ for $ i \in [t]$
such that matrix $\A$ with columns 
$\alpha_{i}\a_{i,j}+\beta_i\v_i$ over all $i$ and $j$ satisfies:
\begin{align*}
  \min_{|S|=k_i}\frac{\Er_\A(S)}{\mathrm{OPT}_{k_i}}\geq
  (1-\delta)l_i,\quad\text{for }k_i=l_1+...+l_i-1.
\end{align*}
\end{lemma}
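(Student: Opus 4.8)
The plan is to exploit the orthogonality between groups to reduce the whole computation to a per‑group calculation, and then to choose the scale factors $\alpha_i,\beta_i$ so that the groups become spectrally ``nested'' with huge gaps, forcing every near‑optimal subset to have a prescribed combinatorial shape. Put $W_i := \Span\{\a_{i,j}\}_j\oplus\Span\{\v_i\}$; by general position and $\sum_j\a_{i,j}=0$ we have $\dim\Span\{\a_{i,j}\}_j=l_i-1$, so $\dim W_i=l_i$ and the $W_i$ are mutually orthogonal. Hence the error decomposes, $\Er_\A(S)=\sum_i\Er^{(i)}(S\cap\text{group }i)$, where $\Er^{(i)}$ is the squared projection error inside $W_i$. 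Writing $\cb_{i,j}:=\alpha_i\a_{i,j}+\beta_i\v_i$ for the columns of group $i$, its Gram matrix equals $\alpha_i^2 G_i+\beta_i^2\one\one^\top$ with $G_i$ the Gram matrix of $\{\a_{i,j}\}_j$; since $G_i\one=0$, the eigenvalues of group $i$ are $l_i\beta_i^2$ (the ``small'' one) together with $\alpha_i^2\mu_{i,1},\dots,\alpha_i^2\mu_{i,l_i-1}$, where $\mu_{i,\cdot}>0$ are the positive eigenvalues of $G_i$; write $\mu_{i,\min}:=\min_j\mu_{i,j}>0$. Note also $l_i\ge 2$ (unit vectors summing to zero).

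Fix a small $\rho\in(0,1)$ and set $\alpha_i=\rho^{2i-2}$, $\beta_i=\rho^{2i-1}$, so that $\beta_i^2/\alpha_i^2=\rho^2$, $\alpha_{i+1}^2/\beta_i^2=\rho^2$ and $E_i/\beta_i^2=\Theta(\rho^2)$, where $E_i:=\sum_{i'>i}l_{i'}(\alpha_{i'}^2+\beta_{i'}^2)$. For $\rho$ small this guarantees (i) inside each group the small eigenvalue is the least one, and (ii) every eigenvalue of group $i{+}1$ lies below the small eigenvalue of group $i$. Consequently the $n-k_i=(\sum_{i'>i}l_{i'})+1$ smallest squared singular values of $\A$ are all those of groups $i{+}1,\dots,t$ together with the small eigenvalue of group $i$, so $\mathrm{OPT}_{k_i}=l_i\beta_i^2+E_i$.

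The heart of the argument is to pin down the optimal subset. Inside one group, keeping all $l_{i'}$ columns gives zero error; keeping $l_{i'}-1$ of them and omitting column $j_0$ gives error $e_{i'}(j_0):=\mathrm{dist}^2\big(\cb_{i',j_0},\Span\{\cb_{i',j}\}_{j\ne j_0}\big)$ (the remaining $l_{i'}-1$ columns being independent); keeping at most $l_{i'}-2$ gives error at least the sum of the two smallest eigenvalues of group $i'$, hence $\ge\alpha_{i'}^2\mu_{i',\min}$. Since $k_i=\big(\sum_{i'\le i}l_{i'}\big)-1$ falls one short of covering groups $1,\dots,i$ completely, and since a one‑column deficit in group $i'$ costs $\approx l_{i'}^2\beta_{i'}^2=l_{i'}^2\rho^{4i'-2}$ while a two‑column (or larger) deficit costs $\gtrsim\alpha_{i'}^2\mu_{i',\min}=\Theta(\rho^{4i'-4})$ — both $\gg l_i^2\rho^{4i-2}$ when $i'\le i$ and $\rho$ is small — every allocation of the $k_i$ indices among the groups other than ``all of groups $1,\dots,i-1$, exactly $l_i-1$ columns of group $i$, nothing from later groups'' has strictly larger error once $\rho$ is small enough. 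Therefore $\min_{|S|=k_i}\Er_\A(S)\ge\min_{j_0}e_i(j_0)+E_i$.

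It remains to evaluate $e_i(j_0)$. Adding the group‑$i$ span element $\sum_{j\ne j_0}\cb_{i,j}=-\alpha_i\a_{i,j_0}+(l_i-1)\beta_i\v_i$ to $\cb_{i,j_0}$ leaves $l_i\beta_i\v_i$, so $e_i(j_0)=l_i^2\beta_i^2\cdot\mathrm{dist}^2\big(\v_i,\Span\{\cb_{i,j}\}_{j\ne j_0}\big)$. As $\rho\to0$ the subspace $\Span\{\cb_{i,j}\}_{j\ne j_0}=\Span\{\a_{i,j}+\rho\v_i\}_{j\ne j_0}$ converges to $\Span\{\a_{i,j}\}_j$, which is orthogonal to $\v_i$, so by continuity of the projection onto an $(l_i-1)$‑dimensional subspace $\mathrm{dist}^2\big(\v_i,\Span\{\cb_{i,j}\}_{j\ne j_0}\big)\to 1$, uniformly over the finitely many $j_0$. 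Combining the last three displays,
\[
\frac{\min_{|S|=k_i}\Er_\A(S)}{\mathrm{OPT}_{k_i}}\ \ge\ \frac{l_i^2\beta_i^2\,(1-o(1))+E_i}{l_i\beta_i^2+E_i}\,,
\]
whose right‑hand side tends to $l_i$ as $\rho\to0$ because $E_i=\Theta(\rho^2)\beta_i^2=o(\beta_i^2)$. Hence a small enough $\rho$ makes the ratio $\ge(1-\delta)l_i$ simultaneously for all $i\in[t]$ (finitely many constraints), which proves the lemma. I expect the main obstacle to be the third step: verifying that the displayed allocation genuinely minimizes $\Er_\A(S)$ over \emph{all} subsets of size $k_i$, not merely among the obvious competitors — this is precisely what dictates the geometric separation $\alpha_1\gg\beta_1\gg\alpha_2\gg\cdots$ and requires bookkeeping the error of every way of splitting the budget $k_i$ among the $t$ groups.
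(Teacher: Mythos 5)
Your proposal is correct and follows essentially the same route as the paper's proof: block-diagonalize $\A^\top\A$ via the orthogonality of the groups, enforce the scale separation $\alpha_1\gg\beta_1\gg\alpha_2\gg\cdots$ so that the optimal size-$k_i$ subset must take all of groups $1,\dots,i-1$ and all but one column of group $i$, use the centroid identity $\sum_j\cb_{i,j}=l_i\beta_i\v_i$ to identify the omitted column's residual with $l_i^2\beta_i^2\|\v_i-\P\v_i\|^2$, and let the scales shrink so this tends to $l_i\cdot\mathrm{OPT}_{k_i}$. Your explicit geometric choice $\alpha_i=\rho^{2i-2}$, $\beta_i=\rho^{2i-1}$ and the spelled-out exchange argument (one-column vs.\ two-column deficits) make rigorous a step the paper only asserts "without loss of generality," but the underlying construction and estimates are the same.
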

\begin{proofof}{Theorem}{\ref{t:lower}}
We let $l_1 = k_1+1$ and then for $i>1$ we set $l_i=k_i-k_{i-1}$. We
then construct the vectors $\a_{i,j}$ that satisfy Lemma \ref{l:lower} 
by letting each set $\{\a_{i,j}\}_{j}$ be the corners of a centered
$(l_i-1)$-dimensional regular simplex. We ensure that each
simplex is orthogonal to every other simplex by placing them in
orthogonal subspaces. 
\end{proofof}

\section{Empirical evaluation}
\label{s:experiments}
\vspace{-2mm}

In this section, we provide an empirical evaluation designed to demonstrate how our improved guarantees for the CSSP and Nystr\"om method, as well as the multiple-descent phenomenon, can be easily observed on real datasets. 
We use a standard experimental setup for data subset selection using
the Nystr\"om method \citep{revisiting-nystrom}, where an $n\times n$
kernel matrix $\K$ for a dataset of size $n$ is defined so that the
entry $(i,j)$ is computed using the Gaussian Radial Basis Function  (RBF)
kernel:
$\langle\a_i,\a_j\rangle_\text{K}=\exp(-\|\a_i\!-\!\a_j\|^2/\sigma^2)$,
where $\sigma$ is a free parameter. 
We are particularly interested in the effect of varying $\sigma$.
Nystr\"om subset selection is performed using $S\sim k$-$\DPP(\K)$
(Definition \ref{d:dpp}), and we plot the expected approximation
factor $\E[\|\K-\Kbh(S)\|_*]/\opt$ (averaged over 1000 runs), where
$\Kbh(S)$ is the Nystr\"om approximation of $\K$ based on the subset
$S$ (see Section~\ref{s:nystrom}), $\|\cdot\|_*$ is the trace norm,
and $\opt$ is the trace norm error of the best rank $k$
approximation. Additional experiments, using greedy selection instead 
of a k-DPP, are in Appendix~\ref{a:greedy}.
As discussed in Section \ref{s:nystrom}, this task is
equivalent to the CSSP task defined on the matrix $\A$ such that
$\K=\A^\top\A$.

The aim of our empirical evaluation is to verify the following two claims motivated by our theory (and to illustrate that doing so is as easy as varying the RBF parameter $\sigma$):
\begin{enumerate}
  \item When the spectral decay is sufficiently slow/smooth, the
    approximation factor for CSSP/Nystr\"om is much better than
    suggested by previous worst-case bounds.
  \item  A drop in spectrum around the $k$th eigenvalue results in
    a peak in the approximation factor near
    subset size~$k$. Several drops result in the
    multiple-descent curve.
  \end{enumerate}
In Figure \ref{f:rbf} (top), we plot the approximation factor against
the subset size $k$ (in the range of 1 to 40) for an artificial toy
dataset and 
for two benchmark regression datasets from the Libsvm repository
\citep[\emph{bodyfat} and \emph{eunite2001}, see][]{libsvm}. 
The toy dataset is constructed by scaling the eigenvalues of a random
$50\times 50$ Gaussian matrix so that the spectrum is flat with a
single drop at the $21$-st eigenvalue. 
For each dataset, in Figure~\ref{f:rbf} (bottom), we also show the top
40 eigenvalues of the kernel $\K$ in decreasing order. 
For the toy dataset, to maintain full control over the spectrum we use
the linear kernel $\langle\a_i,\a_j\rangle_\text{K}=\a_i^\top\a_j$, and we show results
for three different values of the condition number $\kappa$ of kernel
$\K$.  
For the benchmark datasets, we show results on the RBF kernel with 
three different values of the parameter~$\sigma$.  

\begin{figure*}[t]
   \centering
  \ifisarxiv
  \includegraphics[width=0.345\textwidth]{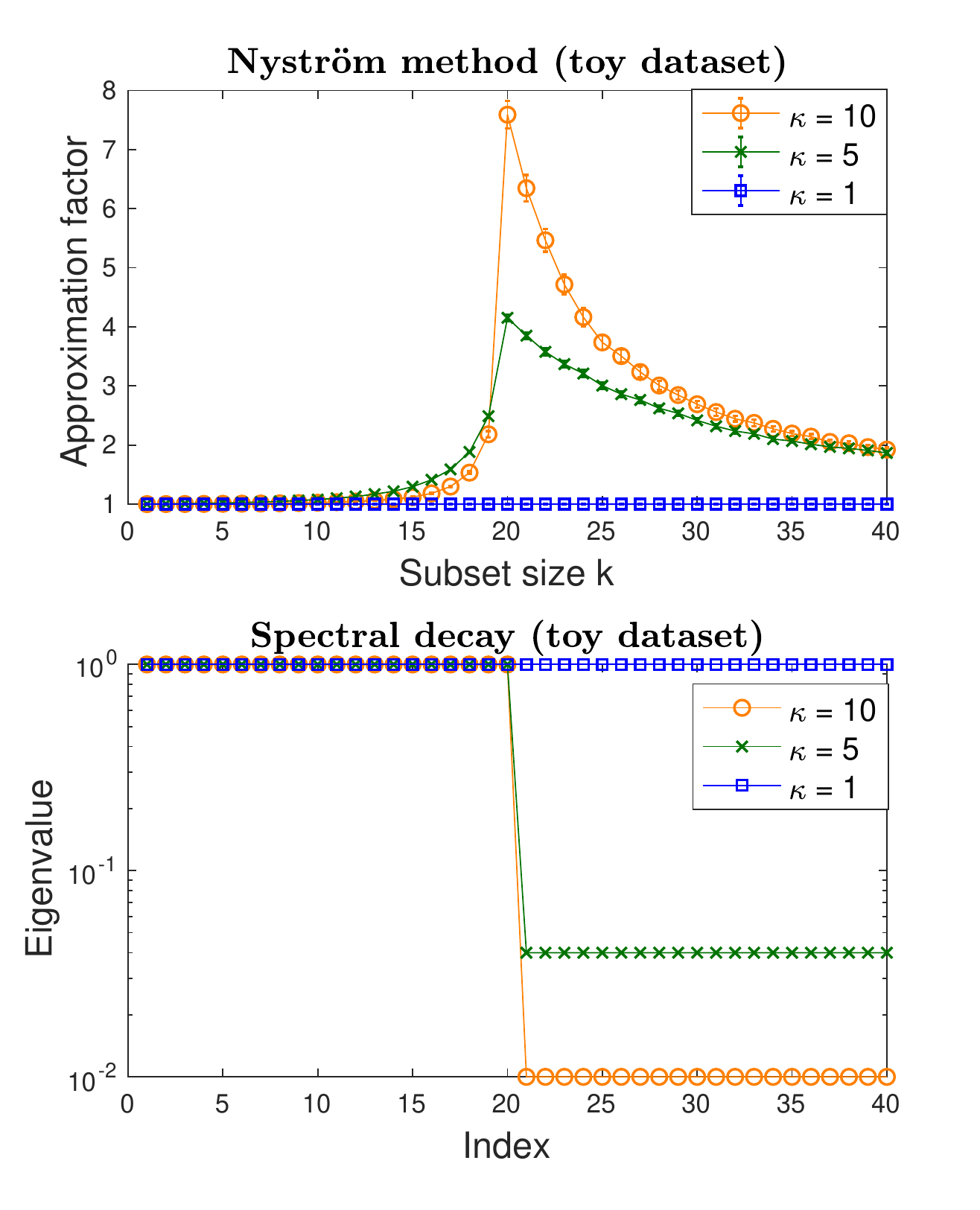}
   \hspace{-6mm}
  \includegraphics[width=0.345\textwidth]{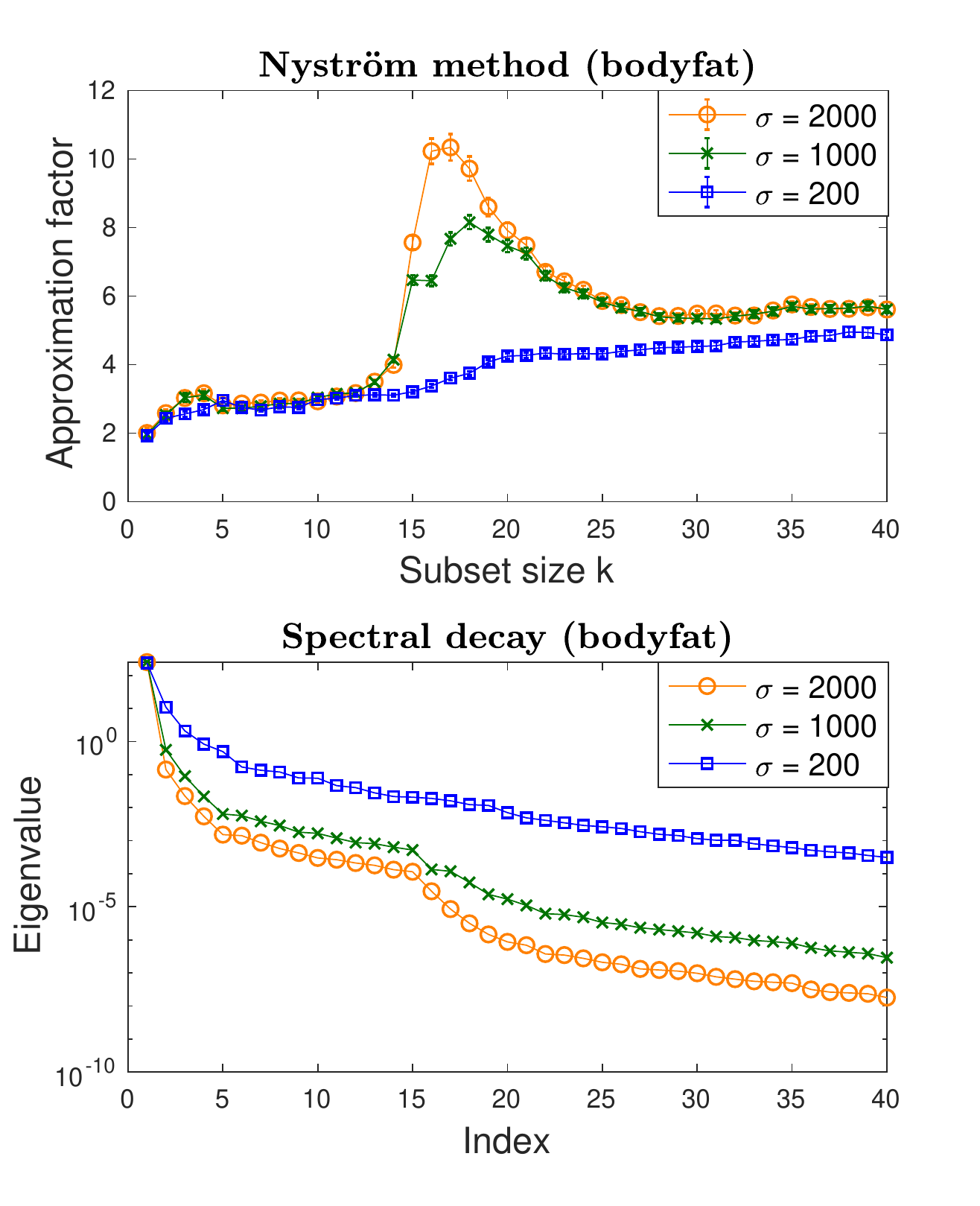}
  \hspace{-6mm}
  \includegraphics[width=0.345\textwidth]{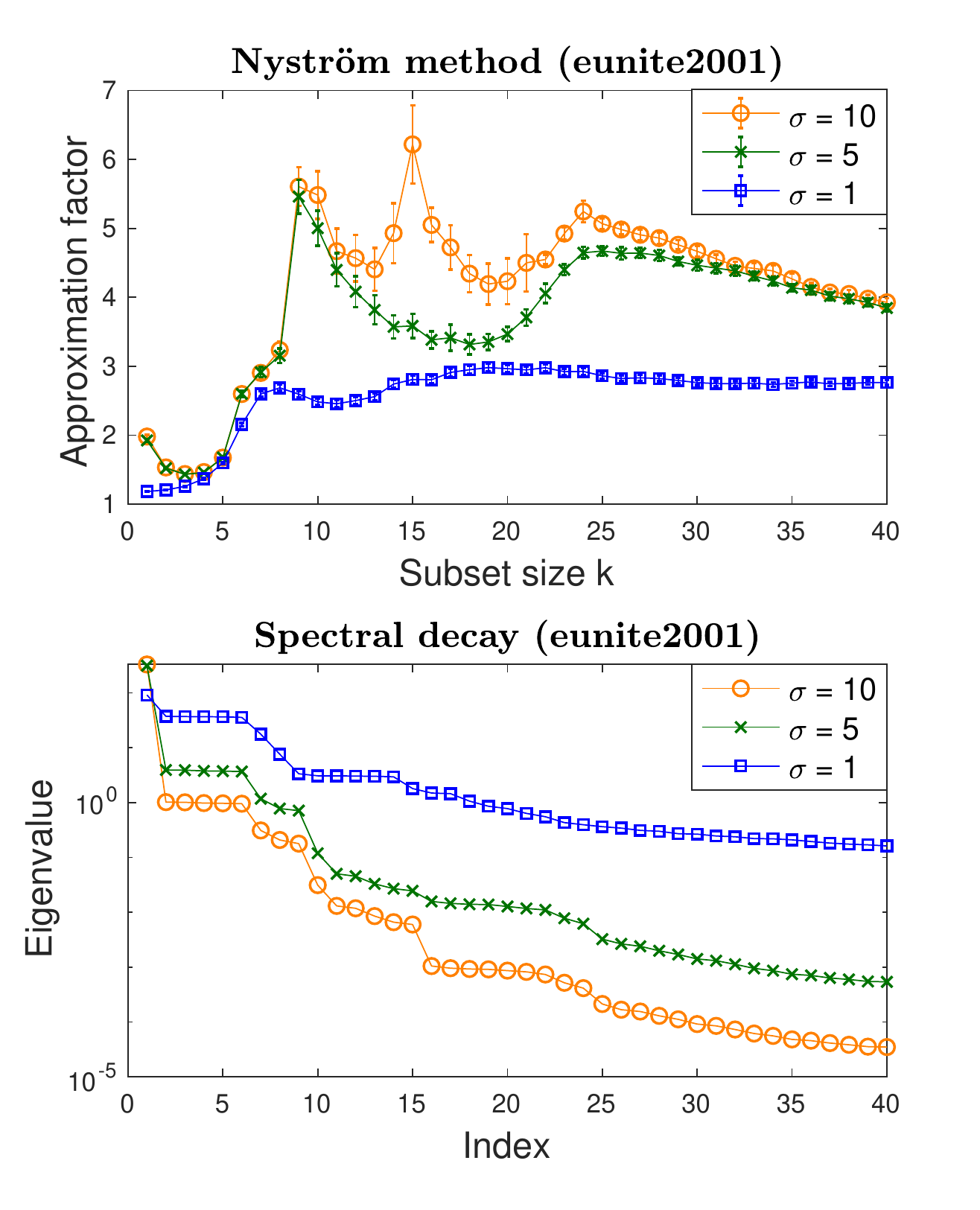}
  \vspace{-5mm}
  \else
    \includegraphics[width=0.35\textwidth]{figs/nystrom/rbf-toy-double}
   \hspace{-6mm}
  \includegraphics[width=0.35\textwidth]{figs/nystrom/rbf-bodyfat-double}
  \hspace{-6mm}
  \includegraphics[width=0.35\textwidth]{figs/nystrom/rbf-eunite2001-double}
  \vspace{-4mm}
  \fi
  \caption{Top three plots show the Nystr\"om approximation factor $\E[\|\K-\Kbh(S)\|_*]/\opt$,
    where $S\sim k$-$\DPP(\K)$ (experiments using greedy selection
    instead of a k-DPP are in Appendix \ref{a:greedy}), for a toy dataset
    ($\kappa$ is the condition number) and two
    Libsvm datasets ($\sigma$ is the RBF parameter). Error bars show three times the standard
    error  of the mean over 1000 trials. Bottom three plots show the spectral decay for  the
  top $40$ eigenvalues  of each kernel $\K$. Note that the
  peaks in the approximation factor align with the drops in
  the spectrum.}
\vspace{-2mm}
  \label{f:rbf}
\end{figure*}

Examining the toy dataset (Figure~\ref{f:rbf}, left), it is apparent
that a larger drop in spectrum leads to a sharper peak in the
approximation factor as a function of the subset size~$k$, whereas a flat spectrum results in  
the approximation factor being close to $1$. A similar trend is observed for
dataset \emph{bodyfat} (Figure~\ref{f:rbf}, center), where large
parameter $\sigma$ results in a peak that is aligned with a spectrum
drop, while 
decreasing $\sigma$ makes the spectrum flatter and the factor closer to
$1$. Finally, dataset \emph{eunite2001} (Figure~\ref{f:rbf}, right) exhibits a full
multiple-descent curve with up to three peaks for large values of
$\sigma$, and the peaks are once again aligned with the spectrum
drops. Decreasing $\sigma$ gradually eliminates the peaks, 
resulting in a uniformly small approximation factor. Thus, both of our theoretical
claims can easily be verified on this dataset simply by adjusting the RBF parameter.

While the right choice of the parameter $\sigma$ ultimately depends on
the downstream machine learning task, it has been observed that
varying $\sigma$ has a pronounced effect on the spectral properties of
the kernel matrix, \citep[see,
e.g.,][]{revisiting-nystrom,LBM16_TR,wang2019block}.  
The main takeaway from our results here is that, depending on the
structure of the problem, we may end up in the regime where the
Nystr\"om approximation factor exhibits a multiple-descent curve
(e.g., due to a hierarchical nature of the data) or in the regime
where it is relatively flat.

\section{Conclusions and open problems}
\label{s:conclusions}
We derived new guarantees for the Column Subset Selection Problem (CSSP) and
the Nystr\"om method, going beyond worst-case analysis by exploiting the
structural properties of a dataset, e.g., when the spectrum exhibits a
known rate of decay. Our upper and lower bounds
for the CSSP/Nystr\"om approximation factor reveal an intruiguing phenomenon we
call the multiple-descent curve: the approximation factor can exhibit a highly
non-monotonic behavior as a function of $k$, with multiple
peaks and valleys. These observations suggest a possible connection to the
double descent curve exhibited by
the generalization error of many machine learning models (see
Appendix~\ref{s:related-work} for a more in depth discussion of
similarities and differences between the two phenomena).

Our analysis technique relies on converting an error bound
from random-size DPPs to fixed-size k-DPPs, which results in an
additional constant factor of $(1+2\epsilon)^2$ in Theorem~\ref{t:upper}.
We put forward a
conjecture which would eliminate this factor from
Theorem~\ref{t:upper} and is of independent interest to the study of 
elementary symmetric polynomials, a classical topic in combinatorics
\citep{HLP-Inequalities}. 
\begin{conjecture}\label{c:convex}
  The following function is \underline{convex} with respect to $k\in[n]$ for any
$\lambda_1,...,\lambda_n> 0$:\vspace{-.5mm}
 \[f(k)=(k+1)\frac{\sum_{S:|S|=k+1}\prod_{i\in S}\lambda_i}{\sum_{S:|S|=k}\prod_{i\in S}\lambda_i}.\]
\end{conjecture}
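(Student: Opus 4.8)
}
The natural first move is to recast $f$ as data attached to a single real-rooted polynomial. With $p(x)=\prod_{i=1}^n(1+\lambda_i x)=\sum_k e_k(\lambda)\,x^k$, whose roots all lie in $(-\infty,0)$, one has $p^{(k)}(0)=k!\,e_k$ and hence
\[
 f(k)\;=\;(k+1)\frac{e_{k+1}}{e_k}\;=\;\frac{p^{(k+1)}(0)}{p^{(k)}(0)}\;=\;\sum_{j}\frac{1}{\beta_j^{(k)}},
\]
the last sum running over the (real, negative) zeros $-\beta_j^{(k)}$ of $p^{(k)}$. Two preliminary remarks: $f$ is continuous and homogeneous of degree one in $\lambda$, so it is enough to treat, say, distinct positive $\lambda_i$; and $f$ is automatically \emph{nonincreasing}, since Laguerre's inequality $(q')^2\ge q\,q''$ for the real-rooted polynomial $q=p^{(k)}$, evaluated at $x=0$, reads $p^{(k+1)}(0)^2\ge p^{(k)}(0)\,p^{(k+2)}(0)$, i.e.\ $f(k)\ge f(k+1)$. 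Thus the conjecture asks that this nonincreasing sequence be, moreover, convex -- a ``four-term'' statement genuinely beyond the pairwise log-concavity supplied by Newton's and Laguerre's inequalities.

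I would attack convexity by induction on $n$, treating $\mu:=\lambda_{n+1}\ge 0$ as a deformation parameter via $e_k^{(n+1)}=e_k^{(n)}+\mu\,e_{k-1}^{(n)}$, so that
\[
 f_{n+1}(k;\mu)\;=\;(k+1)\,\frac{e_{k+1}^{(n)}+\mu\,e_k^{(n)}}{e_k^{(n)}+\mu\,e_{k-1}^{(n)}}.
\]
The two ends of the $\mu$-range are controlled by the inductive hypothesis. As $\mu\to 0^+$, $f_{n+1}(\cdot\,;\mu)$ tends to $f_n$ with an extra zero appended at $k=n+1$ (and $f_n(n)=0$ already), which is still convex. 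As $\mu\to\infty$, $f_{n+1}(k;\mu)\to(1+\tfrac1k)\,f_n(k-1)$ on $k\in\{1,\dots,n+1\}$ (with $f_{n+1}(0;\mu)\to\infty$, which only helps convexity at $k=1$), and this limit is convex because it is the pointwise product of the nonnegative, nonincreasing, convex sequence $1+\tfrac1k$ with the sequence $f_n(k-1)$, which is nonnegative (trivially), nonincreasing (Laguerre) and convex (inductive hypothesis) -- and a pointwise product of two nonnegative, nonincreasing, convex sequences is convex, by an elementary check.

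The real work, and where I expect the main obstacle, is the interior $\mu\in(0,\infty)$. Clearing the positive denominator $e_{k-1}^{(n+1)}e_k^{(n+1)}e_{k+1}^{(n+1)}$ in the second difference $f_{n+1}(k-1;\mu)-2f_{n+1}(k;\mu)+f_{n+1}(k+1;\mu)$ leaves a polynomial $N(\mu)$ of degree at most three, with coefficients polynomial in $e_0^{(n)},\dots,e_n^{(n)}$; the $\mu\to0$ and $\mu\to\infty$ reductions say exactly that $N(0)\ge0$ and that the leading coefficient of $N$ is $\ge0$. But a cubic can dip below zero between a nonnegative value at $0$ and a nonnegative leading coefficient, so this is not enough: one must show that $N$ has no root in $(0,\infty)$, equivalently that all real roots of $N$ are $\le0$. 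This cannot follow from Newton's inequalities alone -- already for $n=3$ there are sequences $(a_0,\dots,a_3)$ (e.g.\ $(1,\sqrt{7/2},1,\varepsilon)$ for small $\varepsilon>0$) that satisfy every Newton two-term inequality $a_k^2\ge\tfrac{(k+1)(n-k+1)}{k(n-k)}a_{k-1}a_{k+1}$ yet make $(k{+}1)a_{k+1}/a_k$ non-convex and are not realized by any real-rooted polynomial -- so the argument must genuinely use real-rootedness. The cleanest way to supply it would be a sharpened three- or four-term ``higher Newton'' inequality for real-rooted polynomials (equivalently, a Hankel-type positivity statement for the sequence $p^{(k)}(0)$), which is precisely the kind of elementary-symmetric-polynomial inequality the conjecture advertises as of independent interest.

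A complementary, more computational route bypasses the induction: clearing denominators in $f(k-1)-2f(k)+f(k+1)\ge0$ gives a symmetric-polynomial inequality in $\lambda_1,\dots,\lambda_n$ that one would try to certify as a manifestly nonnegative combination -- for instance a sum of products of Newton slack terms $e_j^2-\tfrac{(j+1)(n-j+1)}{j(n-j)}e_{j-1}e_{j+1}$ with monomials in the $e_j$'s or the $\lambda_i$'s -- the decomposition found empirically and then verified symbolically; alternatively one could try to place the statement inside the theory of real-stable or Lorentzian polynomials, where convexity properties of ratios of consecutive elementary symmetric functions are natural. I would try the induction first, since it isolates the difficulty in the single cubic $N(\mu)$ and pinpoints exactly which real-rootedness input is missing.
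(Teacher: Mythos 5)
This statement is left open in the paper: it is stated as Conjecture~\ref{c:convex} precisely because the authors do not have a proof, so there is no in-paper argument to compare against. Your proposal does not close it either, and you say so yourself: the entire content of the conjecture is concentrated in the interior of your $\mu$-deformation, namely showing that the cubic $N(\mu)$ obtained by clearing denominators in the second difference of $f_{n+1}(\cdot\,;\mu)$ has no root in $(0,\infty)$. Knowing $N(0)\ge 0$ and that the leading coefficient is nonnegative (your two boundary limits) does not prevent a cubic from dipping negative on $(0,\infty)$, and you supply no mechanism---no identity, no strengthened inequality, no root-interlacing argument---that would rule this out. So the proposal is an attack plan with an honestly flagged hole, not a proof, and the conjecture remains open.

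That said, the parts you do establish are correct and worth recording. The identity $f(k)=p^{(k+1)}(0)/p^{(k)}(0)$ with $p(x)=\prod_i(1+\lambda_i x)$ is right, and your Laguerre-inequality derivation of monotonicity, $f(k)\ge f(k+1)$, is sound; it recovers (in fact is slightly weaker than) what the paper proves in Lemma~\ref{l:monotonic} via Newton's inequalities, where the ratio $f(k)/f(k-1)$ is bounded by $\frac{n-k}{n+1-k}$. Your $\mu\to\infty$ limit $(1+\tfrac1k)f_n(k-1)$ and the discrete product rule $\Delta^2(ab)_k=a_k\Delta^2 b+b_k\Delta^2 a+\Delta a_k\Delta b_k+\Delta a_{k-1}\Delta b_{k-1}$ showing that a product of nonnegative, nonincreasing, convex sequences is convex both check out. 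Most valuable is your observation that the sequence $(1,\sqrt{7/2},1,\varepsilon)$ satisfies all of Newton's two-term inequalities yet makes $(k+1)a_{k+1}/a_k$ non-convex: this correctly shows that any proof must exploit real-rootedness beyond log-concavity of the $e_k$, which explains why the obvious routes fail and is a genuine contribution to understanding the difficulty. But identifying where the difficulty lives is not the same as resolving it.
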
 

\citet{pca-volume-sampling} showed that if $S\sim
k$-$\DPP(\A^\top\A)$ and $\lambda_i$ are the eigenvalues of
$\A^\top\A$, then $\E[\Er_\A(S)]=f(k)$. If $f(k)$ is convex then
Jensen's inequality implies:
\begin{align*}
  \E[\Er_\A(S)]\leq \E[\Er_\A(S')]\quad\text{for }S'\!\sim\DPP(\tfrac1{\alpha_k}\A^\top\A),
\end{align*}
where $\alpha_k$ is chosen so that $\E[|S'|]=k$. This would allow us to use
the bound from Lemma \ref{l:upper} directly on a k-DPP
without relying on the concentration argument of
Lemma~\ref{l:concentration}, thereby improving the bounds in
Theorems \ref{t:upper} and \ref{t:decay}.

\subsection*{Acknowledgments}
We would like to acknowledge DARPA, IARPA (contract W911NF20C0035),
NSF, and ONR via its BRC on RandNLA for providing partial support of
this work.  Our conclusions do not necessarily reflect the position or
the policy of our sponsors, and no official endorsement should be
inferred.

\ifisarxiv\else
\section*{Broader Impact}
Our study offers a deeper theoretical understanding of the Column
Subset Selection Problem. The nature of our work is primarily
theoretical, with direct applications to feature selection and kernel
approximation, as we noted in Section \ref{s:intro}. The primary reason for
feature selection as a method for approximating a given matrix, as opposed to a low rank
approximation using an SVD, is interpretability, which is crucial in
many scientific disciplines. Our analysis shows that in many practical settings, feature
selection performs almost as well as SVD at approximating a matrix. As
such, our work makes a stronger case for feature selection, wherever
applicable, for the sake of interpretability. We also hope our work
motivates further research into a fine grained analysis to quantify if
machine learning problems are really as hard as worst-case bounds
suggest them to be.
\fi

\bibliographystyle{sty/icml2020/icml2020}
\bibliography{pap}

  \newpage
  \appendix
  \onecolumn

\section{Additional related works}
\label{s:related-work}
\underline{The Column Subset Selection Problem} is one of the most classical tasks
in matrix approximation~\citep{BoutsidisMD08}. The original version of the problem compares the
projection error of a subset of size $k$ to the best rank $k$
approximation error. The techniques used for finding good subsets have
included many randomized methods
\citep{pca-volume-sampling,BoutsidisMD08,dpp-for-cssp,boutsidis2014optimal}, as well as deterministic
methods \citep{GuE96}. Variants of these algorithms have also been extended to more general losses~\citep{chierichetti17algorithms, khanna2017scalable, elenberg2016restricted}. Later on, most works have relaxed the problem
formulation by allowing the number of selected columns $|S|$ to exceed
the rank $k$. These approaches include deterministic sparsification
based algorithms \citep{near-optimal-columns}, greedy selection
\citep[e.g.,][]{Bhaskara2016GreedyCSS} and randomized methods
\citep[e.g.,][]{Drineas08CUR,more-efficient-volume-sampling,Paul2015CSS}. Note that
we study the \emph{original} version of the CSSP (i.e., without the
relaxation), where the number of columns $|S|$ must be equal to the
rank $k$. 

\underline{The Nystr\"om method} has been given significant attention
independently of the CSSP. The guarantees most comparable to our
setting are due to \citet{belabbas-wolfe09}, who show the approximation
factor $k+1$ for the trace norm error. Many recent works allow the subset size $|S|$ to exceed
the target rank $k$, which enables the use of i.i.d.~sampling
techniques such as leverage scores \citep{revisiting-nystrom}
and ridge leverage scores \citep{ridge-leverage-scores,Musco17Nystrom}.
In addition to the trace norm error, these works consider other types
of guarantees, e.g., based on spectral and Frobenius norms, which are not 
as readily comparable to the CSSP error bounds.

\underline{The double descent curve} was introduced by \citet{BHMM19} to explain
the remarkable success of machine learning models which generalize
well despite having more parameters than training data. This research 
has been primarily motivated by the success of deep neural networks,
but double descent has also been observed in linear regression
\citep{belkin2019two,BLLT19_TR,surrogate-design} and other learning models.
Double descent is typically presented by plotting
the absolute generalization error as a function of the number of parameters used in the
learning model, although \citet{double-descent-condition} and
\citet{zhenyu2020double} showed that the 
behavior of generalization error is merely an artifact of the phase
transitions in the spectral properties of random
matrices. Importantly, although the descent curves we obtain are 
reminiscent of the above works, our setting is different in that it is
a \emph{deterministic} combinatorial 
optimization problem for \emph{relative} error. In particular,
Corollary~\ref{c:multiple-descent} shows that our multiple-descent
curve can 
occur as a purely deterministic property of the optimal CSSP
solution. Despite the differences, there are certain similarities
between the two settings, namely (a) the notion of stable rank we use
matches the one used by~\citet{BLLT19_TR}, (b) the peaks in both the
settings are closely aligned -- these peaks coincide with the size $k$
crossing the corresponding sharp drops in the respective spectra, (c)
the analysis of bias of the minimum norm solution for double descent
for linear regression under DPP sampling obtained
by~\citet{surrogate-design} leads to expressions very similar to ours
for the CSSP error for DPP sampling. 

\underline{Determinantal point processes} have been shown to provide
near-optimal guarantees not only for the CSSP but also other tasks in
numerical linear algebra, such as least squares regression
\citep[e.g.,][]{avron-boutsidis13,unbiased-estimates-journal,minimax-experimental-design}.
They are also used in recommender systems, stochastic optimization and other
tasks in machine learning \citep[for a review,
see][]{dpps-in-randnla,dpp-ml}. Efficient algorithms for sampling from these
distributions have been proposed both in the CSSP setting \citep[i.e.,
given matrix $\A$; see, e.g.,][]{efficient-volume-sampling,dpp-intermediate}
and in the Nystr\"om setting \citep[i.e., given kernel $\K$;
see, e.g.,][]{rayleigh-mcmc,dpp-sublinear}. The term ``cardinality constrained
DPP'' (also known as a ``k-DPP'' or ``volume sampling'') was introduced by \citet{k-dpp} to
differentiate from standard DPPs which have random cardinality.
Our proofs rely in part on converting DPP bounds to
k-DPP bounds via a refinement of the
concentration of measure argument used by
\citet{bayesian-experimental-design}.

\underline{Beyond worst-case analysis} of algorithms is crucial to understanding the often-noticed gap between practical performance and theoretical guarantees of these algorithms. However, there have been limited number of works in machine learning that undertake finer-grained studies for beyond worst-case analyses. We refer to~\citet{roughgarden2019beyond} for a recent survey of such studies for a few problems in machine learning.~\citet{Mah12} takes an alternative view and studies implicit statistical properties of worst case algorithms. 

\section{Determinantal point processes}
\label{s:dpp}

Since our main results rely on randomized subset selection via
determinantal point processes (DPPs), we provide a brief overview of
the relevant aspects of this class of distributions.
First introduced by \citet{Macchi1975}, a determinantal point process is a
probability distribution over subsets $S \subseteq[n]$, where we use $[n]$
to denote the set $\{1,...,n\}$. The relative probability of a subset being drawn is 
governed by a positive semidefinite (p.s.d.) matrix $\K \in \R^{n\times n}$,
as stated in the definition below, where we use $\K_{S,S}$ to denote
the $|S|\times |S|$ submatrix of $\K$ with rows and columns indexed
by $S$.
\begin{definition}\label{d:dpp}
  For an $n\times n$ p.s.d.~matrix $\K$, define $S\sim
  \DPP(\K)$ as a distribution 
  over all subsets $S\subseteq [n]$ so
  that
  \[\Pr(S)=\frac{\det(\K_{S,S})}{\det(\I+\K)}.\]
A restriction to subsets of size $k$ is denoted as $k$-$\DPP(\K)$.
\end{definition}
DPPs can be used to introduce diversity in the selected set or to
model the preference for selecting dissimilar items, where the
similarity is stated by the kernel matrix $\K$. DPPs are
commonly used in many machine learning applications where these
properties are desired, e.g.,~recommender systems~\citep{Warlop2019}, model
interpretation~\citep{kim:2016MMD}, text
and video summarization~\citep{dpp-video}, and others
\citep{dpp-ml}. They have also played an important role in randomized
numerical linear algebra \citep{dpps-in-randnla}.

Given a p.s.d.~matrix $\K \in \R^{n \times n}$ with eigenvalues
$\lambda_1,...\, \lambda_n$, the size of the set $S \sim \DPP(\K)$
is distributed as a Poisson binomial random variable, namely, the
number of successes in $n$ Bernoulli random trials where the
probability of success in the $i$th trial is given by 
$\frac{\lambda_i}{\lambda_i+1}$. This leads to a simple
expression for the expected subset size:
\begin{align}
\E[|S|] = \sum_i \frac{\lambda_i}{\lambda_i + 1} = \tr(\K(\I + \K)^{-1}).\label{eq:size}
\end{align}
Note that if $S\sim\DPP(\frac1\alpha\K)$, where $\alpha>0$, then
$\Pr(S)$ is proportional to $\alpha^{-|S|}\det(\K_{S,S})$, so rescaling the kernel
by a scalar only affects the distribution of the subset sizes, giving
us a way to set the expected size to a desired value (larger $\alpha$
means smaller expected size).
Nevertheless, it is still often preferrable to restrict the size of
$S$ to a fixed $k$, obtaining a $k$-$\DPP(\K)$ \citep{k-dpp}.

Both DPPs and k-DPPs can be sampled efficiently, with some of the
first algorithms provided by
\citet{dpp-independence},
\citet{efficient-volume-sampling}, \citet{k-dpp} and others. These
approaches rely on an eigendecomposition of the kernel $\K$, at the cost of $O(n^3)$. When
$\K=\A^\top\A$, as in the CSSP, and the dimensions satisfy $m\ll n$, then this can be improved
to $O(nm^2)$. More recently, algorithms that avoid computing the
eigendecomposition have been proposed
\citep{dpp-intermediate,dpp-sublinear,alpha-dpp,rayleigh-mcmc},
resulting in running times of $\Ot(n)$ when given matrix $\K$ and
$\Ot(nm)$ for matrix $\A$, assuming small desired subset size.
See \citet{dppy} for an efficient Python implementation of DPP sampling.

The key property of DPPs that enables our analysis is a
formula for the expected value of the random matrix that is the orthogonal projection onto the
subspace spanned by vectors selected by $\DPP(\A^\top\A)$.
In the special case when $\A$ is a square full rank matrix, the
following result can be derived as a corollary of Theorem 1 by
\citet{randomized-newton}, and a variant for DPPs over
continuous domains can be found as Lemma~8 of
\citet{surrogate-design}. For completeness, we also provide a proof in
Appendix~\ref{a:expectration-proj}.
\begin{lemma}\label{l:expectation_proj}
  For any $\A$ and $S\subseteq [n]$, let $\P_S$ be the  
projection onto the $\mathrm{span}\{\a_i:i\in S\}$. If $S\sim \DPP(\A^\top\A)$, then
\begin{align*}
  \E[\P_S] & = \A(\I+\A^\top\A)^{-1}\A^\top.
\end{align*}
\end{lemma}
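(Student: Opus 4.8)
The plan is to reduce the general case to the square full-rank case by an SVD argument, and to prove the square case by a direct DPP computation using the eigendecomposition. First I would write $\A = \U\boldsymbol\Sigma\V^\top$ in thin SVD form, so that $\P_S$, the projection onto $\mathrm{span}\{\a_i : i\in S\}$, depends only on the column span, and $\DPP(\A^\top\A)$ is unchanged if we replace $\A$ by $\boldsymbol\Sigma\V^\top$ (the kernel $\A^\top\A = \V\boldsymbol\Sigma^2\V^\top$ is the same, and the projections $\P_S$ are the same since $\U$ is an isometry on the column space). Thus it suffices to handle $\A$ of the form $\boldsymbol\Sigma\V^\top$ with $\V$ orthogonal and $\boldsymbol\Sigma$ diagonal positive on its rank, and by absorbing zero singular values it further suffices to treat $\A$ square and full rank, with the claimed identity $\E[\P_S] = \A(\I+\A^\top\A)^{-1}\A^\top$ holding on the relevant subspace and trivially (both sides zero) on its orthogonal complement.

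For the square full-rank case, I would use the standard fact that for $S\sim\DPP(\A^\top\A)$, writing $\L=\A^\top\A$, one has $\Pr(S) = \det(\L_{S,S})/\det(\I+\L)$, and that when $|S| = \rank(\A)$ the projection $\P_S = \A_S(\A_S^\top\A_S)^{-1}\A_S^\top$. The key algebraic identity is the Cauchy–Binet / cofactor expansion: for an invertible matrix $\A$ and any vector $\x$,
\begin{align*}
\sum_{S:\,|S| = |S|}\!\!\det(\A_S)^2\,\x^\top\P_S\,\x
\end{align*}
can be expanded so that $\E[\P_S]$ becomes a ratio of sums of principal minors. Concretely, I would compute $\E[\P_S]$ entrywise, or better, contract against test vectors, and recognize the resulting expression via the adjugate formula $\sum_S \det(\L_{S,S})\, (\text{stuff}) = $ a coefficient in $\det(\I+\L)$ or in $\adj(\I+\L)$. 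The cleanest route is: $\E[\P_S] = \sum_S \Pr(S)\,\P_S$, and using $\P_S = \I - \P_S^\perp$ together with the identity $\det(\I+\L) = \sum_S \det(\L_{S,S})$ and a perturbed version $\det(\I + \L - \L\e_j\e_j^\top\cdots)$, one extracts $\E[\P_S] = \I - (\I+\L)^{-1} = \L(\I+\L)^{-1} = \A^\top(\I+\A\A^\top)^{-1}\A$ in the square case — and then conjugating back through the SVD gives $\A(\I+\A^\top\A)^{-1}\A^\top$ in the original coordinates. Alternatively, invoke the cited Theorem~1 of \citet{randomized-newton} directly for the square full-rank reduction and only do the SVD reduction by hand.

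The main obstacle I anticipate is handling the rank-deficient / non-square case cleanly: one must be careful that $\DPP(\A^\top\A)$ only puts mass on subsets $S$ with $\A_S$ full column rank (equivalently $|S|\le\rank(\A)$ and the columns independent), that $\P_S$ for such $S$ projects onto a subspace of the fixed column space of $\A$, and that the identity $\A(\I+\A^\top\A)^{-1}\A^\top$ indeed acts as zero on the orthogonal complement of $\mathrm{col}(\A)$ and matches the square-case formula on $\mathrm{col}(\A)$. Tracking these support conditions and the pseudoinverse bookkeeping through the SVD substitution is the delicate part; the core determinantal identity for the square case is essentially a one-line consequence of cofactor expansion once set up correctly.
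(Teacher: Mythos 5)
Your core computational idea --- contract $\E[\P_S]$ against a test vector, turn $\v^\top\P_S\v$ into a ratio/difference of determinants via a rank-one update, and sum over subsets using $\sum_S\det(\K_{S,S})=\det(\I+\K)$ --- is exactly the engine of the paper's proof. But the scaffolding you wrap around it has a genuine gap. The reduction ``absorb zero singular values and treat $\A$ as square and full rank'' is not available when $\rank(\A)<n$: the DPP is defined over the index set $[n]$ with the $n\times n$ kernel $\A^\top\A$, and you cannot shrink that index set without changing the distribution. Replacing $\A$ by $\boldsymbol\Sigma\V^\top$ handles the left factor $\U$, but leaves you with an $r\times n$ matrix whose Gram matrix is still $n\times n$ and singular, so the square full-rank case (and hence the cited Theorem~1 of the \texttt{randomized-newton} reference) does not apply, and you are left exactly with the rank bookkeeping you flag as ``the delicate part'' but do not resolve. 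The paper shows this reduction is unnecessary: for every $S$ one has the identity $\det(\K_{S,S})\,\v^\top\P_S\v=\det(\K_{S,S}+\A_S^\top\v\v^\top\A_S)-\det(\K_{S,S})$, which holds \emph{uniformly} --- when $\K_{S,S}$ is invertible it is the matrix determinant lemma applied to $\P_S=\A_S\K_{S,S}^{-1}\A_S^\top$, and when $\K_{S,S}$ is singular both determinants vanish since $\A_S^\top(\I+\v\v^\top)\A_S$ has the same rank as $\A_S^\top\A_S$. Summing over all $S$ and applying $\sum_S\det(\M_{S,S})=\det(\I+\M)$ to both $\M=\K+\A^\top\v\v^\top\A$ and $\M=\K$ gives $\det(\I+\K)\,\v^\top\E[\P_S]\v=\det(\I+\K+\A^\top\v\v^\top\A)-\det(\I+\K)=\det(\I+\K)\,\v^\top\A(\I+\K)^{-1}\A^\top\v$, with no case analysis on the rank of $\A$ at all.

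A second, smaller issue: your intermediate identity $\E[\P_S]=\I-(\I+\L)^{-1}=\L(\I+\L)^{-1}$ with $\L=\A^\top\A$ conflates row-space and column-space coordinates. Even for square invertible $\A$, the correct statement is $\E[\P_S]=\I-(\I+\A\A^\top)^{-1}=\A(\I+\A^\top\A)^{-1}\A^\top$ by the push-through identity; $\A^\top\A(\I+\A^\top\A)^{-1}$ is a different matrix unless $\A$ is normal. This is repairable, but as written the ``conjugate back through the SVD'' step would not produce the claimed formula. I recommend dropping the SVD reduction entirely and running the test-vector argument directly on the original $\A$.
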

Lemma~\ref{l:expectation_proj} implies a simple closed form expression for
the expected error in the CSSP. Here, we use a
rescaling parameter $\alpha>0$ for controlling
the distribution of the subset sizes. Note that it is crucial that we
are using a DPP with random subset size, because the corresponding
expression for the expected error of the fixed size k-DPP is
combinatorial, and therefore much harder to work with.
\begin{lemma}\label{l:expected-error}
  For any $\alpha>0$, if $S\sim
  \DPP(\frac1\alpha\A^\top\A)$, then
  \begin{align*}
    \E\big[\Er_\A(S)\big] =
    \tr\big(\A\A^\top(\I+\tfrac1\alpha\A\A^\top)^{-1}\big) = \E[|S|]\cdot\alpha.
  \end{align*}
\end{lemma}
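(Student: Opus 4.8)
The plan is to combine Lemma~\ref{l:expectation_proj} with the definition of $\Er_\A(S)$ and the expected-size formula \eqref{eq:size}, applied to the rescaled kernel $\frac1\alpha\A^\top\A$. First, I would note that $\Er_\A(S) = \|\A - \P_S\A\|_F^2 = \tr\big((\I-\P_S)\A\A^\top\big)$, using that $\P_S$ is an orthogonal projection (so $\I-\P_S$ is idempotent and symmetric) together with the cyclic property of trace. Taking expectations over $S\sim\DPP(\frac1\alpha\A^\top\A)$ and using linearity of trace and expectation gives $\E[\Er_\A(S)] = \tr\big((\I - \E[\P_S])\A\A^\top\big)$.

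Next, I would apply Lemma~\ref{l:expectation_proj}, but with $\A$ there replaced by $\frac1{\sqrt\alpha}\A$ (so that $\A^\top\A$ becomes $\frac1\alpha\A^\top\A$); note $\P_S$ is unchanged under rescaling the columns since it only depends on their span. This yields $\E[\P_S] = \frac1\alpha\A\big(\I+\frac1\alpha\A^\top\A\big)^{-1}\A^\top$. Substituting and simplifying $\I - \E[\P_S]$ using the push-through identity $\A(\I+\frac1\alpha\A^\top\A)^{-1} = (\I+\frac1\alpha\A\A^\top)^{-1}\A$, one gets $\I - \E[\P_S] = (\I+\frac1\alpha\A\A^\top)^{-1}$, from which $\E[\Er_\A(S)] = \tr\big((\I+\frac1\alpha\A\A^\top)^{-1}\A\A^\top\big) = \tr\big(\A\A^\top(\I+\frac1\alpha\A\A^\top)^{-1}\big)$, giving the first equality.

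For the second equality, I would diagonalize $\A\A^\top$ with eigenvalues $\lambda_1,\dots$ (the nonzero ones coinciding with those of $\A^\top\A$); then $\tr\big(\A\A^\top(\I+\frac1\alpha\A\A^\top)^{-1}\big) = \sum_i \frac{\lambda_i}{1+\lambda_i/\alpha} = \alpha\sum_i\frac{\lambda_i/\alpha}{1+\lambda_i/\alpha}$. The eigenvalues of $\frac1\alpha\A^\top\A$ are exactly $\lambda_i/\alpha$, so by \eqref{eq:size} applied to the kernel $\frac1\alpha\A^\top\A$, the sum $\sum_i\frac{\lambda_i/\alpha}{1+\lambda_i/\alpha}$ equals $\E[|S|]$, yielding $\E[\Er_\A(S)] = \E[|S|]\cdot\alpha$.

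There is no real obstacle here: the lemma is essentially a bookkeeping consequence of Lemma~\ref{l:expectation_proj} and \eqref{eq:size}. The only point requiring a little care is the algebraic manipulation of $\I - \E[\P_S]$ via the push-through (Woodbury-type) identity to recognize it as a clean resolvent $(\I+\frac1\alpha\A\A^\top)^{-1}$, and keeping the rescaling by $\alpha$ consistent between the projection formula and the subset-size formula. Everything else is linearity of trace, idempotence of the projection, and matching eigenvalues of $\A\A^\top$ and $\A^\top\A$.
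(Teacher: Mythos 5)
Your proposal is correct and follows essentially the same route as the paper: rewrite $\Er_\A(S)$ as $\tr\big((\I-\P_S)\A\A^\top\big)$, apply Lemma~\ref{l:expectation_proj} to the rescaled matrix $\frac{1}{\sqrt\alpha}\A$, and simplify via the identity $(\I+\A\A^\top)^{-1}=\I-\A(\I+\A^\top\A)^{-1}\A^\top$. You also spell out the eigenvalue computation behind the second equality $=\E[|S|]\cdot\alpha$ via \eqref{eq:size}, which the paper leaves implicit; that is a correct and welcome addition rather than a deviation.
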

\begin{proof}
Using Lemma~\ref{l:expectation_proj}, the expected loss is given by:
	\begin{align*}
          \E\big[\Er_\A(S)\big]
          &= \E\big[ \|(\I-\P_S)\A\|_F^2\big]
            = \tr(\A\A^\top\E[\I-\P_S]) \\
          &=\tr\big(\A\A^\top
            (\I-\tfrac1\alpha\A(\I+\tfrac1\alpha\A^\top\A)^{-1}\A^\top)\big)\\
          &\overset{(*)}{=}\tr\big(\A\A^\top(\I+\tfrac1\alpha\A\A^\top)^{-1}\big),
	\end{align*}
where $(*)$ follows from the matrix identity
$(\I+\A\A^\top)^{-1}=\I-\A(\I+\A^\top\A)^{-1}\A^\top$. 
\end{proof}

\section{Proof of Lemma \ref{l:expectation_proj}}
\label{a:expectration-proj}
  We will use the following standard determinantal summation identity
  \citep[see Theorem 2.1 in][]{dpp-ml} which corresponds to computing the
normalization constant $\det(\I+\K)$ for a DPP.
  \begin{lemma}\label{l:normalization}
    For any $n\times n$ matrix $\K$, we have
    \begin{align*}
      \det(\I+\K) = \sum_{S\subseteq[n]}\det(\K_{S,S}).
      \end{align*}
    \end{lemma}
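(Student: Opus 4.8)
The plan is to prove the identity purely from the multilinearity of the determinant in the columns of $\I+\K$. Write $\e_j$ for the $j$th standard basis vector in $\R^n$, so that the $j$th column of $\I+\K$ is $\e_j+\K\e_j$. Expanding $\det(\I+\K)$ one column at a time, multilinearity produces a sum of $2^n$ terms indexed by the choice, in each column $j$, of either the summand $\e_j$ or the summand $\K\e_j$; recording by a subset $S\subseteq[n]$ the set of columns in which $\K\e_j$ is chosen, this yields
\[
\det(\I+\K) \;=\; \sum_{S\subseteq[n]} \det(\M_S),
\]
where $\M_S$ denotes the matrix whose $j$th column equals $\K\e_j$ for $j\in S$ and equals $\e_j$ for $j\notin S$.

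The remaining step is to show $\det(\M_S)=\det(\K_{S,S})$ for each $S$. I would apply one and the same permutation of $\{1,\dots,n\}$ to the rows and to the columns of $\M_S$, reordering the indices so that those in $S$ come first; since this amounts to conjugating $\M_S$ by a permutation matrix $\P$, it multiplies the determinant by $\det(\P)^2=1$ and hence leaves it unchanged. In the reordered matrix, each column indexed by some $j\notin S$ is still the basis vector $\e_j$, whose unique nonzero entry lies in a row also indexed by the complementary block $S^c$; consequently the reordered matrix has the block form
\[
\begin{pmatrix}\K_{S,S} & \mathbf 0\\[2pt] \K_{S^c,S} & \I\end{pmatrix},
\]
whose determinant is $\det(\K_{S,S})\cdot\det(\I)=\det(\K_{S,S})$. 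Substituting this back into the displayed sum gives the claim (with the convention that the $S=\emptyset$ term contributes $\det(\K_{\emptyset,\emptyset})=1$).

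The only delicate point — and the only place a stray sign could appear — is the passage from $\det(\M_S)$ to $\det(\K_{S,S})$: one must verify that clearing the basis-vector columns introduces no net sign. Handling this via the \emph{simultaneous} row-and-column permutation makes it transparent, since conjugation by a permutation matrix contributes $\det(\P)^2=1$ rather than a single factor $\det(\P)=\pm1$; this is where I expect a careless argument to go wrong. As an alternative route one could instead invoke the standard description of the coefficients of the characteristic polynomial $\det(\lambda\I+\K)=\sum_{k=0}^n\lambda^{\,n-k}\!\sum_{|S|=k}\det(\K_{S,S})$ as sums of principal minors, and then set $\lambda=1$; but proving that fact reduces to the same multilinearity computation, so I would present the direct argument above.
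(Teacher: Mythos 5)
Your proof is correct. Note that the paper does not actually prove this lemma: it is stated as a known identity and deferred to a citation (Theorem 2.1 of the DPP survey referenced there), so your argument fills in exactly the standard derivation that reference uses — column-wise multilinearity of $\det(\I+\K)$ producing $2^n$ terms indexed by subsets $S$, followed by the reduction $\det(\M_S)=\det(\K_{S,S})$. Your handling of the one genuinely delicate point, the absence of a stray sign in that reduction, via a \emph{simultaneous} row-and-column permutation contributing $\det(\P)^2=1$ and yielding the block-triangular form $\bigl(\begin{smallmatrix}\K_{S,S} & \mathbf 0\\ \K_{S^c,S} & \I\end{smallmatrix}\bigr)$, is exactly right, as is the convention $\det(\K_{\emptyset,\emptyset})=1$ for the empty subset. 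No gaps.
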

We now proceed with the proof of Lemma \ref{l:expectation_proj}
(restated below for convenience).
  \begin{replemma}{l:expectation_proj}
  For any $\A$ and $S\subseteq [n]$, let $\P_S$ denote the  
projection onto the $\mathrm{span}\{\a_i:i\in S\}$. If $S\sim \DPP(\A^\top\A)$, then
\begin{align*}
  \E[\P_S] & = \A(\I+\A^\top\A)^{-1}\A^\top.
\end{align*}
\end{replemma}
\begin{proof}
Fix $m$ as the column dimension of $\A$ and let $\A_S$ denote the
submatrix of $\A$ consisting of the columns 
indexed by $S$. We have $\P_S=\A_S(\K_{S,S})^\dagger\A_S$, where $^\dagger$
denotes the Moore-Penrose inverse and $\K=\A^\top\A$. Let
$\v\in\R^m$ be an arbitrary vector. When $\K_{S,S}$ is
invertible, then a standard determinantal 
identity states that:
  \begin{align*}
\det(\K_{S,S})\v^\top\P_S\v
    =\det(\K_{S,S})\v^\top\A_S\K_{S,S}^{-1}\A_S^\top\v =
    \det(\K_{S,S}+\A_S^\top\v\v^\top\A_S)-\det(\K_{S,S}). 
  \end{align*}
When $\K_{S,S}$ is not invertible then
$\det(\K_{S,S})=\det(\K_{S,S}+\A_S^\top\v\v^\top\A_S)=0$, because the
rank of $\K_{S,S}+\A_S^\top\v\v^\top\A_S=\A_S^\top(\I+\v\v^\top)\A_S$
cannot be higher than the rank of $\K_{S,S}=\A_S^\top\A_S$. Thus,
\begin{align*}
  \det(\I+\K)\v^\top\E[\P_S]\v
  &= \sum_{S\subseteq[n]:\,\det(\K_{S,S})>0}\det(\K_{S,S})
\v^\top\A_S\K_{S,S}^{-1}\A_S^\top\v\\
  &=\sum_{S\subseteq[n]}\det(\K_{S,S}+\A_S^\top\v\v^\top\A_S)-\det(\K_{S,S})\\
  &=\sum_{S\subseteq[n]}\det\!\big([\K+\A^\top\v\v^\top\A]_{S,S}\big)-\sum_{S\subseteq[n]}\det(\K_{S,S})\\
  &\overset{(*)}{=}\det(\I + \K + \A^\top\v\v^\top\A) - \det(\I+\K)\\
  &=\det(\I+\K)\v^\top\A(\I+\K)^{-1}\A^\top\v,
\end{align*}
where $(*)$ involves two applications of Lemma \ref{l:normalization}. Since the above
calculation holds for arbitrary vector $\v$, the claim follows.
\end{proof}

\section{Proofs omitted from Section \ref{s:upper}}
\begin{replemma}{l:upper}
For any $\A$, $0\leq\epsilon<1$ and $s<k<t_s$, where $t_s=s+\sr_s(\A)$,
suppose that $S\sim\DPP(\frac1\alpha\A^\top\A)$ for
$\alpha=\frac{\gamma_s(k)\opt}{(1-\epsilon)(k-s)}$ and
$\gamma_s(k)=\sqrt{1+\frac{2(k-s)}{t_s-k}\,}$. Then: 
\begin{align*}
  \frac{\E\big[\Er_\A(S)\big]}{\opt}\leq \frac{\Phi_s(k)}{1-\epsilon}
  \quad\text{and}\quad \E[|S|]\leq k-\epsilon\,\frac{k-s}{\gamma_s(k)},
\end{align*}
where $\Phi_s(k) =\big(1+\frac{s}{k-s}\big)\,\gamma_s(k)$.
 \end{replemma}
\begin{proof}
  Let $\lambda_1\geq \lambda_2\geq ...$ be the eigenvalues of
  $\A^\top\A$. Note that scaling the matrix $\A$ by any constant $c$ and
  scaling $\alpha$ by $c^2$ preserves the distribution of $S$ as well
  as the approximation ratio, so without loss of generality, assume
  that $\lambda_{s+1}=1$. Furthermore, using the shorthands
    $l=k-s$ and $r=\sr_s(\A)$, we have $t_s-k=r-l$ and so
    $\gamma_s(k)=\sqrt{\tfrac{r+l}{r-l}}$. We now lower bound the optimum as follows:
  \begin{align*}
\opt = \sum_{j>k}\lambda_j = \sr_s(\A) - \!\sum_{j=s+1}^k\!\lambda_j\
    \geq\  r-l.
  \end{align*}
  We will next define an alternate sequence of eigenvalues which is in
  some sense ``worst-case'', by shifting the spectral mass away from
  the tail. 
  Let $\lambda_{s+1}'=...=\lambda_{k}'=1$, and for $i>k$ set
    $\lambda_i'=\beta\lambda_i$, where $\beta =
    \frac{r-l}{\opt}\leq 1$. Additionally, define:
\begin{align}\alpha'&=\beta\alpha=\frac{\gamma_s(k)(r-l)}{(1-\epsilon)l} =
  \frac{\sqrt{r^2-l^2}}{(1-\epsilon)l},\nonumber\\
  \alpha''&=(1-\epsilon)\frac{\sqrt{r+l}+\sqrt{r-l}}{2\sqrt{r+l}}\alpha'
=\frac{(\sqrt{r+l}+\sqrt{r-l})\sqrt{r-l}}{r+l-(r-l)}=\frac{\sqrt{r-l}}{\sqrt{r+l}-\sqrt{r-l}}.
            \label{eq:alpha}
  \end{align}
and note that $\alpha''\leq\alpha'\leq \alpha$. Moreover, for $s+1\leq i\leq k$,
we let $\alpha_i'=\alpha''$, while for $i>k$ we set $\alpha_i'=\alpha'$.
We proceed to bound the expected subset size $\E[|S|]$ by converting
all the eigenvalues from $\lambda_i$ to $\lambda_i'$ and $\alpha$ to
$\alpha_i'$, which will allow us to easily bound the entire expression:
  \begin{align}
    \E[|S|] = \sum_{i}\frac{\lambda_i}{\lambda_i+\alpha} 
\leq s + \sum_{i=s+1}^{k}\frac{\lambda_i}{\lambda_i+\alpha_i'}
    + \sum_{i>k}\frac{\beta\lambda_i}{\beta\lambda_i+\beta\alpha}
    \leq s + \sum_{i=s+1}^k  \frac{\lambda_i'}{\lambda_i'+\alpha''} +
    \sum_{i>k}\frac{\lambda_i'}{\lambda_i'+\alpha'}.\label{eq:converting}
  \end{align}
  We bound each of the two sums separately starting with the first one:
  \begin{align}
    \sum_{i=s+1}^k  \frac{\lambda_i'}{\lambda_i'+\alpha''}
    =\frac l{1+\alpha''} = l- \frac {l}{1+\frac1{\alpha''}} =
    l-\frac{l}{1+\frac{\sqrt{r+l}-\sqrt{r-l}}{\sqrt{r-l}}} = l - \frac{l\sqrt{r-l}}{\sqrt{r+l}}.\label{eq:sum1}
  \end{align}
To bound the second sum, we use  the fact that
$\sum_{i>k}\lambda_i'=\beta\,\opt=r-l$, and obtain:
  \begin{align}
    \sum_{i>k}\frac{\lambda_i'}{\lambda_i'+\alpha'}
    \le \frac1{\alpha'} \sum_{i>k} \lambda_i' 
 = \frac{r-l}{\alpha'}
=                  (1-\epsilon)\frac{l\sqrt{r-l}}{\sqrt{r+l}}.\label{eq:sum2}
  \end{align}
Combining the two sums, we conclude that $\E[|S|]\leq s+l - \epsilon\,l
\sqrt{\tfrac{r-l}{r+l}}=k-\frac{\epsilon\,l}{\gamma_s(k)}$. Finally, Lemma \ref{l:expected-error} yields:
    \begin{align*}
      \frac{\E\big[\Er_\A(S)\big]}{\opt} = \frac{\E[|S|]\cdot\alpha}{\opt}\leq
\frac{k}{k-s}\,\frac{\gamma_s(k)}{1-\epsilon} = \frac{\Phi_s(k)}{1-\epsilon},
    \end{align*}
    which concludes the proof.
  \end{proof}

\begin{replemma}{l:concentration}
Let $S$ be sampled as in Lemma \ref{l:upper} with $\epsilon\leq \frac12$.
If $s+\frac7{\epsilon^4}\ln^2\!\frac1\epsilon \leq k\leq t_s-1$, then
$\Pr(|S|>k) \leq \epsilon$.
\end{replemma}
\begin{proof}
 Let $p_i=\frac{\lambda_i'}{\lambda_i'+\alpha_i'}$ be the 
      Bernoulli probabilities for
  $b_i\sim\mathrm{Bernoulli}(p_i)$
  and $X =\sum_{i>s} b_i$, where $\lambda_i'$ and $\alpha_i'$ are as
  defined in the proof of Lemma \ref{l:upper}. Note that $|S|$ is
  distributed as a Poisson binomial random variable such that the success
  probability associated with the $i$th eigenvalue is upper-bounded by
  $p_i$ for each $i>s$. It follows that $\Pr(|S|\!>\!k)\leq \Pr(X\!>\!l)$, where $l=k-s$.
Moreover, letting $r=\sr_s(\A)$, in the proof of Lemma \ref{l:upper} we showed that:
  \begin{align*}
k - \E[X] \geq \epsilon\,\frac{ l\sqrt{r-l}}{\sqrt{r+l}},
  \end{align*}
  and furthermore, using the derivations in \eqref{eq:sum1} and
  \eqref{eq:sum2} together with the formula $\Var[b_i]=p_i(1-p_i)$, we obtain that:
  \begin{align*}
    \Var[X]&\leq \sum_{i=s+1}^{k}(1-p_i) + \sum_{i>k}p_i
    \leq \frac{l\sqrt{r-l}}{\sqrt{r+l}}+
      (1-\epsilon)\frac{l\sqrt{r-l}}{\sqrt{r+l}} = (2-\epsilon) \frac{l\sqrt{r-l}}{\sqrt{r+l}}.
  \end{align*}
    Using Theorem 2.6 from \citet{ChungLu2006book} with $\lambda=\epsilon\,\frac{
    l\sqrt{r-l}}{\sqrt{r+l}}$, we have:
  \begin{align*}
    \Pr(|S|>k)&\leq \Pr(X>l)\leq \Pr(X > \E[X]+\lambda)
    \leq \exp\Big(-\frac{\lambda^2}{2(\Var[X]+\lambda/3)}\Big)\\
    &\leq \exp\Big(-\frac{\lambda^2}{2(\frac{2-\epsilon}\epsilon
      \lambda+\lambda/3)}\Big)\leq\exp(-\epsilon\lambda /4)
      =\exp\Big(-\frac{\epsilon^2l\sqrt{r-l}}{4\sqrt{r+l}}\Big). 
  \end{align*}
  Note that since $7\leq l\leq r-1$, we have $l\frac{\sqrt{r-l}}{\sqrt{r+l}}\geq
\frac{l}{\sqrt{2l+1}}\geq \frac{7}{16}\sqrt l$, so by simple algebra
it follows that for $l\geq \frac 7{\epsilon^4}\ln^2\frac1\epsilon$,
we have $l\frac{\sqrt{r-l}}{\sqrt{r+l}}\geq
\frac4{\epsilon^2}\ln\frac1\epsilon$ and therefore
$\Pr(|S|>k)\leq \epsilon$. 
\end{proof}
\begin{replemma}{l:monotonic}
  For any $\A\in\R^{m\times n}$, $k\in[n]$ and $\alpha>0$, if
  $S\sim\DPP(\frac1\alpha\A^\top\A)$ and $S'\sim k$-$\DPP(\A^\top\A)$, then
  \begin{align*}
    \E\big[\Er_\A(S')\big] \leq \E\big[\Er_\A(S)\mid |S|\leq k\big].
  \end{align*}
\end{replemma}
\begin{proof}
Let $\lambda_1\geq\lambda_2\geq ...$ denote the eigenvalues of
$\A^\top\A$ and let $e_k$ be the $k$th elementary symmetric polynomial
of $\A$:
\begin{align*}
e_k = \sum_{T:|T|=k}\det(\A_T^\top\A_T) = \sum_{T:|T|=k}\prod_{i\in T}\lambda_i.
\end{align*}
Also let $\bar{e}_k=e_k/{n\choose k}$ denote the $k$th
elementary symmetric mean. Newton's inequalities imply that:
\begin{align*}
1\geq \frac{\bar e_{k-1}\bar e_{k+1}}{\bar e_k^2} =
  \frac{e_{k-1}e_{k+1}}{e_k^2}\,\frac{{n\choose k}}{{n\choose
  k-1}}\frac{{n\choose k}} {{n\choose k+1}} =
  \frac{e_{k-1}e_{k+1}}{e_k^2}\,\frac{n+1-k}{k}\,\frac{k+1}{n-k}. 
\end{align*}
The results of
\citet{pca-volume-sampling} and \citet{more-efficient-volume-sampling} establish
that $\E[\Er_\A(S)\mid |S|=k] = (k+1)\frac{e_{k+1}}{e_k}$, so it
follows that:
\begin{align}
  \frac{\E[\Er_\A(S)\mid |S|=k]}{\E[\Er_\A(S)\mid|S|=k-1]}
  = \frac{k+1}{k}\,\frac{e_{k+1}e_{k-1}}{e_k^2}\leq
  \frac{n-k}{n+1-k}\leq 1.\label{eq:decreasing-error}
\end{align}
Finally, note that $\E[\Er_\A(S)\mid|S|\leq k]$ is a weighted average
of components $\E[\Er_\A(S)\mid|S|= s]$ for $s\in[k]$, and
\eqref{eq:decreasing-error} implies that the smallest of those components
is associated with $s=k$. Since the weighted average is lower bounded by the
smallest component, this completes the proof.
  \end{proof}
  
\section{Proof of Theorem~\ref{t:decay}}\label{a:decay}
Before showing Theorem \ref{t:decay}, we give an additional lemma
which covers the corner case of the theorem when $k$ is close to $n$.
  \begin{lemma}\label{l:upper_conditionnum}
Given $\A\in\R^{m\times n}$ and $s< k< n$, let
$\lambda_1\geq...\geq \lambda_n>0$ be the eigenvalues of $\A^\top\A$.
If $S\sim k$-$\DPP(\A^\top\A)$ and we let $b=\min\{k-s,n-k\}$, then
for any $0<\epsilon\leq \frac12$ we have
    \begin{align*}
\frac{\E[\Er_\A(S)]}{\opt} \leq \big(1-\ee^{-\frac{\epsilon^2b}{10}}\big)^{-1}(1-\epsilon)^{-1}\Psi_{s}(k),
    \end{align*}
    where $\Psi_s(k) = \frac{\lambda_{s+1}}{\lambda_n}\,\big(1+\frac s{k-s}\big)$.
    \end{lemma}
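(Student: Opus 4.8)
The plan is to mimic the proof of the Master Theorem (Theorem~\ref{t:upper}): pick a random-size $\DPP(\tfrac1\alpha\A^\top\A)$ with a well-chosen rescaling $\alpha$, control its expected error through Lemma~\ref{l:expected-error} and its size through a concentration argument, then pass to the fixed-size $k$-DPP $S'$ of the statement via Lemma~\ref{l:monotonic}. By the scale invariance observed in the proof of Lemma~\ref{l:upper} I may assume $\lambda_{s+1}=1$, so $\opt=\sum_{i>k}\lambda_i$ and $\Psi_s(k)=\tfrac1{\lambda_n}\cdot\tfrac{k}{k-s}$. I would then take
\[
\alpha=\frac{\opt}{(1-\epsilon)(k-s)\,\lambda_n},\qquad S\sim\DPP(\tfrac1\alpha\A^\top\A),
\]
calibrated precisely so that $k\alpha/\opt=\Psi_s(k)/(1-\epsilon)$; by Lemma~\ref{l:expected-error}, $\E[\Er_\A(S)]=\E[|S|]\cdot\alpha$, so the whole argument reduces to bounding $\E[|S|]$ and showing it concentrates below $k$.

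For the expected size I would split $\E[|S|]=\sum_i\frac{\lambda_i}{\lambda_i+\alpha}$: the terms $i\le s$ contribute at most $s$; for $s<i\le k$ each $\lambda_i\le\lambda_{s+1}=1$, so that block is at most $\frac{k-s}{1+\alpha}$; and for $i>k$ each $\lambda_i\ge\lambda_n$, so that block is at most $\frac{\opt}{\lambda_n+\alpha}$. Substituting $\opt=(1-\epsilon)(k-s)\lambda_n\alpha$ and using $\lambda_n\le1$, the last two blocks combine to at most $(k-s)\big(1-\frac{\epsilon\alpha}{1+\alpha}\big)$, which yields $\E[|S|]\le k$ and, crucially, the quantitative deficit $k-\E[|S|]\ge(k-s)\frac{\epsilon\alpha}{1+\alpha}$. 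Since $\opt\ge(n-k)\lambda_n$ forces $\alpha\ge\frac{n-k}{(1-\epsilon)(k-s)}$, a short estimate turns this into $k-\E[|S|]\ge\tfrac\epsilon2\,b$ with $b=\min\{k-s,n-k\}$, the worst case being when all tail eigenvalues equal $\lambda_n$ (i.e. $\alpha$ as small as possible). Then Lemma~\ref{l:expected-error} already gives $\E[\Er_\A(S)]=\E[|S|]\alpha\le k\alpha=\frac{\opt\,\Psi_s(k)}{1-\epsilon}$.

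It remains to show $\Pr(|S|>k)\le e^{-\epsilon^2 b/10}$; given that, Lemma~\ref{l:monotonic} finishes via $\E[\Er_\A(S')]\le\E[\Er_\A(S)\mid|S|\le k]\le\frac{\E[\Er_\A(S)]}{1-\Pr(|S|>k)}\le\frac{\Psi_s(k)}{(1-\epsilon)(1-e^{-\epsilon^2 b/10})}\cdot\opt$, which is the claim. Writing $|S|=\sum_ib_i$ with independent $b_i\sim\mathrm{Bernoulli}(\tfrac{\lambda_i}{\lambda_i+\alpha})$ and using $\sum_{i\le s}b_i\le s$, I have $\Pr(|S|>k)\le\Pr(X>k-s)$ for $X=\sum_{i>s}b_i$, a sum of $n-s$ Bernoullis, and I would apply a Bernstein-type bound as in the proof of Lemma~\ref{l:concentration}, namely $\Pr(X\ge\E X+t)\le\exp\!\big(-\tfrac{t^2}{2(\Var X+t/3)}\big)$ with $t=(k-s)-\E X\ge\tfrac\epsilon2 b$. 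The hard part — and the reason for the loose $(1-e^{-\epsilon^2 b/10})^{-1}$ factor — is forcing the exponent to scale with $b=\min\{k-s,n-k\}$ \emph{uniformly} in the spectrum: the crude bound $\Var X\le\E X\le k-s$ is far too weak when $n-k\ll k-s$. Instead I would use $\Var X\le\E X\big(1-\tfrac{\E X}{n-s}\big)$ (Cauchy--Schwarz applied to the success probabilities, $n-s$ being the number of terms), which together with $\E X\le k-s$, the deficit lower bound, and the observation that the worst case for \emph{both} the deficit and the variance is the same small value of $\alpha$, pins $\Var X$ and $t$ to matching orders, so the Bernstein exponent comes out as $\Omega(\epsilon^2 b)$ with a constant comfortably above $1/10$. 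Keeping the deficit and the variance coupled through $\alpha$ in this final estimate is the only delicate point; everything else is substitution.
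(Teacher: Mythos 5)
Your proposal is correct and, for the bulk of the argument, follows the paper's proof step for step: the same rescaled $\DPP(\frac1\alpha\A^\top\A)$ with $\alpha=\frac{\lambda_{s+1}}{(1-\epsilon)\lambda_n}\cdot\frac{\opt}{k-s}$, the same reduction via Lemma~\ref{l:expected-error} to bounding $\E[|S|]$, the same size deficit $k-\E[|S|]\geq \epsilon\,\frac{(k-s)(n-k)}{n-s}\geq\frac{\epsilon}{2}b$, and the same passage to the fixed-size $k$-DPP through Lemma~\ref{l:monotonic}. The one genuine divergence is the concentration step. The paper handles it by a case split: when $k-s\leq n-k$ it applies an upper-tail Chernoff bound directly to $X=\sum_{i>s}b_i$, and when $k-s>n-k$ it switches to a lower-tail bound on the complementary count $n-|S'|$, which is how it forces the exponent to scale with $n-k$ rather than with the possibly much larger $k-s$. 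You instead keep a single upper-tail Bernstein bound and sharpen the variance via $\Var X\leq \E X\,(1-\frac{\E X}{n-s})$; writing $u=k-s$, $v=n-k$ and $t=u-\E X$, this gives $\Var X\leq (u-t)(v+t)/(u+v)$, one checks that $t\mapsto t^2/(2(\Var X+t/3))$ is increasing over the feasible range, and evaluating at the minimal deviation $t=\epsilon\,uv/(u+v)\geq \epsilon b/2$ yields an exponent of at least $\frac{\epsilon^2}{2(1+4\epsilon/3)}\cdot\frac{uv}{u+v}\geq\frac{3}{20}\epsilon^2 b$, which clears the required $\frac{\epsilon^2 b}{10}$. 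This is a valid and arguably cleaner alternative: it buys a unified, case-free treatment at the cost of the one delicate verification you flag but do not fully carry out, namely that the exponent is minimized at the smallest admissible deviation even though the variance bound itself varies with $\E X$; that verification goes through, so the proof is sound.
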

    \begin{proof}
      Let $\alpha=\frac{\lambda_{s+1}}{(1-\epsilon)\lambda_n}\frac{\opt}{k-s}$.
Note that $\opt=\sum_{i>k}\lambda_i\geq (n-k)\lambda_n$. Define
$b_i\sim\mathrm{Bernoulli}(\frac{\lambda_i}{\lambda_i+\alpha})$ and
let $X=\sum_{i>s}b_i$. We have: 
\begin{align*}
          \E[X]&= \sum_{i>s}\frac{\lambda_i}{\lambda_i+\alpha}\\
          &\leq 
            \frac{(n-s)\lambda_{s+1}}{\lambda_{s+1}+
            \frac{\lambda_{s+1}}{\lambda_n}\frac{(n-k)\lambda_n}{(1-\epsilon)(k-s)}}\\
          &= \frac{1}{\frac1{n-s} +
            \frac1{(1-\epsilon)(k-s)}\,\frac{n-k}{n-s}} \\
          &= \frac{1}{\frac1{n-s} +
            \frac{1}{(1-\epsilon)(k-s)}(1-\frac{k-s}{n-s})}\\
          &=\frac{1}{\frac{1}{(1-\epsilon)(k-s)} -
            \frac{\epsilon}{1-\epsilon}\frac1{n-s}}\\
          &=\frac{1-\epsilon}{\frac1{k-s}-\frac\epsilon{n-s}}.
\end{align*}
Let $S'\sim\DPP(\frac1\alpha\A^\top\A)$. It follows that
\begin{align*}
  k-\E[|S'|]&\geq k -(s+\E[X])\\
  & \geq (k-s) -
  \frac{1-\epsilon}{\frac1{k-s}-\frac\epsilon{n-s}}\\
&=(k-s)\bigg(1-\frac{1-\epsilon}{1-\epsilon\,\frac{k-s}{n-s}}\bigg)\\
&=(k-s)\,\frac{\epsilon - \epsilon\,
\frac{k-s}{n-s}}{1-\epsilon\,\frac{k-s}{n-s}}\\
             &\geq \epsilon\,(k-s)\bigg(1-\frac{k-s}{n-s}\bigg)\\
             &=\epsilon\cdot\frac{(k-s)(n-k)}{n-s}\\
  &\geq \frac\epsilon2\cdot\min\{k-s,n-k\}.
\end{align*}
From this, it follows that:
\begin{align*}
  \frac{\E[\Er_\A(S')]}{\opt}=\frac{\E[|S|]\cdot\alpha}{\opt}\leq
(1-\epsilon)^{-1}  \frac k{k-s}\frac{\lambda_{s+1}}{\lambda_n} =
(1-\epsilon)^{-1}\Big(1+\frac s{k-s}\Big)\frac{\lambda_{s+1}}{\lambda_n}.
\end{align*}
We now give an upper bound on $\Pr(|S'|>k)$ by considering two
cases.

\textbf{Case 1}: $k-s\leq n-k$. Then, using $\lambda=\epsilon(k-s)/2$,
we have $(k-s)-\E[X]\geq \lambda$, so using Theorem 2.4
from \citet{ChungLu2006book}, we get:
\begin{align*}
  \Pr(|S'|>k)\leq \Pr(X>k-s)\leq \Pr(X>\E[X]+\lambda)\leq
  \ee^{-\frac{\lambda^2}{2(k-s)}}
  =\ee^{-\epsilon^2(k-s)/8}.
\end{align*}
\textbf{Case 2}: $k-s>n-k$. Then, using Theorem 2.4
from \citet{ChungLu2006book} with $\lambda=k-\E[|S'|]=\frac{\epsilon(n-k)}2
+ \Delta$, where $\Delta>0$, we get:
\begin{align*}
  \Pr(|S'|>k)
  &= \Pr(n-|S'|<n-k)\\
 &\leq \exp\Big(-\frac{\lambda^2}{2\,\E[n-|S'|]}\Big) \\
  &=\exp\Big(-\frac\lambda2\,\frac{\frac\epsilon2(n-k)
    + \Delta}{n-k+\frac\epsilon2(n-k)+\Delta}\Big)\\
  &\leq\exp\Big(-\frac\lambda2\,\frac{\frac\epsilon2(n-k)}
    {n-k+\frac\epsilon2(n-k)}\Big)\\
  &=\exp\Big(-\frac{\epsilon^2(n-k)}{8(1+\epsilon/2)}\Big)\\
  &\overset{(*)}{\leq} \exp\Big(-\frac{\epsilon^2(n-k)}{10}\Big),
\end{align*}
where in $(*)$ we used the fact that $\epsilon\in(0,\frac12)$. Now,
the result follows easily by invoking Lemma \ref{l:monotonic}:
\begin{align*}
    \E\big[\Er_\A(S)\big] &\leq \E\big[\Er_\A(S')\mid|S'|\leq k\big]
  \leq \frac{\E\big[\Er_\A(S')\big]}{\Pr(|S'|\leq k)}
  \\
  &\leq \big(1-\ee^{-\frac{\epsilon^2b}{10}}\big)^{-1}(1-\epsilon)^{-1}
\frac{\lambda_{s+1}}{\lambda_n}\Big(1+\frac s{k-s}\Big)\cdot\opt,
\end{align*}
which completes the proof.
\end{proof}
Note that since $b\geq 1$, setting $\epsilon=\frac12$ in Lemma
\ref{l:upper_conditionnum} yields the following simpler (but usually
much weaker) bound:
\begin{align*}
  \frac{\E[\Er_\A(S)]}{\opt}
  \leq
  2\big(1-\ee^{-\frac1{40}}\big)^{-1}\Psi_s(k)
  \leq 82\, \Psi_s(k).
\end{align*}

\begin{reptheorem}{t:decay}
Let $\lambda_1\!\geq\!\lambda_2\!\geq\!...$ be the
eigenvalues of $\A^\top\A$. There is an absolute constant $c$ such that
for any $0\!<\!c_1\!\leq\!c_2$, with $\gamma=c_2/c_1$, if:\\[2mm]
\textnormal{1.} (\textbf{polynomial spectral decay}) $c_1i^{-p}\!\leq\!
  \lambda_i\!\leq\! c_2i^{-p}$ $\forall_i$, with $p>1$, then $S\sim
k$-$\DPP(\A^\top\A)$ satisfies
  \begin{align*}
    \frac{\E[\Er_\A(S)]}{\opt}\leq c \gamma p. 
  \end{align*}
\textnormal{2.} (\textbf{exponential spectral decay}) $c_1(1\!-\!\delta)^{i}\leq \lambda_i\leq
  c_2(1\!-\!\delta)^{i}$ $\forall_i$, with $\delta\in(0,1)$, then $S\sim
k$-$\DPP(\A^\top\A)$ satisfies
  \begin{align*}
    \frac{\E[\Er_\A(S)]}{\opt}\leq c\gamma(1+ \delta k).
  \end{align*}
 \end{reptheorem}
\begin{proof}

\textbf{  (1) Polynomial decay.} We provide the proof by splitting it into two cases.

  \textbf{Case 1(a)}: $\left(\frac{k+1}{n}\right)^{p-1} \leq \frac{1}{2} $
  
  We can use upper and lower integrals to bound the sum $\sum_{i\geq s}\frac{1}{i^p}$ as:
  \[ \int_{x\geq (s+1)}\frac{1}{i^p}dx \leq \sum_{i\geq
      s}\frac{1}{i^p} \leq \int_{x\geq s}\frac{1}{i^p}dx
    \implies \sum_{i=s+1}^n\frac{1}{i^p}
    \geq \frac{(s+2)^{1-p}}{p-1} -  \frac{(n+1)^{1-p}}{p-1}.\] 
  We lower bound the stable rank for $s\leq k$ using the upper/lower bounds on
  the eigenvalues and the condition for Case 1(a):
  \begin{align*}
    \sr_s(\A)&=\frac{\sum_{i=s+1}^n\lambda_i}{\lambda_{s+1}}\\
    &\geq
    \frac{c_1}{c_2}\bigg(\frac{(s+2)^{1-p}}{(p-1)(s+1)^{-p}}-
    \frac{(n+1)^{1-p}}{(p-1)(s+1)^{-p}}\bigg) \\
    &=\frac1\gamma\bigg(\frac{s+2}{p-1} \Big(1 - \frac1{s+2}\Big)^p
      -  \frac{s+1}{p-1}\Big(\frac{s+1}{n+1}\Big)^{p-1}\bigg)\\
    &\geq \frac1\gamma\bigg(\frac{s+2}{p-1} - 1
      -  \frac{s+1}{p-1}\cdot\frac12\bigg) = \frac1{2\gamma}\,\frac{s+1}{p-1}-\frac1\gamma.
  \end{align*}

  Further using $u =
  k-s$, we can call upon Theorem~\ref{t:upper} to get,  
  \begin{align*}
    \Phi_s(k) &\leq \frac{k}{u}\sqrt{1+\frac{2u}{ \sr_s-u }\,} \leq
    \frac{k}{u} + \frac{k}{  \frac{1}{2\gamma} \frac{s+1}{p-1} -
      \gamma^{-1} - u } = \frac{k}{u} + \frac{(2p-2)k}{
                \gamma^{-1}(s+1 - 2 p +2) - (2p-2)u } \\
    &\leq \frac{k}{u} +
    \frac{(2p-2 + \gamma^{-1})k}{   \gamma^{-1}(k+3 -2 p)  - (2p-2 +
    \gamma^{-1})u }
    \end{align*}
  
  Optimizing over $u$, we see that the minimum is reached for $u =
  \hat{u} = \frac{k+3-2p}{2\gamma(2p-2+\gamma^{-1})}$ which achieves the value
  $ \frac{4(\gamma(2p-2)+1)k}{k+3 -2 p}$ which is upper bounded by
  $\frac{12\gamma pk}{(k-2p)}$.   
  
  We assume $k \geq \hat{u} > 60p > 60$. If not,~\citet{pca-volume-sampling} ensure an upper bound of $(k+1) \leq 60p+1 < 61p$. With $p < k/60$, we get:
  
  \[ \frac{12\gamma pk}{k-2p} \leq \frac{12\gamma pk}{k-k/30} =
    \frac{12\gamma p}{1-1/30} \leq \frac{360}{29}\gamma  p.  \] 
  
  Since we assumed that $\hat u> 60$, then $k-s> \frac{7}{\epsilon^4}
  \ln^2\frac{1}{\epsilon}$ for $\epsilon = 0.5$ which means
  $(1+2\epsilon)^2 \leq 4$, which makes the approximation ratio upper
  bounded by $ \frac{1440}{29}\gamma p  $. The overall bound
  thus becomes $61 \gamma p$.

  \bigskip 
  \textbf{Case 1(b)}: $\left(\frac{k+1}{n}\right)^{p-1} >\frac12$
  
  From Lemma~\ref{l:upper_conditionnum}, we know that the
  approximation ratio is upper bounded by constant factor times
  $\Psi_s(k)=\frac{\lambda_{s+1}}{\lambda_n}\,\frac{k}{k-s}$. Consider,  
  
  \begin{align*}
    \Psi_s(k) &= \frac{\lambda_{s+1}}{\lambda_n}\frac{k}{k-s}\leq
      \gamma\frac{n^p}{(s+1)^p}\frac{k}{k-s} =\gamma
      \left(\frac{n}{k+1}\right)^{p-1}
      \frac{k+1}{n}\frac{{(k+1)^p}}{{(s+1)^p}} \frac{k}{k-s}
    \leq 2\gamma\left(\frac{k+1}{s+1}\right)^p \frac{k}{k-s},
  \end{align*}
  
  which holds true for all $s \leq k$, and is optimized for $s=
  \hat{s}=\frac{pk-1}{p+1}$. We get that the approximation
  ratio is bounded as: 
  
  \[\Psi_s(k) \leq
    \gamma\frac{k(p+1)}{k+1}\left(\frac{p+1}{p}\right)^p \leq
    e\gamma(p+1)\leq 2e\gamma p. \] 
  
  Combining in the factor based on $\epsilon$ in
  Lemma~\ref{l:upper_conditionnum}, we get an upper bound of
  $164e\gamma p$ that is larger than the bound obtained in the case
  1(a) above and hence covers all the subcases. 
  
\textbf{  (2) Exponential decay.}

We  first lower bound the stable rank of $\A$ of order $s$:
  \[ \sr_s(\A) = \sum_{j>s} \lambda_j/\lambda_{s+1} \geq
    \frac{c_1(1-(1-\delta)^{n-s})/\delta }{c_2} =
    \frac{1-(1-\delta)^{n-s}}{\gamma\delta}. 
  \]
  
  We present the proof by considering two subcases separately : when
  $k\leq n-  \frac{\ln 2}{\delta}$ and $k>n-  \frac{\ln 2}{\delta}$. 
  
  \textbf{Case 2(a):}  $k\leq n-  \frac{\ln 2}{\delta}$. From the
  assumption, letting $s\leq k$  we have
  \begin{align*}
& s \leq n - \frac{\ln 2}{\delta}  \\
    \implies  & s \leq n - \frac{\ln 2}{\ln \frac{1}{1-\delta}} \\
    \implies  & (n-s)\ln \frac{1}{1-\delta} \geq \ln 2\\
    \iff & 1 - (1-\delta)^{n-s} \geq \frac{1}{2}   \\
    \implies & \sr_s(\K) \geq \frac{1}{2\gamma \delta},
  \end{align*}
  
  where the second inequality follows because $\frac{x}{1+x} \leq \ln (1+x)$ with $x = \delta/(1-\delta)$.  
  
  We will use $u=k-s$. From Theorem~\ref{t:upper}, using $\sr_s \geq
  \frac{1}{2\gamma\delta}$ we have the following upper bound:   
  \[ \Phi_s(k) \leq \frac{k}{u} \left(1+ \frac{2\gamma\delta u}{1 -
        2\gamma\delta u}\right)  = \frac{k}{u}\cdot
    \frac{1}{1-2\gamma\delta u}.\]
  RHS is minimized for $\hat{u}=\frac{1}{4\gamma\delta}$. We let
  $\epsilon=0.5$ and assume that
  $\hat{u}\geq 60$ which is bigger than $\frac{7}{\epsilon^4} \ln^2 \frac{1}{\epsilon}$.
If not, then $\delta\geq\frac4{60\gamma}>\frac1\gamma$ and
the  worst-case bound of \citet{pca-volume-sampling} ensures that the
approximation factor is no more than $k+1 \leq \gamma(1+\frac1\gamma
k)\leq \gamma(1+\delta k)$. By a similar argument we can assume that
$k\geq 60$.
  
If  $k  \leq \hat{u}$, in this case we can set $s=0$, i.e., $u = k$, obtaining $\Phi_s(k) \leq \frac{1}{1- 2\gamma\delta k}\leq 2$. And so the
approximation ratio is bounded by $(1+2\epsilon)^2 \cdot 2 \leq 8$. On the other
hand, if $k > \hat{u}$, we can set $u=\hat{u}$, which implies
$\Phi_s(k) \leq 8\gamma\delta k$, and so the approximation ratio is
bounded by $ 32
\gamma\delta k$. The overall bound is thus $61\gamma(1+\delta k)$
covering all possible subcases. 
  
  \bigskip
  \textbf{Case 2(b)}: $k > n - \frac{\ln 2}{\delta}$. We make use of
  Lemma~\ref{l:upper_conditionnum} for the case when $k$ is close to
  $n$. The approximation guarantee uses:  
   
  \[\Psi_s(k) = \frac{\lambda_{s+1}}{\lambda_n} \frac{k}{k-s},\]
  
where $s <k$. For our bound, we choose $s = \lfloor k- \frac{\ln
  2}{\delta}\rfloor$. This implies that $n-s < \frac{2\ln
  2}{\delta}+1=\frac{\delta+\ln4}{\delta}$. It
follows that
  \begin{align*}
    & \frac{\lambda_{s+1}}{\lambda_n}  \leq
     \frac{ \gamma}{(1-\delta)^{n-s}} \leq
      \frac{\gamma}{(1-\delta)^{(\delta+\ln 4)/\delta}} =
      \gamma\left[(1-\delta)^{-\frac{1}{\delta}}\right] ^ {\delta+\ln 4} \leq
      \gamma e^{\frac{\delta+\ln 4}{1-\delta}}. 
  \end{align*}
  If $\delta\geq \frac1{20}$, then the worst-case result of
  \citet{pca-volume-sampling} suffices to show that the approximation ratio is
  bounded by $k+1\leq 20(1+\delta k)$, so assume that
  $\delta<\frac1{20}$. Then we have $\ee^{\frac{\delta+\ln 4}{1-\delta}}<5.$
  Combining this with the fact that $\frac{k}{k-s} \leq \frac{\delta k}{\ln 2}$,
we obtain:
  \[\Phi_s(k) \leq \frac{5\gamma\delta k}{\ln 2}. \]
  
  Combining with factor based on $\epsilon$ in
  Lemma~\ref{l:upper_conditionnum}, we get $82\cdot
  \frac{5\gamma\delta k}{\ln 2}$. Thus, the bound of $\frac{82\cdot
    5}{\ln 2}\gamma(1+\delta k)$ holds in all cases, completing the proof.
\end{proof}

\section{Proof of Lemma \ref{l:lower}}
\begin{replemma}{l:lower}
Fix $\delta\in(0,1)$ and consider unit vectors
$\a_{i,j}\in\R^m$ in general position, where $i\in[t]$, $j\in[l_i]$, such that
$\sum_j\a_{i,j}=0$ for each $i$, and for any $i,j,i',j'$, if $i\ne i'$
then $\a_{i,j}$ is orthogonal to $\a_{i'\!,j'}$. Also, let unit vectors
$\{\v_i\}_{i\in[t]}$ be orthogonal to  each other and to all
$\a_{i,j}$. There are positive scalars $\alpha_{i},\beta_i$ for $ i \in [t]$
such that matrix $\A$ with columns 
$\alpha_{i}\a_{i,j}+\beta_i\v_i$ over all $i$ and $j$ satisfies: 
\begin{align*}
  \min_{|S|=k_i}\frac{\Er_\A(S)}{\mathrm{OPT}_{k_i}}\geq
  (1-\delta)l_i,\quad\text{for }k_i=l_1+...+l_i-1.
\end{align*}
 \end{replemma}

\begin{proof}
Say $\Abh_i$ is the matrix obtained
  by stacking all the $\a_{i,j}$ and let $\lambda_{i,1}\geq
  \lambda_{i,2}\geq ...\geq \lambda_{i,l_i-1}$ denote the non-zero eigenvalues of
  $\Abh_i^\top\Abh_i$. We write $\tilde\a_{i,j} = 
  \alpha_i\a_{i,j}+\beta_i \v_i$ and note that for each $i$,
  $\one_{l_i}$ is an eigenvector of $\Abh_i^\top\Abh_i$ with
  eigenvalue $0$. Further, $\A^\top\A$ is a block-diagonal matrix
with blocks $\B_i=\alpha_i^2\Abh_i^\top\Abh_i  + \beta_i^2 \one_{l_i} \one_{l_i}^\top$: 
  \begin{align*}
    \A^\top \A = \begin{bmatrix}
\B_1&\zero
      &\zero\\
      \zero &\ddots &\zero\\
      \zero &\zero &\B_t
      \end{bmatrix}
  \end{align*}
  Therefore, the eigenvalues of $\A^\top\A$ are
  $\alpha_1^2\lambda_{1,1},...,\alpha_1^2\lambda_{1,l_1-1},\beta_1^2l_1,
  ...,
  \alpha_t^2\lambda_{t,1},...,\alpha_t^2\lambda_{t,l_t-1},\beta_t^2l_t$,
  and so we can always choose the parameters so that
  $\alpha_i\gg\beta_i\gg\alpha_{i+1}$ for each $i$, ensuring that
  these eigenvalues are in decreasing order.
Let us fix an arbitrary $c\in[t]$. From the above, it follows that for
$k_c=\big(\sum_{i\leq c}l_i\big)-1$ we have:
  \begin{align*}
  \mathrm{OPT}_{k_c} =l_c\beta_c^2 + \sum_{i>c}\tr(\B_i)=
l_c\beta_c^2 + \phi_c,  
\end{align*}
where we use $\phi_c =\sum_{i>c}\tr(\B_i) $ as a shorthand. 
Since the centroid of $\{\tilde\a_{c,1}, \ldots, \tilde\a_{c,l_c} \}$ is $\beta \v_c$, we can write 
$\tilde\a_{c, l_c} = l_c\beta\v_c - \sum_{j< l_c} \tilde\a_{c,j}$.
For selecting the set $S\subset [n]$ of size $k_c$, since
$\alpha_i\gg\alpha_{i+1}$, we can assume without loss of
generality that $S$ does not select any vectors $\tilde\a_{i,j}$ such
that $i>c$ and does not drop any such that $i<c$, and so for some
$j'\in[l_c]$ we let $S_{j'}$ be the index set such that $\P_{S_{j'}}$ is the
projection onto the span of $\Big(\bigcup_{i<c} \bigcup_j \{\tilde\a_{i,j}\}\Big) \cup
\{\tilde\a_{c,1},\ldots,\tilde\a_{c,l_c}\}\backslash\{\tilde\a_{c,j'}\}$.  We now lower bound the
squared projection error of that set:
\begin{align*}
  \Er_{\A}(S_{j'})
  &= \| \tilde\a_{c,j'}- \P_{S_{j'}} \tilde\a_{c,j'}  \|^2
    +  \sum_{i>c}\sum_{j=1}^{l_i}\| \tilde\a_{i,j} - \P_{S_{j'}}
    \tilde\a_{i,j} \|^2\\ 
  &=\bigg\|l_c\beta\v_c - \sum_{j<l_c}\tilde\a_{c,j}- \P_{S_{j'}}
    \Big(l_c \beta\v_c - \sum_{j< l_c}\tilde\a_{c,j} \Big)\bigg\|^2
    +
    \sum_{i>c}\sum_{j=1}^{l_i}\| \tilde\a_{i,j} \|^2
  \\ 
  &= l_c^2\beta^2 \|\v_c - \P_{S_{j'}}\v_c\|^2 + \phi_c \\
  & = l_c (\mathrm{OPT}_{k_c}- \phi_c) \|\v_c - \P_{S_{j'}}\v_c\|^2 + \phi_c \\
  & \geq l_c\mathrm{OPT}_{k_c} \|\v_c - \P_{S_{j'}}\v_c\|^2 - l_c \phi_c. 
\end{align*}	
Note that $\lim_{\beta\rightarrow 0}\P_{S_{j'}}\v_c = \zero$ because $\v_c$
is orthogonal to the subspace spanned by $S_{j'}$, so we can choose
$\beta_c$ small enough so that
$\|\v-\P_{S_{j'}}\v\|^2\geq 1-\frac\delta2$ for each $j'\in[l_c]$.
Furthermore, we have
\begin{align*}
  \phi_c = \sum_{i>c}\tr(\B_i) =
  \sum_{i>c}\alpha_i^2l_i+\beta_i^2l_i\leq 2\alpha_{c+1}^2\sum_{i>c}l_i,
\end{align*}
So, if we ensure that $\alpha_{c+1}^2\leq \frac{\delta}{4}
l_c\beta_c^2/(\sum_{i>c}l_i)$, then:
\begin{align*}
  l_c\phi_c \leq 2l_c\alpha_{c+1}^2\sum_{i>c}l_i\leq \frac\delta
  2\cdot l_c^2\beta^2\leq \frac\delta 2 l_c\cdot\textsc{OPT}_{k_c},
\end{align*}
which implies that $\Er_\A(S_{j'})\geq
(1-\delta)l_c\textsc{OPT}_{k_c}$. Note that all the conditions
we required on $\alpha_i$ and $\beta_i$ can be satisfied by a
sufficiently quickly decreasing sequence
$\alpha_1\gg\beta_1\gg\alpha_2\gg\beta_2\gg...\gg\alpha_t\gg\beta_t>0$,
which completes the proof.
\end{proof}

\section{Proof of Corollary~\ref{c:multiple-descent}}
\label{s:proofmultidescent}
\begin{repcorollary}{c:multiple-descent}
  For $t\in\N$ and $\delta\in(0,1)$, there is a
  sequence $k_1^l<k_1^u<k_2^l<k_2^u<...<k_t^l<k_t^u$ and
  $\A\in\R^{m\times n}$ such that for any $i\in[t]$:
  \begin{align*}
    \min_{S:|S|=k_i^l}\frac{\Er_\A(S)}{\textsc{OPT}_{k_i^l}}&\leq 1+\delta
                                 \quad\text{and }\\
    \min_{S:|S|=k_i^u}\frac{\Er_\A(S)}{\textsc{OPT}_{k_i^u}} &\geq
    (1-\delta)(k_i^u+1).
  \end{align*}
\end{repcorollary}
\begin{proof}
We will use Theorem~\ref{t:lower} to construct the matrix $\A$ using
the sequence we build below to make sure the upper and lower bounds
are satisfied. Theorem~\ref{t:lower} uses Lemma~\ref{l:lower} to
construct the matrix $\A$ which has a ``step" eigenvalue profile
i.e. there are multiple groups of eigenvalues and in each group the
eigenvalue is constant (each group corresponds to a regular simplex, see
Section \ref{s:lower}). Below we consider a single such group that
starts at $s = k^u_i$ and ends at $w=k^u_{i+1}$, and we let $k = k^l_{i+1}$,
for any $i \in \{0,\ldots,t-1\}$, with $k^u_0=0$.   

Theorem~\ref{t:upper} implies that there is a set $S$ with an upper bound on the
approximation factor $\Er_\A(S)/\text{OPT}_k$ of
$(1+2\epsilon)^2\big(1+ \frac{s}{k-s}\big)\big( 1+ \frac{k-s}{t_s-
    k } \big)$. Consider the following three conditions to ensure that each
of the three terms in the above approximation factor is less than $(1+\delta_1)$ where $\delta_1 =
\delta/7$: 

\begin{enumerate}
\item $\epsilon \leq \frac{ (1+\delta_1 )^{1/2} - 1}{2} \implies
  (1+2\epsilon)^2 \leq (1+\delta_1) $. Let $\tau_\epsilon =
  \frac{7}{\epsilon^4} \ln^2 \frac{1}{\epsilon}$, where $\epsilon$ is
  chosen so as to satisfy the above condition. 
\item  $k \geq \frac{s}{\delta_1} + s + \tau_\epsilon$ ensures that $(1+\frac{s}{k-s}) \leq (1+\delta_1)$ and that $k-s \geq \tau_\epsilon$.
\item $w \geq k(1+\frac{1}{\delta_1}) +1$.
  \end{enumerate}
  To see the usefulness of condition 3, note that each group of
  vectors in column set of  $\A$ constructed from Theorem \ref{t:lower} form a
  shifted regular simplex. A regular simplex has the smallest
  eigenvalue $0$ and the rest of the eigenvalues are all $(w-s)
  \alpha^2/(w-s-1)$, where $\alpha$ is the length of each of the
  $(w-s)$ vectors in the simplex. Thus, we can lower bound the stable
  rank of the shifted simplex as $\sr_s(\A) \geq
  \frac{(w-s)\alpha^2}{(w-s) \alpha^2} (w-s-1) = (w-s-1)$. From
  condition 3: 
  \[  w \geq k(1+\frac{1}{\delta_1}) +1  \implies s + \sr_s(\A) \geq k(1+\frac{1}{\delta_1}) \implies t_s \geq k(1+\frac{1}{\delta_1}) -\frac{s}{\delta_1} \implies 1+ \frac{k-s}{t_s-k}  \leq (1+\delta_1).\]	

Thus if all the above three conditions are satisfied, the approximation ratio can be upper bounded by $(1+\delta_1)^3 \leq (1+\delta)$, since $\delta_1 = \delta/7$.

Similarly for the lower bound, we will need condition 4 below.
\begin{enumerate}
  \setcounter{enumi}{3}

	\item $w \geq \frac{ 2s}{\delta} +\frac{2}{\delta}$.
\end{enumerate}
 Now, we apply Theorem~\ref{t:lower} using $k_i=w$ and $k_{i-1}=s$ to get the following lower bound with $\delta_2 = \delta/2$: 

\[\min_{S:|S|=w}\frac{\Er_\A(S)}{\textsc{OPT}_w}\geq (1-\delta_2) (w- s) \geq (w + 1) - \frac{\delta}{2} (w+ 1 + \frac{2s}{\delta} + \frac{2}{\delta}) \geq (1-\delta) (w+1),   \]

where the last inequality follows from condition 4. Also, observe that
we can replace conditions 3 and 4 with a single stronger condition: $w \geq k(1 + \frac{7}{\delta}) + 1+ \frac{2}{\delta}$.

We now iteratively construct the sequence that satisfies all of the
above conditions:
\begin{enumerate}
\item $k^u_0 = 0 $
\item For $1 \leq i \leq t$ 
  \begin{enumerate}
  \item $k^l_i =  \big\lceil\frac{7k^u_{i-1}}{\delta}  + k^u_{i-1} + \tau_\epsilon \big\rceil$.
  \item $k^u_i = \lceil k^l_i (1 + 7/\delta) + \frac{2}{\delta} + 1\rceil$. 
  \end{enumerate}
\end{enumerate}                
We can now use Theorem~\ref{t:lower} with subsequence
$\{k^u_i\}$ which also constructs the matrix $\A$ through
Lemma~\ref{l:lower}, to ensure that the lower bound of
$(1+\delta)(k^u_i+1)$ is satisfied for $\A$ for all $i$. We
can also use Theorem~\ref{t:upper} for the same matrix $\A$
and $k=k^l_i$ for any $i$ to ensure that the upper bound of
$(1+\delta)$ is also satisfied for any $i$. 
 \end{proof}

 \section{Empirical evaluation with greedy subset selection}
 \label{a:greedy}
 In this section, we provide a more detailed empirical evaluation to
 complement what we presented in Section~\ref{s:experiments}. Our aim here is to demonstrate
that our improved analysis of the CSSP/Nystr\"om approximation factor
can be useful in understanding the performance of not only the k-DPP
method, but also of greedy subset selection. Note that our theory
does not strictly apply to the greedy algorithm. Nevertheless, we show
that, similar to the k-DPP method, greedy selection also exhibits the
improved guarantees and the multiple-descent curve predicted by our
analysis.  

The most standard version of the greedy algorithm
\citep[see, e.g.,][]{Bhaskara2016GreedyCSS} 
starts with an empty set and then
iteratively adds columns that minimize the 
approximation error at every step, until we reach a set of size
$k$. 
The pseudo-code is given below.
\renewcommand{\thealgorithm}{}
\floatname{algorithm}{}
\begin{algorithm}[H]
  \caption{Greedy subset selection algorithm for CSSP/Nystr\"om}
  \begin{algorithmic}[0]
    \STATE \textbf{Input:} $k\in[n]$\quad and \quad an $m\times n$
    matrix $\A$ (CSSP), \quad or\quad an $n\times 
    n$ p.s.d.~matrix $\K=\A^\top\A$ (Nystr\"om)\vspace{2mm}
    \STATE $S\leftarrow \emptyset$
    \STATE \textbf{for} $i$ $=$ $1$ \textbf{to} $k$ \textbf{do}
    \STATE \quad Pick $i\in[n]\backslash S$ that minimizes $\Er_\A(S\cup\{i\})$,\quad or
    equivalently, $\|\K-\Kbh(S\cup\{i\})\|_*$
    \STATE \quad$S\leftarrow S\cup\{i\}$
    \STATE \textbf{end for}\vspace{2mm}
    \RETURN $S$
  \end{algorithmic}
\end{algorithm}
In our empirical evaluation we use the same experimental setup as in
Section \ref{s:experiments}, by running greedy on a toy dataset with
the linear kernel $\langle\a_i,\a_j\rangle_\text{K}=\a_i^\top\a_j$
that has one sharp spectrum drop (controlled by the condition number $\kappa$),
and two Libsvm datasets with the RBF kernel
$\langle\a_i,\a_j\rangle_\text{K}=\exp(-\|\a_i\!-\!\a_j\|^2/\sigma^2)$
for three values of the RBF parameter $\sigma$. The main question
motivating these experiments is: does the approximation factor of the
greedy algorithm exhibit the multiple-descent curve that is predicted in
our analysis, and are the peaks in this curve aligned with the sharp drops
in the spectrum of the data?

 \begin{figure*}[t]
  \centering
  \ifisarxiv
    \includegraphics[width=0.345\textwidth]{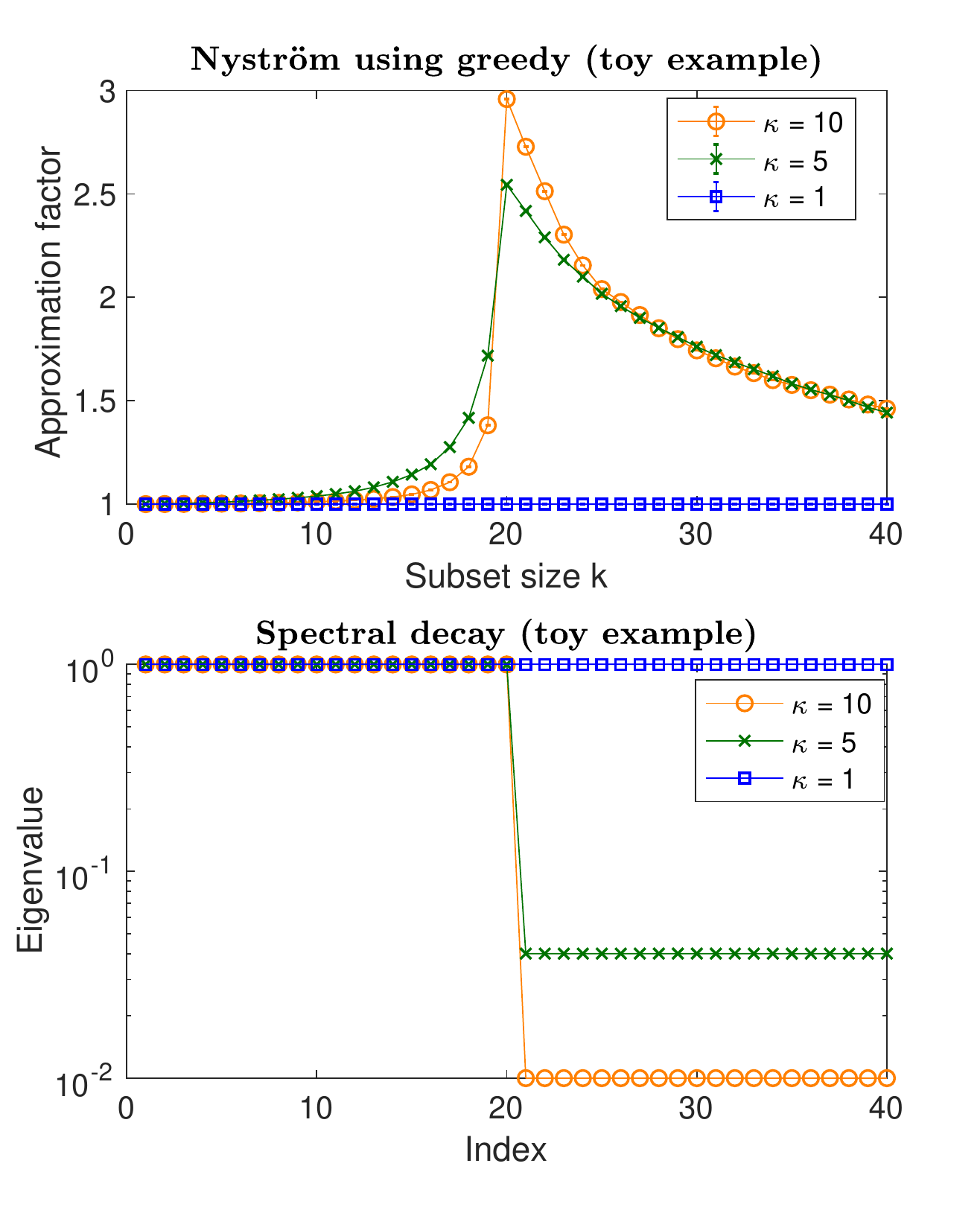}
   \hspace{-6mm}
  \includegraphics[width=0.345\textwidth]{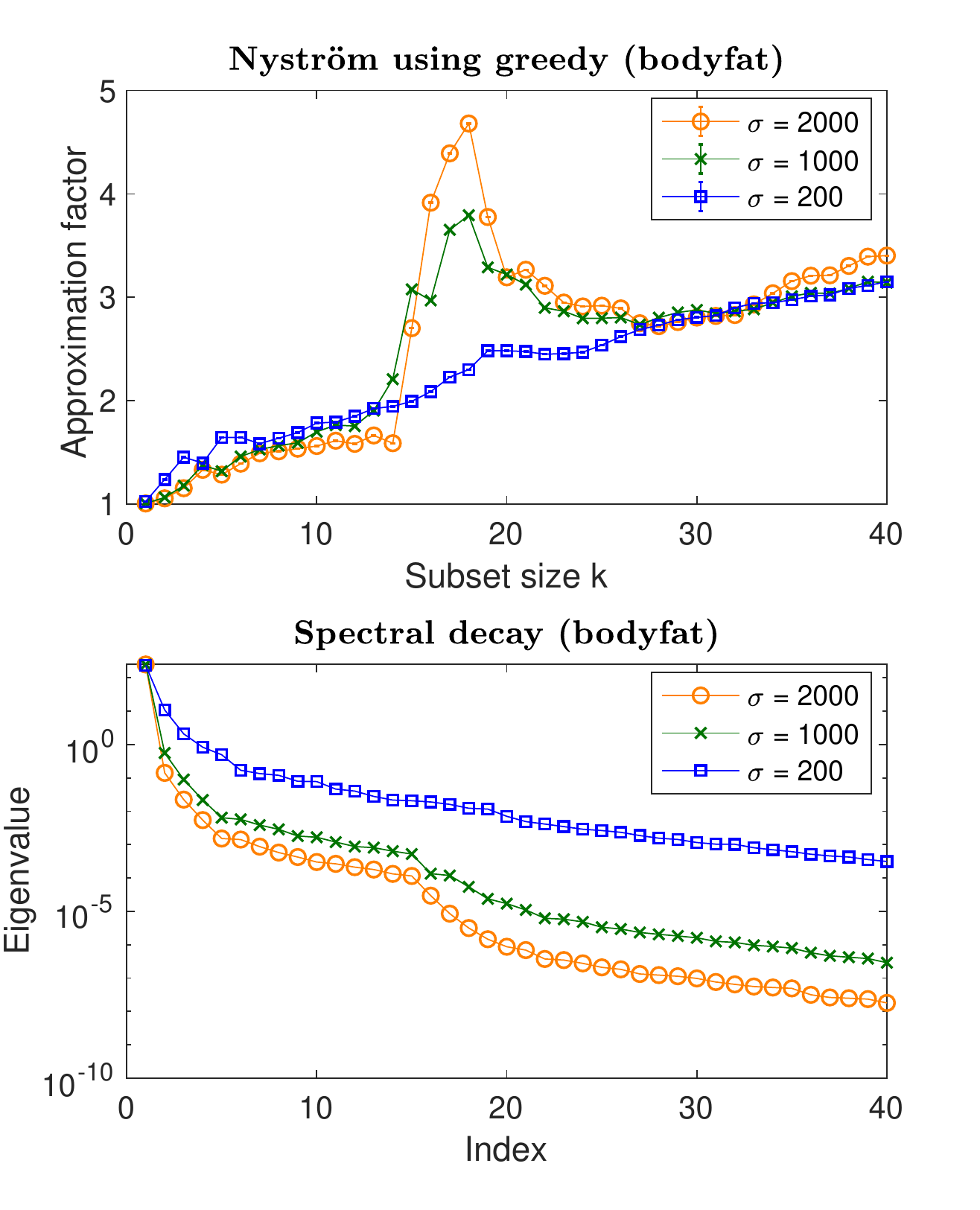}
  \hspace{-6mm}
  \includegraphics[width=0.345\textwidth]{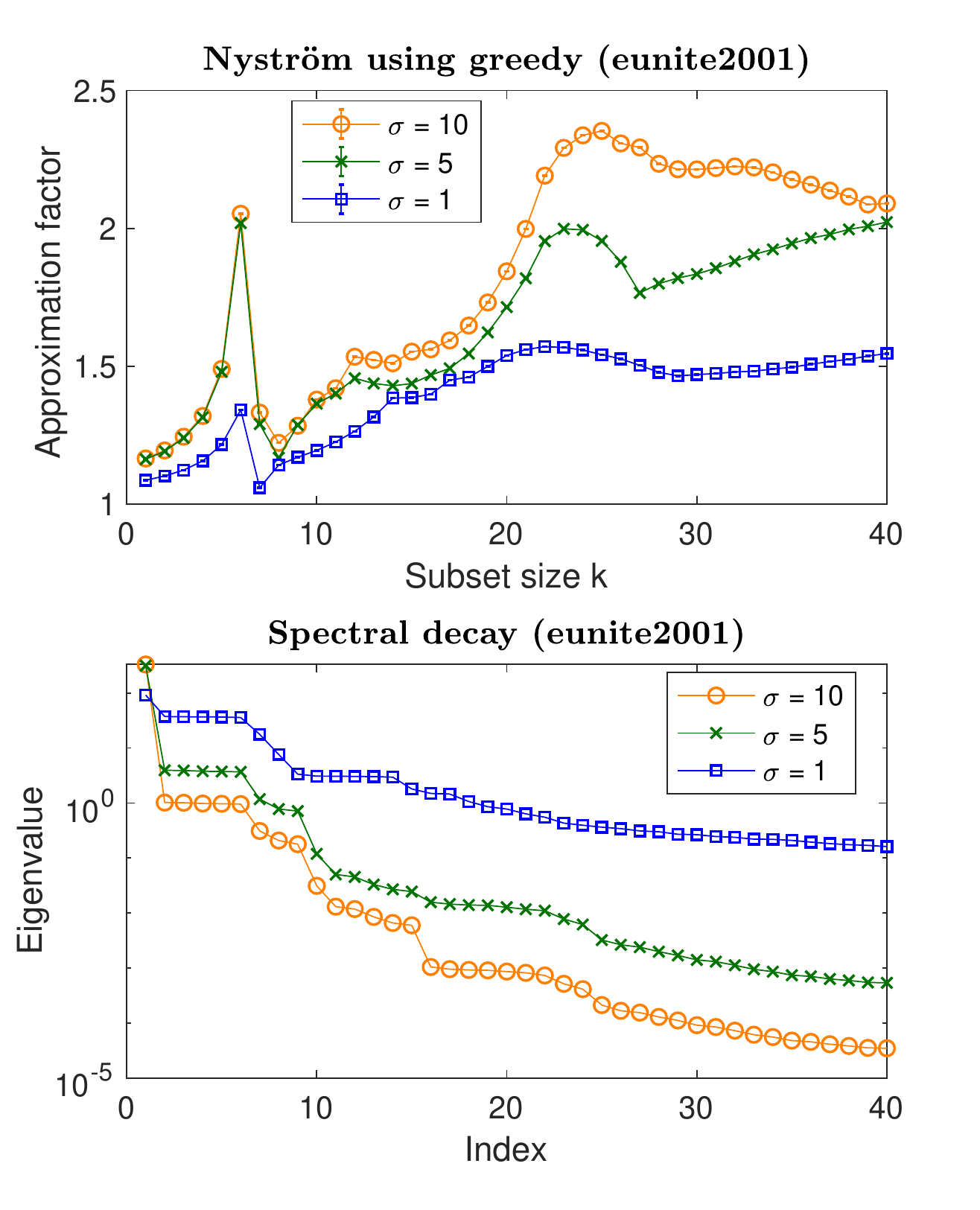}
  \else
  ~\hspace{-5mm}
  \includegraphics[width=0.35\textwidth]{figs/nystrom/rbf-toy-greedy}\hspace{-5mm}\nolinebreak\includegraphics[width=0.35\textwidth]{figs/nystrom/rbf-bodyfat-greedy}\hspace{-5mm}\nolinebreak\includegraphics[width=0.35\textwidth]{figs/nystrom/rbf-eunite2001-greedy}
  \fi
  \caption{Top plots show the Nystr\"om approximation factor
    $\|\K-\Kbh(S)\|_*/\opt$,
    where $S$ is constructed using greedy subset selection,
    against the subset size $k$, for a toy dataset
    ($\kappa$ is the condition number) and two
    Libsvm datasets ($\sigma$ is the RBF parameter). Bottom plots show
    the spectral decay for  the top $40$ eigenvalues  of each kernel
    $\K$,  demonstrating how the peaks in the Nystr\"om approximation
    factor align with the drops in the spectrum.}
\vspace{-2mm}
  \label{f:greedy}
\end{figure*}

The plots in Figure \ref{f:greedy} confirm that the
Nystr\"om approximation factor of greedy subset selection exhibits
similar peaks and valleys as those indicated by our theoretical and
empirical analysis of the k-DPP method. This is most clearly observed
for the toy dataset (Figure \ref{f:greedy} left), where the peak grows with the condition number
$\kappa$, and for the \emph{bodyfat} dataset (Figure \ref{f:greedy} center), where the size of the peak is
proportional to the RBF parameter $\sigma$. Moreover, we observe that
when the spectral decay is slow/smooth, which corresponds to smaller values
of $\sigma$, then the approximation factor of the greedy algorithm
stays relatively close to 1. For the \emph{eunite2001}
dataset (Figure \ref{f:greedy} right), the behavior of the approximation factor is very
non-linear, with several peaks occurring for large values of
$\sigma$. Interestingly, while the peaks do align with some of the
drops in the spectrum, not all of the spectrum drops result in a peak for the
greedy algorithm. This goes in line with our analysis, in the sense
that a sharp drop in the spectrum following the $k$th eigenvalue is a
\emph{necessary but not sufficient} condition for the approximation
factor of the optimal subset $S$ of size $k$ to exhibit a peak.

Our empirical evaluation leads to an overall conclusion that the
multiple-descent curve of the CSSP/Nystr\"om approximation factor is a
phenomenon exhibited by both \emph{randomized}
methods, such as the k-DPP, and \emph{deterministic}
algorithms, such as greedy subset selection. While the exact behavior
of this curve is algorithm-dependent, significant insight can be
gained about it by studying the spectral properties of the data. Our
results suggest that performing a theoretical analysis of
the multiple-descent phenomenon for greedy methods is a promising direction for future work.
 
  \end{document}